\documentclass[11pt]{article}

\usepackage[preprint]{arxiv}

\usepackage[utf8]{inputenc} 
\usepackage[T1]{fontenc}    
\usepackage{hyperref}       
\usepackage{url}            
\usepackage{booktabs}       
\usepackage{amsfonts}       
\usepackage{nicefrac}       
\usepackage{microtype}      
\usepackage{xcolor}         

\title{CLT and Edgeworth Expansion for m-out-of-n Bootstrap Estimators of The Studentized Median}

\usepackage{times}
\usepackage[american]{babel}
\usepackage{amsmath}
\usepackage{amsthm}
\usepackage{natbib} 

\usepackage{mathtools} 
\usepackage{booktabs} 
\usepackage{amssymb}
\usepackage{nicefrac}
\usepackage{tikz} 

\usepackage{times}
\usepackage{tikz}
\usepackage{calligra}

\usepackage{graphicx}


\numberwithin{equation}{section} 
\newcommand{\pow}[1]{^{(#1)}}
\newcommand{\lp}{\left(}
\newcommand{\rp}{\right)}
\newcommand{\lc}{\left\{}
\newcommand{\rc}{\right\}}
\newcommand{\lb}{\left[}
\newcommand{\rb}{\right]}
\newcommand{\lv}{\left|}
\newcommand{\rv}{\right|}

\newcommand{\ldot}{\left.}
\newcommand{\rdot}{\right.}

\newcommand{\Var}{\mathrm{Var}}

\newcommand{\Cbb}{\mathbb{C}}

\newcommand{\Hbb}{\mathbb{H}}

\newcommand{\Pbb}{\mathbb{P}}
\newcommand{\Qbb}{\mathbb{Q}}
\newcommand{\Rbb}{\mathbb{R}}
\newcommand{\Sbb}{\mathbb{S}}

\newcommand{\Zbb}{\mathbb{Z}}

\newcommand{\Acal}{\mathcal{A}}
\newcommand{\Bcal}{\mathcal{B}}

\newcommand{\Fcal}{\mathcal{F}}

\newcommand{\Ncal}{\mathcal{N}}
\newcommand{\Ocal}{\mathcal{O}}

\newcommand{\Scal}{\mathcal{S}}
\newcommand{\Rcal}{\mathcal{R}}


\DeclareMathAlphabet{\mathdutchcal}{U}{dutchcal}{m}{n}

\newcommand{\ocal}{\mathdutchcal{o}}

\newcommand{\constant}{\Cbb}

\newcommand\numberthis{\addtocounter{equation}{1}\tag{\theequation}}

\newtheorem{assumption}
{Assumption}
\newcommand{\prob}{\Pbb}
\newcommand{\expec}{\mathbb{E}}

\newcommand{\indicator}{\mathbf{1}}

\newcommand{\del}{\partial}

\newcommand{\beq}{\begin{eqnarray*}}
\newcommand{\eeq}{\end{eqnarray*}}
\newcommand{\beqn}{\begin{eqnarray}}
\newcommand{\eeqn}{\end{eqnarray}}
\newcommand{\ben}{\begin{enumerate}}
\newcommand{\een}{\end{enumerate}}
\newcommand{\bit}{\begin{itemize}}
\newcommand{\eit}{\end{itemize}}

\newcommand{\argmin}{\mathop{\mathrm{argmin}}}
\newcommand{\argmax}{\mathop{\mathrm{argmax}}}

\newcommand{\eps}{\varepsilon}
\newcommand{\vertiii}[1]{{\left\vert\kern-0.25ex\left\vert\kern-0.25ex\left\vert #1 
    \right\vert\kern-0.25ex\right\vert\kern-0.25ex\right\vert}}

\renewcommand{\epsilon}{\eps}

\newtheorem{lemma}{Lemma}
\newtheorem*{remark}{Remark}
\newtheorem{proposition}[lemma]{Proposition}
\newtheorem{theorem}{Theorem}
\newtheorem{corollary}{Corollary}
\newtheorem{example}{Example}

\newtheorem{definition}{Definition}
\renewcommand{\constant}{c(\alpha)}

\usepackage{cleveref}

\author{
  Imon Banerjee\\
  Department of Industrial Engineering\\
  and Management Science\\
  Northwestern University\\
  \And
  Sayak Chakrabarty \\
  Department of Computer Science \\
  Northwestern University \\
}

\begin{document}

\maketitle

\begin{abstract}
The m-out-of-n bootstrap—proposed by \cite{bickel1992resampling}—approximates the distribution of a statistic by repeatedly drawing $m$ subsamples ($m \ll n$) without replacement from an original sample of size n; it is now routinely used for robust inference with heavy-tailed data, bandwidth selection, and other large-sample applications. Despite this broad applicability across econometrics, biostatistics, and machine-learning workflows, rigorous parameter-free guarantees for the soundness of the m-out-of-n bootstrap when estimating sample quantiles have remained elusive.

This paper establishes such guarantees by analysing the estimator of sample quantiles obtained from m-out-of-n resampling of a dataset of length n. We first prove a central limit theorem for a fully data-driven version of the estimator that holds under a mild moment condition and involves no unknown nuisance parameters. We then show that the moment assumption is essentially tight by constructing a counter-example in which the CLT fails. Strengthening the assumptions slightly, we derive an Edgeworth expansion that delivers exact convergence rates and, as a corollary, a Berry–Esséen bound on the bootstrap approximation error. Finally, we illustrate the scope of our results by obtaining parameter-free asymptotic distributions for practical statistics, including the quantiles for random walk MH, and rewards of ergodic MDP's, thereby demonstrating the usefulness of our theory in modern estimation and learning tasks.

\end{abstract}

\section{Introduction}\label{introduction}

The bootstrap—first proposed by \citet{efron1979bootstrap}—quickly became a cornerstone of non-parametric inference.  
Yet its \emph{orthodox} form, which repeatedly resamples the full data set of size $n$, suffers in two key respects:  
(i)~resampling the entire dataset lacks computational appeal when $n$ is large, and more importantly  
(ii)~it is inconsistent for several common statistics under heavy-tailed or irregular models (see Example \ref{example:boot-fail}).  To alleviate these drawbacks, \citet{bickel1992resampling} advocated drawing only $m<n$ observations at each resampling step—the so-called \emph{m-out-of-n bootstrap}.

To set the stage, we introduce some notation. Let \(X_1,X_2,\dots,X_n\) be observations from either an univariate distribution $F$ or a real-valued Markov chain with stationary distribution \(F\). Assume that $F$ has a unique {$p$-th} population {quantile} \(\mu\) and a density \(f\) that is positive and continuous in a neighborhood of \(\mu\).  Let \(\bar F(t)=n^{-1}\sum_{i=1}^n \mathbf{1}\{X_i\le t\}\) be the empirical distribution of \(\{X_i\}\).  For \(m\le n\), draw an i.i.d.\ sample \(X_1^*,\dots,X_m^*\) of size \(m\) from \(\bar F\).  Let \(F_n^{(m)}(t)=m^{-1}\sum_{i=1}^m \mathbf{1}\{X_i^*\le t\}\) be the empirical distribution function of these resampled points.  The m-out-of-n bootstrap estimator of the {$p$-th quantile} is then defined by \(\hat\mu_m^{(boot)}=\inf\{t:F_n^{(m)}(t)\ge p\}\).

We denote by \(\hat\mu_n\) the usual sample estimator of \(\mu\) from the original data, and we write \(\hat\mu_n^{(boot)}\) for the m-out-of-n bootstrap estimator of \(\hat\mu_n\).  Let \(\hat\sigma_n\) be an estimator of \(\mathrm{Var}\bigl(\sqrt{n}(\hat\mu_n-\mu)\bigr)\).  The Studentized estimator of $\mu$ is then \(T_n=\sqrt{n}(\hat\mu_n-\mu)/\hat\sigma_n\).

In order to Studentize the m-out-of-n estimator, we analogously use 
the conditional variance of \(\sqrt{m}(\hat\mu_m^{(boot)} - \hat\mu_n)\),
given the original sample \(\{X_i\}\), denoted by \(\hat\sigma_m^{(boot)}\)\footnote{$\hat\sigma_m^{(boot)}$ admits a closed form representation; see \cref{eq:sigma-def}.}. Consequently, we define
\[
  T_m^{(boot)}(\mu)  :=  \frac{\sqrt{m}(\hat\mu_m^{(boot)} - \hat\mu_n)}{\hat\sigma_m^{(boot)}}.
\]
This is the \textbf{Studentized} m-out-of-n bootstrap estimator of the sample quantile. As we remark below, this step is not optional in practice. However, Studentization introduces additional technical challenges. For
instance, we show that, without further constraints on \(F\), \(\sigma_m^{(boot)}\)
can tend to infinity even if \(\hat\mu_m^{(boot)}\) itself is a consistent estimator
of \(\mu\).

At this point, we make note that we are looking at non-parametric or ``model free" bootstrap, and the standard results on maximum likelihood does not apply to our case. Instead, the approach we take relies on empirical process theory \citep{van_der_vaart_weak_2023}. Therefore, they are generalisable to a wide class of distribution functions.

Empirical studies confirmed the practical values of the m-out-of-n bootstrap, but also revealed new pathologies: \citet{andrews2010asymptotic} showed size distortions for subsampling-based tests, and \citet{simar2011inference} demonstrated that the naive (full-sample) bootstrap is inconsistent for DEA estimators whereas the m-out-of-n variant is not. Despite these successes, the m-out-of-n bootstrap’s fundamental properties remain only partially understood.  

Initially, our work was motivated by the need for bootstrap based distribution free tests for practical tasks on Markov chains like Markov Chain Monte Carlo (MCMC) and Reinforcement Learning (RL). However, we quickly realised that this problem was not satisfactorily studied even when the data was i.i.d. We present two out of numerous glaring examples: one significant discrepancy seems to be in \citet{cheung_variance_2005}, which gives an incorrect variance formula for the m-out-of-n estimator of sample quantiles without derivations or citations.
Another work  \citet{gribkova_edgeworth_2007} derive an Edgeworth expansion for Studentized trimmed means in the non‑classical $m\gg n$ regime, offering theoretical insights but leaving unanswered the setting when $m\ll m$, which motivates subsampling. 

Our work addresses these practically relevant open questions that were left unanswered in previous studies by supplying corrected rates and Edgeworth expansions tailored to the practically relevant $m\ll m$ regime. More details are given in Section \ref{sec:problem}.

 Furthermore, we also show that the m-out-of-n bootstrap does not seem to suffer from the usual pitfalls of orthodox bootstrap pointed out in a seminal PTRF paper by \cite{hall_exact_1988,hall_error_1991} (see section \ref{sec:edgeworth} for full details), thus acquitting bootstrap in practical utility. We now briefly mention our contributions.

\begin{itemize}
  \item \textbf{Consistent Studentization under minimal moments.}  
        We establish that the plug-in variance estimator $\hat\sigma_m^{{(boot)}}$ is \emph{consistent} whenever $E|X_1|^\alpha<\infty$ for some $\alpha>0$ for both i.i.d (Theorem \ref{thm:main}) and Markovian (Theorem \ref{thm:markovCLT}) data.  Proposition \ref{prop:unbounded-var} shows the moment condition is essentially sharp: heavy-tailed distributions preclude a central-limit theorem for the Studentized median. We then demonstrate the applicability of our results by deriving parameter free asymptotic distributions for common types of problems such as the median of random walk Metropolis Hastings, or the median reward in offline MDP's.
  \item \textbf{Exact Edgeworth expansion at classical scaling.}  
        Exploiting a binomial representation, we obtain the first correct $\Ocal(m^{-1/2})$ Edgeworth expansion for the Studentized m-out-of-n quantile (Theorem \ref{thm:edgeworth}). In constrast to orthodox bootstrap \citep{hall_error_1991}, the expansion is symmetric; reflecting the intrinsic ``smoothing'' induced by subsampling—and immediately yields a Berry–Esseen bound (Corollary \ref{corollary:Berry-Esseen}).
  \item \textbf{Clarification of prior discrepancies.}  
        Lemmas \ref{lemma:quantile-var}–\ref{lemma:varianceCLT} supply a corrected variance formula, by correcting the algebraic sources of error in \citet{cheung_variance_2005}. 
\end{itemize}


\textit{The rest of the paper is organized as follows: } Section 2 outlines a comprehensive discussion of relevant research works. In section
3, we formally introduce the model and the relevant notations. In section 4, we provide our key theoretical results and the proof sketches for these, while the full proofs have been deferred till the appendix due to lack of space. Section 5 contains various applications for our results, and finally, in Section 6, we conclude by mentioning the limitations and broader impact of our work.

\section{Background and Related Research}\label{sec:background}
The bootstrap, introduced by Efron \citep{efron1979bootstrap,efron1982jackknife}, has become
a cornerstone of modern statistical learning \citep{nakkiran_deep_2021,modayil_towards_2021,han_bootstrap_2016}. Its intuitive appeal and computational simplicity
have led to widespread use across diverse fields such as finance, economics, and biostatistics.
In many practical settings, especially those involving complex models or nonstandard error
bootstrap often creates better confidence intervals \citep{hall_bootstrap_1992}. For example, in principal component analysis, bootstrap techniques have been employed for sparse PCA and consistency analysis \citep{rahoma2021sparse,babamoradi2013bootstrap,datta2024consistency}. Time-series applications appear in \citep{ruiz2002bootstrapping}, whereas econometric and financial analyses are discussed in \citep{vinod199323,gonccalves2023bootstrapping}. More recently, bootstrap-based methods have been explored in reinforcement learning \citep{ramprasad2023online,zhang2022new,faradonbeh2019applications,banerjee2023probably,zhang2023sockdef}, Markov and controlled Markov chains \citep{borkar_topics_1991,banerjee_off-line_2025}, Gaussian and non-Gaussian POMDP's \citep{krishnamurthy_partially_2016,banerjee_goggins_2025} and in clustering/classification \citep{jain1987bootstrap,moreau1987bootstrap,makarychev2024single}.

A significant body of work has focused on traditional bootstrap methods for median and quantile estimation. For instance, \cite{ghosh1984asymptotic} studied the convergence of bootstrap variance estimators for sample medians, while \cite{bickel1992resampling} employed Edgeworth expansions to obtain optimal rates of convergence. Incorporating density estimates for studentized quantiles was examined in \cite{hall1988exact,hall1988distribution}. We refer the reader to various textbooks like \cite{hall2013bootstrap,dasgupta_asymptotic_2008} for theoretical treatments, and \cite{singh2008bootstrap,davison1997bootstrap,freedman_statistics_2007} for practical insights. Of particular interest is \cite{liu1988bootstrap}, which can be thought of a variant of our work for the sample mean rather than the median.
Iterative self-alignment with implicit rewards from Direct Preference Optimization (DPO) uses ``bootstrapped" preference data to refine model behavior without additional human labels \citep{yu2023metamath,wang2024bootstrap,chakrabarty2025readmeready,zhang2024efficient,chakrabarty2024mm,geigle2023mblip,banerjee2021pac}. Within symbolic AI, bootstrapping appears in inductive logic programming (ILP) and neurosymbolic systems as seen in \citep{natarajan2010learning,bolonkin2024judicial,polikar2007bootstrap,simmons2015semidefinite}. This brings us to the following open question.

\noindent\textbf{Open question.} 
\noindent The paper resolves two open questions in the theory of the \emph{Studentized m-out-of-n bootstrap}. \textbf{(i)} Can we provide a \emph{single, parameter-free second-order theory}—variance consistency, a central-limit theorem, an exact $O(m^{-1/2})$ Edgeworth expansion, and a matching Berry–Esseen bound—for bootstrap quantiles under nothing stronger than a finite-moment assumption? \textbf{(ii)} Can we extend the analysis to regenerative Markov chains and furnish the inaugural bootstrap-based confidence guarantees for sequential data such as MCMC or RL trajectories? 

\section{Problem Statement}\label{sec:problem}

In order to formalise our results, we introduce some notation. Unconditional probability, expectation, and variances are denoted by $\prob,\expec,\Var$ and their conditional counterparts given the sample are $\prob_n,\expec_n,\Var_n$. 
\(\sigma^2\) is the asymptotic variance of the estimator of $\hat \mu_n$. \(\hat{\sigma}_n^2\), and $(\hat \sigma_n\pow m)^2$ are its empirical and m-out-of-n bootstrap counterpart. 

The \(i\)-th order statistic is denoted by \(X_{(i)}\). We use \(a_n, c(\alpha)\) for scaling constants, and \(\xrightarrow{d}, \xrightarrow{p}, \xrightarrow{a.s.}\) for convergence in distribution, probability, and almost surely, respectively. Bachman–Landau notations \citep{cormen_introduction_2022} for order terms are denoted by \(\Ocal, \ocal, \Ocal_p,\ocal_p\). Finally, \(\Phi\) and \(\phi\) are the standard normal CDF and PDF, and \(\Hbb_2(x)=x^2-1\) is the $2$-nd Hermite polynomial. $\Cbb$ denotes an universal constant whose meaning changes from line to line. $\Sbb_L(\mu):=\{f(x):\del^{L-1}f(x)/\del x^{L-1} \text{exists and is Lipschitz in a neighborhood of $\mu$}\}$ denotes the set of $L$-Sobolev smooth functions around $\mu$. For any set $A$, $\Sbb_L(A):=\bigcap_{x\in A^\odot} \Sbb_L(x)$ is the set of all functions which are $L$-Sobolev at all interior points of $A$ (denoted by $A^\odot$).

\paragraph{Data-generating process.}
As before, let $X_1,\dots,X_n \overset{i.i.d}{\sim} F$ be real-valued with unique $p$-th population quantile $\mu$ and density $f$ positive and continuous near $\mu$.  Denote the empirical cdf by $\bar F(t)=n^{-1}\sum_{i=1}^n\mathbf 1\{X_i\le t\}$ and the full-sample quantile estimator by
\[
  \hat\mu_n  =  \inf\{t:\bar F(t)\ge p\}.
\]
Choose $m\le n$ and resample $X_1^*,\dots,X_m^*\overset{i.i.d}{\sim}\bar F$.  The resample cdf is
$F_n^{(m)}(t)=m^{-1}\sum_{i=1}^m\mathbf 1\{X_i^*\le t\}$, and the bootstrap quantile is
\[
  \hat\mu_m^{{(boot)}}  = \inf\{t:F_n^{(m)}(t)\ge p\}.
\]

\paragraph{Studentization.}
Write $\hat\sigma_n^{2}=\Var\bigl(\sqrt{n}(\hat\mu_n-\mu)\bigr)$ and let $\hat\sigma_m^{{(boot)}}$ be the conditional variance of $\sqrt{m}(\hat\mu_m^{{(boot)}}-\hat\mu_n)$ given $\{X_i\}$.  Define the Studentized statistics
\[
  T_n  =  \frac{\sqrt{n}(\hat\mu_n-\mu)}{\hat\sigma_n},
  \qquad
  T_m^{{(boot)}}(\mu)  =  \frac{\sqrt{m}\bigl(\hat\mu_m^{{(boot)}}-\hat\mu_n\bigr)}{\hat\sigma_m^{{(boot)}}}.
\]
\begin{remark}
    Studentization is not optional in practice. Typically the un-Studentized estimator converges to a distribution dependent upon unknown parameters. Studentization is the only viable way to derive parameter free tests of sample statistics. Yet, literature regarding its theoretical guarantees is sparse.
\end{remark}

\subsection{Goals}\label{sec:goals}
Our aim is to supply a \emph{complete second–order theory} for Studentized
m-out-of-n bootstrap quantiles under the weakest credible assumptions
(finite $\alpha$-th moment, $\alpha>0$).  Four concrete targets guide the
paper:

\begin{enumerate}
  \item \textbf{Variance consistency and a CLT.}  
        Demonstrate that the resampled variance converges to the population
        counterpart \emph{and} the Studentized statistic tends to
        $\mathcal N(0,1)$ whenever $m=\ocal(n)$.
  \item \textbf{Quantitative error control.}  
        Obtain a one–term, correctly centred \emph{Edgeworth expansion} whose
        remainder is $O_p(m^{-1/2})$, thereby delivering a uniform
        Berry–Esseen bound of the same order.
  \item \textbf{Removal of legacy inconsistencies.}  
        Correct the variance formula of \citet{cheung_variance_2005}. Furthermore, unlike some previous literature \citep{gribkova_edgeworth_2007}, our new proofs work in the practically relevant regime
        $m\ll n$ (and even $m=\ocal(n)$).
  \item \textbf{Beyond i.i.d.\ data.}  
        Extend variance consistency and the CLT to \emph{regenerative Markov
        chains}, giving what appears to be the first bootstrap‐based
        confidence guarantees for dependent sequences such as MCMC or
        reinforcement‐learning trajectories.
\end{enumerate}

\section{Theoretical Results} \label{sec:conditions_success}

Recall from the Section \ref{sec:goals} that our first objective is to provide an asymptotic consistency theorem for the variance of the m-out-of-n bootstrap estimator of the sample quantile. We prove this assuming only the barebones---that $F\in\Sbb_1(\mu)$, and that $E|X_1|^\alpha < \infty$ for some $\alpha>0$. Later, we discuss the necessity of the assumption $E|X_1|^\alpha < \infty$. Since the techniques in this proof will be reused multiple times, we also provide a detailed proof sketch.

\begin{theorem}\label{thm:main}
Let $\mu$ be the unique $p$-th quantile and $F\in\Sbb_1(\mu)$. Furthermore, let $E|X_1|^\alpha < \infty$ for some $\alpha > 0$. Then for any $m=\ocal(n)$, and $m,n\rightarrow\infty$.
\[
\lp\hat{\sigma}_m\pow{boot}\rp^{2}  \xrightarrow{\ a.s.\ } \frac{p(1-p)}{f^2(\mu)}.
\]
Furthermore,
\begin{align*}
    T_m\pow{boot}(\mu) = \frac{\sqrt{m}(\hat\mu_m^{(boot)} - \hat\mu_n)}{\hat\sigma_m^{(boot)}}\xrightarrow{d} \Ncal(0,1).
\end{align*}
\end{theorem}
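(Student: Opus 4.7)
The plan is to prove the variance limit first and then derive the CLT by Slutsky's theorem. Both halves hinge on a combinatorial rewriting of $\hat\mu_m^{(boot)}$: conditional on the sample, $\hat\mu_m^{(boot)} = X_{(K)}$, where
\[
K := \min\{i : B_i \geq \lceil mq\rceil\},\qquad B_i := \#\{j : X_j^* \leq X_{(i)}\} \sim \mathrm{Bin}(m, i/n).
\]
Writing $p_i := \prob_n(K = i) = \prob(\mathrm{Bin}(m, i/n) \geq \lceil mq\rceil) - \prob(\mathrm{Bin}(m, (i-1)/n) \geq \lceil mq\rceil)$, the conditional variance takes the closed form
\[
\bigl(\hat\sigma_m^{(boot)}\bigr)^2 = m\sum_{i=1}^n X_{(i)}^2 p_i - m\Big(\sum_{i=1}^n X_{(i)} p_i\Big)^2.
\]

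With this representation in hand, I would localize the mass $\{p_i\}$ onto the window $W_m := \{i : |i - nq| \leq n\sqrt{(\log m)/m}\}$. Chernoff bounds on $B_i$ give $\sum_{i \notin W_m} p_i = \ocal(m^{-c})$ for some $c > 1$, while on $W_m$ the binomial CLT yields $\sqrt m(K/n - q) \convd \Ncal(0, q(1-q))$. The third ingredient is a local Bahadur-type linearization of the order-statistic map,
\[
X_{(i)} = \mu + \frac{i/n - q}{f(\mu)} + \ocal_p(m^{-1/2}), \qquad i \in W_m,
\]
using $F \in \Sbb_1(\mu)$, $f(\mu) > 0$, and $\|\bar F - F\|_\infty = \Ocal_p(n^{-1/2}) = \ocal_p(m^{-1/2})$ (the last step uses $m = \ocal(n)$). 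Plugging the linearization into the closed form, the $\mu$-dependent pieces cancel in the variance and the first-order parts reassemble into
\[
\bigl(\hat\sigma_m^{(boot)}\bigr)^2 = \frac{m\,\mathrm{Var}_n(K)}{n^2 f^2(\mu)} + \ocal(1) \xrightarrow{a.s.} \frac{q(1-q)}{f^2(\mu)}.
\]
The moment hypothesis $E|X_1|^\alpha < \infty$ is used only to tame the tail contribution $\sum_{i \notin W_m} X_{(i)}^2 p_i$: Borel--Cantelli gives $X_{(n)} = \ocal(n^{1/\alpha})$ a.s., which is easily dominated by the polynomial decay of the $p_i$ outside $W_m$.

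For the CLT, the same linearization yields $\sqrt m(\hat\mu_m^{(boot)} - \hat\mu_n) = f(\mu)^{-1}\sqrt m(K/n - q) + \ocal_p(1)$, which inherits the $\Ncal(0, q(1-q)/f^2(\mu))$ limit (the subtraction of $\hat\mu_n - \mu = \Ocal_p(n^{-1/2}) = \ocal_p(m^{-1/2})$ is swallowed by the remainder). Combined with the variance limit via Slutsky's theorem, $T_m^{(boot)}(\mu) \convd \Ncal(0,1)$. The step I expect to be hardest is the local Bahadur linearization, because the window $W_m$ is \emph{wider} in the index coordinate than the $\Ocal(\sqrt n)$ scale on which the classical Bahadur representation is typically stated; this calls for uniform control of the empirical process $\sqrt n(\bar F - F)$ over an $\Ocal(\sqrt{(\log m)/m})$ neighbourhood of $\mu$, combined with the continuity of $f$ near $\mu$ guaranteed by $F \in \Sbb_1(\mu)$.
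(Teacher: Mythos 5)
Your proposal takes a genuinely different route from the paper. The paper does \emph{not} compute the conditional variance directly through the weight representation in the proof of Theorem~\ref{thm:main}; instead it shows that the conditional family $\{m(\hat\mu_m^{(boot)} - \hat\mu_n)^2\}$ is uniformly integrable (by bounding $\expec_n|\sqrt m(\hat\mu_m^{(boot)}-\hat\mu_n)|^{2+\delta}$ via a fourth-moment Markov bound on the binomial for medium $t$ and a Hoeffding-type bound for large $t$), proves the distributional limit $\sqrt m(\hat\mu_m^{(boot)}-\hat\mu_n)\xrightarrow{d}\Ncal(0,q(1-q)/f^2(\mu))$ separately in Lemma~\ref{lemma:sakovbickel} via the binomial representation and a normal approximation, and then combines the two to obtain second-moment convergence before applying Slutsky. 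The explicit-weights-plus-localization strategy you propose is essentially the route the paper reserves for the \emph{rate} result (Lemma~\ref{lemma:quantile-var} and Lemma~\ref{lemma:varianceCLT}, used in the Edgeworth expansion), where it is carried out via the R\'enyi representation under the stronger hypotheses $F\in\Sbb_2(\mu)$ and $m=\ocal(n^\lambda)$. Your route buys a more transparent ``where the mass lives'' picture, while the paper's route buys weaker hypotheses.

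There is, however, a concrete gap in your tail-control step. You claim $X_{(n)}^2 = \ocal(n^{2/\alpha})$ a.s.\ is ``easily dominated by the polynomial decay of the $p_i$ outside $W_m$.'' With the window $W_m=\{i:|i-nq|\le n\sqrt{(\log m)/m}\}$, Chernoff gives $\sum_{i\notin W_m}p_i = \Ocal(m^{-c})$ for a fixed constant $c$, and the contribution to $m(\hat\sigma_m^{(boot)})^2$ from outside the window is bounded only by $m\cdot \ocal(n^{2/\alpha})\cdot \Ocal(m^{-c})$. Since $m=\ocal(n)$, the quantity $n^{2/\alpha}$ can grow \emph{much} faster than any fixed power of $m$ (e.g.\ $m=n^\lambda$ with $\lambda\alpha<2$, or $m$ sub-polynomial in $n$), so $m^{-c}n^{2/\alpha}$ does not tend to zero. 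To close the gap you must exit the window with an exponential rate in $m$ (fixed-width window, $\sum_{i\notin W}p_i=\Ocal(\exp(-\Cbb m))$), which is how the paper's Step~III works, and even then you should be explicit about how the moment bound on the extremes interacts with the chosen growth of $m$. Separately, the Bahadur-type linearization of $X_{(i)}$ uniformly over a window of width $n\sqrt{(\log m)/m}$ (wider than the classical $\Ocal(\sqrt n)$ scale in the index coordinate) is nontrivial under only $F\in\Sbb_1(\mu)$: the paper establishes the analogous expansion under $F\in\Sbb_2(\mu)$, and you acknowledge this is the hardest step, but the proposal as written does not close it.
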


\paragraph{Sketch of proof:}

\textbf{Step I.}
      Because the data have a finite $\alpha$-th moment, values larger than
      $n^{1/\alpha}$ occur only finitely many times (Borel–Cantelli).  
      Consequently, the sample maximum and minimum are shown to be negligible.
      \[
        \frac{|X_{(1)}|+|X_{(n)}|}{n^{1/\alpha}} \xrightarrow{\ \text{a.s.}\ }0.
      \]

 \textbf{Step II.}
      Consider the gap
      $\hat\mu_m^{{(boot)}}-\hat\mu_n$.
      \begin{itemize}
          \item \emph{Medium deviations.}  
                For $t$ up to
                $\bigl(\tfrac1\alpha+\tfrac12\bigr)\sqrt{\log m}$,
                a Taylor expansion plus a Dvoretzky–Kiefer–Wolfowitz bound
                shows
                \(
                  \prob_n\bigl(\sqrt m\,|\hat\mu_m^{{(boot)}}-\hat\mu_n|>t\bigr)
                  =O(t^{-4}).
                \)
          \item \emph{Large deviations.}  
                For $t$ above that threshold,
                a Hoeffding–style inequality gives a polynomial bound
                $O \bigl(m^{-\,(1/\alpha+1/2)(2+\delta)}\bigr)$.
      \end{itemize}
      These tail bounds imply the family
      $m(\hat\mu_m^{{(boot)}}-\hat\mu_n)^2$ is uniformly integrable and
      \[
         \bigl(\hat\sigma_m^{{(boot)}}\bigr)^{2}
             \xrightarrow{\ \text{a.s.}\ }
            \frac{q(1-q)}{f^{2}(\mu)}.
      \]

 \textbf{Step III.}
      We then derive a non-Studentized CLT as follows
      \[
         \sqrt m\,\bigl(\hat\mu_m^{(boot)}-\hat\mu_n\bigr)
          \xrightarrow{\ d\ }
         \mathcal N \Bigl(0,\ \frac{q(1-q)}{f^{2}(\mu)}\Bigr).
      \]

 \textbf{Step IV.}
      Putting the variance consistency (Step~2) together with the CLT
      (Step~3) and invoking Slutsky’s theorem, we arrive at
      \[
        \frac{\sqrt m\,\bigl(\hat\mu_m^{{(boot)}}-\hat\mu_n\bigr)}
             {\hat\sigma_m^{{(boot)}}}
         \xrightarrow{\ d\ }\mathcal N(0,1),
      \]
      which is the assertion of Theorem~\ref{thm:main}.




\subsection{Importance of the Moment Condition}\label{sec:necessity}

We briefly discuss the necessity of the condition $\expec|X_1|^\alpha<\infty$ in Theorem \ref{thm:main}. A close observation at the proof indicates that we only really need the condition in \cref{eq:real-assumption}. However, we found such conditions terse and un-illuminating. 

On the other hand, the following counter-example shows that one cannot outright discard all tail conditions. Note that this does not imply that the condition $\mathbb{E}\bigl[|X_{1}|^{\alpha}\bigr] < \infty$ is a necessary condition, but rather demonstrates that Theorem 1 cannot be expected to hold for arbitrarily heavy-tailed distributions. Although this phenomenon has been observed via simulation \citep{sakov_using_1998}, we provide what is, to our knowledge, the first theoretical justification.

\begin{proposition}\label{prop:unbounded-var}
Let $m$ be fixed. Let $F(x)$ be the following class of distribution functions: For some large $C > e$, 
\[
F(x) = 
\begin{cases}
1 - \frac{1}{2\log |x|} & \text{if } |x| > C, \\[6pt]
G(x) & \text{if } |x|<C
\end{cases}
\]
where $G$ is a monotonically increasing function such that $G(0)=1/2$ and $G(x)$ has a positive derivative in a neighborhood of $0$. Then the variance of the $m$ out of $n$ bootstrap estimator for the median goes to $\infty$, i.e.
\[
\expec{  _{n}}(\hat \mu_m\pow{boot} - \hat \mu_n)^2 \xrightarrow{a.s.} \infty.
\]
\end{proposition}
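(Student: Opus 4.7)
My plan is to exhibit a single event of small but nonnegligible conditional probability on which $\hat\mu_m^{(boot)}$ takes an astronomically large value, whose squared contribution outpaces the probability's decay. The heuristic is that $F$ has tails so heavy that the sample maximum $X_{(n)}$ grows faster than any polynomial almost surely, while for fixed $m$ the bootstrap median coincides with $X_{(n)}$ with only inverse-polynomial conditional probability; their product---a lower bound for the conditional second moment---must therefore diverge to infinity.

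Step one quantifies the growth of $X_{(n)}$. Since $1-F(x)=1/(2\log x)$ for $|x| > C$, setting $t_n := e^{n/\log^{2}n}$ gives
\[
\prob\bigl(X_{(n)} \le t_n\bigr) = F(t_n)^{n} = \Bigl(1-\tfrac{\log^{2}n}{2n}\Bigr)^{n} \le e^{-\log^{2}n/2},
\]
which is summable in $n$. By Borel--Cantelli, $X_{(n)} > e^{n/\log^{2}n}$ for all sufficiently large $n$, almost surely; in particular $X_{(n)}$ grows super-polynomially. Simultaneously, standard quantile consistency at the smooth point $0$ yields $\hat\mu_n \xrightarrow{a.s.} 0$, so $|\hat\mu_n| \le 1$ eventually.

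Step two lower-bounds the conditional probability that $\hat\mu_m^{(boot)}$ equals $X_{(n)}$. Let $Y$ be the number of resampled points that equal $X_{(n)}$; conditional on the sample, $Y \sim \Binom(m,1/n)$. Because $\hat\mu_m^{(boot)}$ is the $\lceil m/2\rceil$-th order statistic of the resample and $X_{(n)}$ is the unique sample maximum (a.s., as $F$ is continuous), a short case check shows $\{\hat\mu_m^{(boot)} = X_{(n)}\} = \{Y \ge \lfloor m/2\rfloor + 1\}$. A one-term binomial bound then gives, for large $n$,
\[
\prob_n\bigl(\hat\mu_m^{(boot)} = X_{(n)}\bigr) \ge c_m\, n^{-M_m}, \qquad M_m := \lfloor m/2\rfloor + 1,
\]
with $c_m > 0$ a constant depending only on $m$. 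Combining the two steps,
\[
\expec_n\bigl[(\hat\mu_m^{(boot)}-\hat\mu_n)^{2}\bigr] \ge (X_{(n)} - 1)^{2}\, c_m n^{-M_m} \ge \tfrac12 c_m\, e^{2n/\log^{2}n}\,n^{-M_m} \longrightarrow \infty
\]
almost surely, as desired.

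The only genuine technical ingredient is the a.s.~super-polynomial lower bound on $X_{(n)}$, which follows routinely from Borel--Cantelli once the explicit logarithmic tail of $F$ is plugged in; the bootstrap side is a bare binomial calculation and the final divergence is forced by the gap between super-polynomial growth and polynomial decay. No sharpening of $M_m$ is required since any polynomial is dwarfed by $e^{2n/\log^{2}n}$, and the only care needed is the elementary inequality $(X_{(n)}-1)^{2}\ge \tfrac12 X_{(n)}^{2}$ valid once $X_{(n)}\ge 3.42$, which holds eventually.
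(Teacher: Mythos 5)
Your proposal is correct and takes essentially the same approach as the paper: both isolate the event $\{\hat\mu_m^{(boot)} = X_{(n)}\}$, lower-bound its conditional probability by a polynomial in $1/n$, and then show via Borel--Cantelli applied to the logarithmic tail of $F$ that $X_{(n)}^2$ grows super-polynomially almost surely, so the product diverges. The only differences are cosmetic: you sharpen the probability exponent from the paper's crude $n^{-m}$ (all $m$ resamples hit the maximum) to $n^{-(\lfloor m/2\rfloor+1)}$ via the exact binomial condition, and you make the super-polynomial rate $e^{n/\log^2 n}$ explicit rather than directly verifying $X_{(n)}^2/n^m \to\infty$, but neither change alters the structure of the argument.
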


As can be seen from Proposition \ref{prop:unbounded-var} (see also \cite{ghosh_note_1984}), there exists regimes of heavy-tailed distributions for which the m-out-of-n (and orthodox) bootstrap estimator of the variance of the sample quantile is inconsistent. However, there are no current results which concretely characterises this regime (even when $m=n$). The citations of \cite{ghosh_note_1984} are mostly in the form of remarks in either textbooks or tangentially related papers. It therefore remains an important open theoretical question, especially in the context of finance where heavy tailed distributions are ubiquitous \cite{ibragimov_heavy-tailed_2015}. However, it is beyond the scope of current work. We now move to establish Edgeworth expansions.

\subsection{Convergence Rates Via Edgeworth Expansions}\label{sec:edgeworth}

The following theorem proves an Edgeworth expansion of the Studentized m-out-of-n bootstrapped distribution of the quantile.

\begin{theorem}\label{thm:edgeworth}
Assume that $\mu$ is the unique $p$-th quantile such that $F\in\Sbb_2(\mu)$ and $f(\mu)>0$. Furthermore, let  $m=\ocal(n^\lambda)$ for some $\lambda\in(0,1)$. Then, 
\begin{align*}
\prob_n\lp\frac{\sqrt{m}(\hat \mu_m\pow{boot}  - \hat \mu_n)}{\hat \sigma_m\pow{boot}} \leq  t\rp & = \Phi\lp t \rp+\frac{\lp t\sigma\rp^2}{\sqrt{m}}f'(\mu)\phi\lp\frac{t\sigma f(\mu)}{\sqrt{p(1-p)}}\rp+ \Ocal_p\lp \frac{1}{m}\rp\\
& \qquad +\Ocal_p(m^{-1/4}n^{-1/2}).
\end{align*}
\end{theorem}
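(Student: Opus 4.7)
The plan is to convert the Studentized bootstrap quantile probability into a binomial tail probability via the identity $\{\hat\mu_m^{(boot)}\leq t'\}=\{B_m\geq\lceil mp\rceil\}$ conditionally on the sample, and then combine a first-order Edgeworth expansion for the binomial with a second-order Taylor expansion of $\bar F$ to isolate the $f'(\mu)$ correction. Conditional on $\{X_i\}$ the bootstrap draws $X_1^*,\dots,X_m^*$ are i.i.d.\ from $\bar F$, so setting $s=t\,\hat\sigma_m^{(boot)}$ one gets
\[
\prob_n\!\left(T_m^{(boot)}(\mu)\le t\right)=\prob_n\!\left(B_m(s)\ge\lceil mp\rceil\right),\qquad B_m(s)\mid\{X_i\}\sim\mathrm{Bin}(m,\hat{q}(s)),
\]
with $\hat{q}(s):=\bar F(\hat\mu_n+s/\sqrt m)$. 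This is the exact finite-sample representation that drives the rest of the argument.

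Next, apply Esseen's one-term lattice Edgeworth expansion to the centered and standardized binomial; this supplies a Hermite $H_2$ skewness contribution at scale $1/\sqrt m$ and an $\Ocal(1/m)$ remainder. Simultaneously, using $F\in\Sbb_2(\mu)$ and $\hat\mu_n\to\mu$ a.s., Taylor expand
\[
\hat{q}(s)=\bar F(\hat\mu_n)+f(\hat\mu_n)\frac{s}{\sqrt m}+\tfrac{1}{2}f'(\hat\mu_n)\frac{s^2}{m}+\Ocal_p\!\left(\frac{s^3}{m^{3/2}}\right),
\]
substitute into the standardized argument $\zeta(s)$ of the Edgeworth expansion, and invoke the variance consistency $\hat\sigma_m^{(boot)}\to\sigma$ from Theorem~\ref{thm:main}. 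A direct computation yields $\zeta(t\,\hat\sigma_m^{(boot)})=-t+\Ocal_p(1/\sqrt m)$, where the $1/\sqrt m$ contribution is a linear combination of three pieces: (i) the quadratic Taylor term proportional to $f'(\mu)$, (ii) the fractional/continuity correction from $\lceil mp\rceil$, and (iii) the binomial skewness factor $(1-2p)$. Using the identity $\sigma f(\mu)=\sqrt{p(1-p)}$ and $\phi(-t)=\phi(t)$, algebraic simplification of the $\Phi(-\zeta)$ and Hermite pieces cancels contributions (ii) and (iii) and leaves the claimed $\phi(t)\,t^2\sigma^2 f'(\mu)/\sqrt m$ as the unique surviving first-order term; this cancellation is precisely the mechanism behind the expansion's symmetry that the paper advertises.

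For the remainder analysis, the $\Ocal_p(1/m)$ is the standard second-order Edgeworth remainder, controlled by the local Lipschitz derivative guaranteed by $F\in\Sbb_2(\mu)$. The $\Ocal_p(m^{-1/4}n^{-1/2})$ arises as a cross term from the interaction between the bootstrap variance error $\hat\sigma_m^{(boot)}-\sigma=\Ocal_p(m^{-1/4})$ (the Ghosh-type slow rate for bootstrap variance of a quantile) and the original quantile error $\hat\mu_n-\mu=\Ocal_p(n^{-1/2})$, entering when $f(\hat\mu_n)$ is replaced by $f(\mu)$ inside the leading $1/\sqrt m$ correction. The hypothesis $m=\ocal(n^\lambda)$ with $\lambda<1$ is invoked to ensure the lattice discrepancy $\bar F(\hat\mu_n)-p=\Ocal_p(1/n)$ contributes at most $\Ocal_p(\sqrt m/n)=\ocal_p(m^{-1/2})$, so it is absorbed into the remainder. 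The hard part will be the delicate bookkeeping of the several $\Ocal(1/\sqrt m)$ terms appearing in $\zeta(s)$ and verifying their precise algebraic cancellation under the identities linking $\sigma$, $f(\mu)$, and $p(1-p)$; this is exactly the algebraic point at which Cheung (2005) erred, and it is the step where the present argument earns the ``corrected'' designation claimed in Section~\ref{sec:problem}.
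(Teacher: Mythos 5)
Your high-level plan — reduce the Studentized bootstrap probability to a binomial tail via the order-statistic identity, feed that into a one-term lattice Edgeworth expansion, and then Taylor-expand $\bar F(\hat\mu_n + t\,\hat\sigma_m^{(boot)})$ to second order to extract the $f'(\mu)$ coefficient — is structurally the same as the paper's proof (which uses the Feller/Esseen lattice Edgeworth expansion in Proposition~\ref{lemma:lattice-edgeworth}, the continuity correction in \eqref{eq:continuity-correction}, the expansion of $\bar F(t_m)$ in Lemma~\ref{lemma:barF-expansion}, and the algebraic collection via Lemma~\ref{lemma:gT_m}). However, your account of the remainder bookkeeping has several substantive errors that would prevent the proof from closing as written.

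First, you assert $\hat\sigma_m^{(boot)}-\sigma=\Ocal_p(m^{-1/4})$ as a ``Ghosh-type slow rate'' with no $n$-dependence. That is not the rate the argument needs, and it is not what holds. The key input is Lemma~\ref{lemma:varianceCLT}, which gives $(\hat\sigma_m^{(boot)})^{2}=\sigma^{2}/m+\Ocal_p(m^{-3/4}n^{-1/2})$ (equivalently, $\hat\sigma_m^{(boot)}=\sigma/\sqrt m+\Ocal_p(m^{-1/4}n^{-1/2})$ after dividing by $\sigma/\sqrt m$). The $n^{-1/2}$ factor is essential: it reflects the randomness of $\bar F$ at resolution $n$ and is produced by the $S_2$ term in the Rényi-representation analysis. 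With your coarser $\Ocal_p(m^{-1/4})$ rate the resulting error in $\bar F(t_m)$ would be $\Ocal_p(m^{-3/4})$, which after multiplication by $\sqrt m$ in $t_m^{*}$ gives an $\Ocal_p(m^{-1/4})$ error in $\Phi(t_m^{*})$ — strictly larger than the claimed $\Ocal_p(m^{-1/2})$ leading term, and the expansion would be vacuous.

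Second, your attribution of the $\Ocal_p(m^{-1/4}n^{-1/2})$ remainder to a cross term ``between $\hat\sigma_m^{(boot)}-\sigma$ and $\hat\mu_n-\mu$ when $f(\hat\mu_n)$ is replaced by $f(\mu)$'' is incorrect. The replacement $f(\hat\mu_n)\to f(\mu)$ at the $1/\sqrt m$ order contributes $\Ocal_p(m^{-1/2}n^{-1/2})$, which is strictly smaller. The dominant $\Ocal_p(m^{-1/4}n^{-1/2})$ term comes directly from substituting the variance error of Lemma~\ref{lemma:varianceCLT} into the Taylor expansion of $\bar F(\hat\mu_n+t\,\hat\sigma_m^{(boot)})$ (Lemma~\ref{lemma:barF-expansion}), with no interaction with $\hat\mu_n-\mu$ required.

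Third, you frame the disappearance of the binomial skewness term (carrying the $(1-2p)$ factor) and the continuity-correction shift as an ``algebraic cancellation.'' The paper does something different and simpler: because the binomial parameter at $t_m$ is $\bar F(t_m)=\tfrac12+\Ocal_p(m^{-1/2})$, the coefficient $1-2\bar F(t_m)$ is itself $\Ocal_p(m^{-1/2})$, so the entire Hermite/skewness contribution is $\Ocal_p(m^{-1})$ and never needs to cancel against anything (Lemma~\ref{lemma:gT_m}). The continuity correction is similarly absorbed into the $\Ocal_p(m^{-1})$ remainder rather than being matched against a skewness piece. If you attempted the cancellation you describe for a general $p\neq 1/2$, you would discover that $(1-2p)$ does not vanish and the pieces do not cancel at the $m^{-1/2}$ order; the smallness in the paper's argument hinges on the binomial parameter being $\tfrac12+\Ocal_p(m^{-1/2})$, not on the target quantile being the median.

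So: the skeleton is right, but the rate for the variance estimator, the source of the $m^{-1/4}n^{-1/2}$ term, and the mechanism by which the skewness term drops out all need to be replaced with the corresponding statements of Lemmas~\ref{lemma:varianceCLT}, \ref{lemma:barF-expansion}, and \ref{lemma:gT_m} for the argument to close.
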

\paragraph{Sketch of Proof:}  After obtaining a correct form for the variance in Lemma \ref{lemma:varianceCLT}, the strategy of our proof will be a combination of an Edgeworth expansion of the Binomial distribution (Proposition \ref{lemma:lattice-edgeworth}), followed by an appropriate Taylor series expansion of the CDF (Lemma \ref{lemma:barF-expansion}). The proof will then be complete by calculating the rate of decay of the error terms in the Taylor series. See Section \ref{sec:prf-edgeworth} for full details.
\begin{remark}\label{remark:assume}
    Observe the assumptions on $F$, and $m$, are slightly stronger than those in Theorem \ref{thm:main}. It is required to get a \textbf{rate} of convergence of the variance. Similar stronger assumptions are standard in literature  \citep{hall_error_1991}.
\end{remark}

We discuss some implications of this theorem. Firstly, one can recover a Berry-Esseen type bound as an immediate consequence.
\begin{corollary}\label{corollary:Berry-Esseen}
    Under the conditions of Theorem \ref{thm:edgeworth} 
    \begin{align*}
        \sup_{t\in (-\infty,\infty)}\lv\prob_n\lp\frac{\sqrt{m}(\hat \mu_m\pow{boot}  - \hat \mu_n)}{\hat \sigma_m\pow{boot}} \leq  t\rp - \Phi\lp t \rp\rv = \Ocal_p(m^{-1/2})
    \end{align*}
\end{corollary}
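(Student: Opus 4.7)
The plan is to read the Berry--Esseen bound directly off the Edgeworth expansion established in Theorem \ref{thm:edgeworth} by taking the supremum over $t \in \mathbb{R}$ on both sides. After rearrangement and taking absolute values, the expansion yields
\begin{align*}
\sup_{t} \left| \prob_n\left(\frac{\sqrt{m}(\hat\mu_m\pow{boot} - \hat\mu_n)}{\hat\sigma_m\pow{boot}} \le t\right) - \Phi(t) \right| &\le \frac{|f'(\mu)|}{\sqrt{m}}\, \sup_{t} (t\sigma)^2 \phi\!\left(\tfrac{t\sigma f(\mu)}{\sqrt{p(1-p)}}\right) \\
&\quad + \Ocal_p\!\left(\tfrac{1}{m}\right) + \Ocal_p(m^{-1/4} n^{-1/2}),
\end{align*}
so it suffices to check that each of the three contributions on the right is $\Ocal_p(m^{-1/2})$.

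The only analytic step is the uniform bound on the leading correction. For any constant $c > 0$, the map $g(t) = t^2 \phi(ct)$ is smooth, non-negative, and satisfies $g(t) \to 0$ as $|t| \to \infty$. An elementary differentiation shows its global maximum is attained at $t^2 = 2/c^2$ with value $2/(c^2 e \sqrt{2\pi})$. Applied with $c = \sigma f(\mu)/\sqrt{p(1-p)} > 0$ this yields a constant depending only on $f(\mu), \sigma,$ and $p$, so the explicit first term is $\Ocal(m^{-1/2})$ uniformly in $t$. This is the sub-Gaussian-times-polynomial estimate that is ultimately responsible for the $m^{-1/2}$ rate.

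For the two remainders, $\Ocal_p(1/m)$ is trivially $\ocal_p(m^{-1/2})$, and $\Ocal_p(m^{-1/4} n^{-1/2}) = \Ocal_p(m^{-1/2})$ precisely when $n \ge m^{1/2}$, which is guaranteed by the hypothesis $m = \ocal(n^\lambda)$ with $\lambda \in (0,1)$ (since then $m^{1/2} = \ocal(n^{\lambda/2}) = \ocal(n)$). Summing the three bounds gives the corollary. The only point deserving attention is that the $\Ocal_p$ remainders in Theorem \ref{thm:edgeworth} must be uniform in $t$ for this reduction to go through; this is how they are produced in the proof of that theorem (the Taylor expansion of $\bar F$ combined with the lattice Edgeworth step in Proposition \ref{lemma:lattice-edgeworth} control the stochastic size of the residual uniformly in $t$), so no additional work is required here. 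In short, the substantive content is entirely in Theorem \ref{thm:edgeworth}, and the corollary amounts to the elementary supremum computation above.
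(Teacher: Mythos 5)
Your proof is correct and follows essentially the same route as the paper's: read the Edgeworth expansion from Theorem \ref{thm:edgeworth}, bound the leading $m^{-1/2}$ term uniformly in $t$ via the boundedness of $t^2\phi(ct)$ (the paper likewise observes $t^2 e^{-ct^2}$ is bounded, merely without your explicit maximizer $t^2 = 2/c^2$), and absorb the two remainders using $m = \ocal(n^\lambda)$, $\lambda\in(0,1)$. Your added remark on the uniformity-in-$t$ of the $\Ocal_p$ remainders is a point the paper leaves implicit but is worth flagging, so the two arguments match in substance.
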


Next, by comparing Theorem \ref{thm:edgeworth} with the main Theorem of \cite{hall_error_1991}, one can see that the $\Ocal_p(m^{-1/4}n^{-1/2})$ term in Theorem \ref{thm:edgeworth} replaces the $\Ocal_p(n^{-3/4})$ in the main Theorem of \cite{hall_error_1991} along with an extra $\Ocal_p(1/m)$, recovering previous results. If $m$ is sufficiently large, the $\Ocal_p(m^{-1/4}n^{-1/2})$ term dominates, whereas if $m$ is small, $\Ocal_p(1/m)$ becomes the leading term, and we recover the usual form of Edgeworth expansions. However, setting $m$ too small also deteriorates the rate of decay of the error.

Furthermore, observe in the Edgeworth expansion that the second order term is an even polynomial, whereas for the orthodox bootstrap, the polynomial is neither odd nor even \citep{hall_error_1991,van_der_vaart_asymptotic_2000}. Therefore, unlike orthodox bootstrap (See Appendix \cite{hall2013bootstrap}), m-out-of-n bootstrap can be used to create two sided parameter-free tests of the sample quantiles with an extra $\Ocal_p(m^{-1})$ error as tradeoff, which is a significant development over orthodox bootstrap. 

Finally, the optimal choice of $m$ seems to be $m=\Ocal(n^{-1/3})$ which is known to be the minimax rate for estimating the sample median \citep{sakovbickel2008}.

\paragraph{On the choice of $m$:} We believe setting $m = c n^{1/3}$ for some universal constant $c$ would work well in practice. The precise choice of $c$ is somewhat ambiguous but
\begin{itemize}
    \item for moderate sample sizes where $ m > 30$, setting $c = 1$ typically performs well.
    \item For small $n$, a larger constant (i.e., $c > 1$ ) may be advisable.
    \item For large $n$, even $c \ll 1$ may suffice in achieving desirable performance.
\end{itemize}

\subsection{Extensions to Regenerating Markov Chains}

We now extend our results from the realm of i.i.d data to regenerating Markov chains. Following the theoretical development in the previous sections, let $X_1,\dots,X_n$ be a sample from a Markov chain with initial distribution $\nu$, stationary distribution $F$, and empirical distribution $\bar F$. Then, $X_1^*,\dots,X_m^*$ is said to be m-out-of-n bootstrap sample for $X_1,\dots,X_n$ if $X_i^*\overset{i.i.d}{\sim} \bar F$, with other terms like $F_n\pow m$ and $\hat \mu_n\pow m$ defined accordingly. Despite its ubiquitousness in modern statistics, the theory of resampling for Markov chains is sparsely studied and there exists no widely accepted method (see \cite{bertail2006regenerative} and citations therein). In particular, no work attempted to study the effects of bootstrapping to create confidence intervals for statistics that are of interest in tasks like reinforcement learning or MCMC. In this section, provide what we believe are some of the first results in this field by extending the results on CLT from Section \ref{sec:conditions_success} to the realm of regenerating Markov chains, and then use it to recover a confidence interval for the median reward of an offline ergodic MDP. To that end, we introduce regenerating Markov chains.

The Nummelin splitting method \citep{nummelin_splitting_1978,athreya_new_1978,meyn_markov_2012} provides a way to recover all the regenerative properties of a general Harris Markov chain. In essence, the method enlarges the probability space so that one can create an artificial atom. To begin, we recall the definition of a regenerative (or atomic) chain \citep{meyn_markov_2012}.

\begin{definition}
A $\psi$-irreducible, aperiodic chain $X$ is said to be \textit{regenerative} (or \textit{atomic}) if there exists a measurable set $A$, called an atom, with $\psi(A)>0$  such that for every pair $(x,y)\in A^2$, the transition kernels coincide, that is, 
\[
P(x,\cdot)=P(y,\cdot).
\]
Intuitively, an atom is a subset of the state space on which the transition probabilities are identical. In the special case where the chain visits only finitely many states, any individual state or subset of states can serve as an atom.
\end{definition}
\begin{remark}
   $\psi$ and $\Psi$ are commonly used in Markov chain theory to denote $\psi$-irreducibility and $\Psi$-atoms, respectively \citep{bertail_rademacher_2019}. We omit formal definitions and refer the reader to standard references \citep[pp.~89,~103]{meyn_markov_2012}. Intuitively, $\psi$-irreducibility generalizes the notion of irreducibility from finite to infinite state spaces, while $\Psi$-atoms are subsets where transitions behave homogeneously according to a measure $\Psi$—singletons in the finite-state case. A regenerating Markov chain is $\psi$-irreducible and possesses at least one recurrent $\Psi$-atom, with inter-arrival times called regeneration times. The moment conditions on these times determine the chain’s ergodic properties.
\end{remark}

One now extends the sample space by introducing a sequence $(Y_n)_{n\in\mathbb{N}}$ of independent Bernoulli random variables with success probability $\delta$. This construction relies on a mixture representation of the transition kernel on a set $S$: 
\[
P(x,A)=\delta\,\Psi(A)+(1-\delta)\frac{P(x,A)-\delta\,\Psi(A)}{1-\delta},
\]
where the first term is independent of the starting point. This independence is key since it guarantees regeneration when that component is selected. In other words, each time the chain visits $S$, we randomly reassign the transition probability $P$ as follows:
\begin{itemize}
    \item If $X_n\in S$ and $Y_n=1$ (which occurs with probability $\delta\in(0,1)$), then the next state $X_{n+1}$ is generated according to the measure $\Psi$.
    \item If $X_n\in S$ and $Y_n=0$ (with probability $1-\delta$), then $X_{n+1}$ is drawn from the probability measure 
    \[
    (1-\delta)^{-1}\Bigl(P(X_n,\cdot)-\delta\,\Psi(\cdot)\Bigr).
    \]
\end{itemize}

The resulting bivariate process
$
Z=(X_n,Y_n)_{n\in\mathbb{N}},
$ is
known as the \textit{split chain}, and takes values in $E\times\{0,1\}$ while itself being atomic, with the atom 
$
A=S\times\{1\}.
$
We now define the \textit{regeneration times} by setting
\[
\tau_{A}=\tau_{A}(1)=\inf\{n\geq 1: Z_{n}\in A\},
\text{ and for $j\geq 2$, }
\tau_{A}(j)=\inf\{n>\tau_{A}(j-1): Z_{n}\in A\}.
\]

It is well known that the split chain $Z$ inherits the stability and communication properties of the original chain $X$, such as aperiodicity and $\psi$-irreducibility. In particular, by the recurrence property, the regeneration times have finite expectation.
\[
\sup_{x\in A}\mathbb{E}_{x}[\tau_{A}]<\infty\quad\text{and}\quad \mathbb{E}_{\nu}[\tau_{A}]<\infty.
\]

Regeneration theory \citep{meyn_markov_2012} shows that, given the sequence $(\tau_{A}(j))_{j\geq1}$, the sample path can be divided into blocks (or cycles) defined by
\[
B_{j}=(X_{\tau_{A}(j)+1},\dots,X_{\tau_{A}(j+1)}),\quad j\geq1,
\]
corresponding to successive visits to the regeneration set $A$. The strong Markov property then ensures that both the regeneration times and the blocks $\{B_{j}\}_{j\geq1}$ form independent and identically distributed (i.i.d) sequences \citep[Chapter 13]{meyn_markov_2012}.


\begin{assumption}\label{assume:regenerate}
We impose the following conditions:
\begin{enumerate}
    \item The chain $(X_{n})_{n\in\mathbb{N}}$ is a positive Harris recurrent, aperiodic Markov chain with stationary distribution $F$, and initial measure $\nu$ on a compact state space, which we will assume to be $[0,1]$ without losing generality. Furthermore, there exists a $\Psi$-small set $S$ such that the hitting time $\tau_{S}$ satisfies
    \[
    \sup_{x\in S}\mathbb{E}_{x}[\tau_{S}]<\infty \quad \text{and} \quad \mathbb{E}_{\nu}[\tau_{S}]<\infty.
    \]
    \item There exists a constant $\lambda>0$ for which
    \[
    \mathbb{E}_{A}[\exp(\lambda \tau_{A})]<\infty.
\qquad \text{ and } \qquad     \mathbb{C}_{\lambda}:=\frac{2\,\mathbb{E}_{A}[\exp(\lambda \tau_{A})]}{\lambda}> \frac{1}{2}.
    \]
    \item Let $\mu$ be the $p$-th quantile. Then, $F,\nu\in\Sbb_1(\mu)$.
\end{enumerate}
\end{assumption}
\begin{remark}
    The assumption of compact state space is technical, and can be replaced with uniform ergodicity with significantly more tedium. Geometrically ergodic chains automatically satisfy the previous assumption (more details in the appendix. See also, \cite{bertail_rademacher_2019}).
\end{remark}
 We now give main theorem of this section.
\begin{theorem}\label{thm:markovCLT}
    Let $X_1,\dots,X_n$ be a sample from a Markov chain satisfying Assumption \ref{assume:regenerate}. Then, for all $m=\ocal(n)$, the m-out-of-n bootstrap estimator of $\mu$ satisfies,
    \begin{align*}
        T_m\pow{boot}(\mu)\xrightarrow{d}\Ncal(0,1)
    \end{align*}
\end{theorem}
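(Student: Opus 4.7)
}
The plan rests on a single structural observation: conditional on the Markov chain trajectory $X_1,\dots,X_n$, the bootstrap resample $X_1^*,\dots,X_m^*$ is i.i.d.\ from the empirical measure $\bar F$. Hence the conditional law of $T_m\pow{boot}(\mu)$ is driven by exactly the same binomial mechanism that powered the proof of Theorem \ref{thm:main}, and the only genuinely new ingredient is that $\bar F$, now the empirical measure of a Markov chain rather than an i.i.d. sample, still approximates the stationary law $F$ at a rate that is negligible on the $m^{-1/2}$ scale. Since the state space is compact (WLOG $[0,1]$) the moment hypothesis $\expec|X_1|^\alpha<\infty$ of Theorem \ref{thm:main} is automatic, so the only substantive work lies in a ``Markovian DKW'' and in checking that the i.i.d.\ template still runs.

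First I would apply the Nummelin splitting construction summarized above to produce the split chain $Z_n=(X_n,Y_n)$ and the atom $A$, giving regeneration times $\tau_A(j)$ and i.i.d.\ blocks $B_j=(X_{\tau_A(j)+1},\dots,X_{\tau_A(j+1)})$ whose lengths have exponential moments under Assumption \ref{assume:regenerate}(2). Second, writing
\[
\bar F(t)-F(t)=\frac{1}{n}\sum_{j=1}^{l_n}\sum_{i\in B_j}\bigl(\mathbf 1\{X_i\le t\}-F(t)\bigr)+R_n(t),
\]
where $l_n$ is the number of complete cycles and $R_n(t)$ collects the two incomplete end blocks, I would apply a Bernstein inequality to the i.i.d.\ block sums together with a VC-type union bound over $t$ (and bound $R_n$ by $\Ocal_p(\log n/n)$ using the exponential tail of $\tau_A$ and $\nu\in\Sbb_1(\mu)$) to obtain
\[
\sup_t\lv\bar F(t)-F(t)\rv=\Ocal_p(n^{-1/2}).
\]
Because $m=\ocal(n)$, this is $\ocal_p(m^{-1/2})$, which is the crucial smallness we will exploit. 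Third, I would redo Steps I--IV of the sketch of Theorem \ref{thm:main} \emph{conditionally} on $X_1,\dots,X_n$: Step I is trivial by compactness; in Step II the DKW and Binomial tail bounds still apply because $X_i^*$ are conditionally i.i.d.\ under $\prob_n$, and one replaces the Taylor expansion of $\bar F$ at $\mu$ by the expansion of $F$ at $\mu$ with error $|\bar F-F|_\infty=\ocal_p(m^{-1/2})$; Step III yields the non-Studentized CLT $\sqrt m(\hat\mu_m\pow{boot}-\hat\mu_n)\xrightarrow{d}\Ncal(0,p(1-p)/f^2(\mu))$ once $\bar F(\mu)\xrightarrow{p}p$ and $f(\mu)>0$ are invoked; Step IV is Slutsky.

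The main obstacle is the Markovian DKW rate in the second step, which must be sharp enough to be $\ocal_p(m^{-1/2})$ uniformly in $t$ under only a geometric regeneration moment. I expect the proof to hinge on the cycle decomposition together with the Bernstein inequality for the i.i.d.\ block sums, exactly the structure Assumption \ref{assume:regenerate}(2) was designed to deliver; the tail of $\tau_A$ controls both the number of cycles $l_n$ and the two boundary pieces. A secondary subtlety is that although $\bar F$ is purely atomic, the density $f(\mu)$ that appears in the limiting variance enters only through the Taylor expansion of the smooth $F$, not of $\bar F$, so the density-based reasoning carries through unchanged once the sup-norm rate $|\bar F-F|_\infty=\ocal_p(m^{-1/2})$ is available. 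With that rate in hand, the remainder of the argument is a mechanical conditional application of Theorem \ref{thm:main}.
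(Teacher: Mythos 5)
Your proposal mirrors the paper's strategy: build a Dvoretzky--Kiefer--Wolfowitz type concentration bound for regenerating Markov chains via the Nummelin splitting / regeneration-block decomposition together with a VC-class argument, then re-run the conditional binomial machinery of Theorem~\ref{thm:main} on top of it. The paper implements the Markovian DKW through Rademacher-complexity bounds for regenerative chains (Lemma~\ref{lemma:talagrand}, Proposition~\ref{prop:talagrand-scaled}), which is technically the same idea as your ``Bernstein on i.i.d.\ block sums plus VC union bound.'' Your observation that Step~I is immediate from compactness of the state space is in fact a simplification: the paper invokes a coupling argument there that the compactness assumption already makes redundant.

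Two points you gloss over that the paper handles explicitly and that do matter. First, the Markovian DKW incurs a $\log n$ compromise in the exponent (Proposition~\ref{prop:talagrand-scaled}), so the honest sup-norm rate is $\Ocal_p(\sqrt{\log n/n})$ rather than $\Ocal_p(n^{-1/2})$; under $m=\ocal(n)$ one still has what is needed because only a crude bound $|B_{m,n}|=\Ocal(m^{-1/2})$ is used, but the cleaner rate you quote is not what the concentration inequality actually delivers. Second, the Taylor expansion in Step~II is centred at $\hat\mu_n$, not at $\mu$, so you also need a quantitative rate on $|\hat\mu_n-\mu|$; the i.i.d.\ proof got this from the Bahadur/Hoeffding argument, and the Markov proof replaces it with the law-of-the-iterated-logarithm representation of the sample quantile for regenerative chains (cited from Bertail--Portier), giving $|\hat\mu_n-\mu|=\Ocal_p(\sqrt{\log\log n/n})$. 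Without that ingredient the expansion of $\bar F(\hat\mu_n+t/\sqrt m)$ around $\mu$ does not close. Once those two pieces are in place, the rest of your argument --- conditional binomial CLT plus Slutsky --- is exactly what the paper does.
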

\paragraph{Sketch of Proof:} Using techniques on VC dimensions (see \ref{section:VC-dimenstions}), we first establish a Dvoretzky–Kiefer–Wolfowitz (DKW) inequality for regenerating Markov chains. This inequality is apparently new, and due to wide applicability in a variety of other tasks like nonparametric density estimation \cite{sen_gentle_2018}, change point detection \cite{zou_nonparametric_2014}, etc. we present an informal version below, while relegating the final version to Section \ref{sec:prf-DKW}. With this theorem, the rest of the proof proceeds on a strategically similar path to Theorem \ref{thm:main}, but each step is now technically more intricate and requires careful coupling arguments. See Section \ref{sec:prf-cltmarkov} for full details. 

\begin{proposition}[Informal version of Proposition \ref{prop:talagrand-scaled}]
    Let $X_1,\dots,X_n$ be a Markov chain satisfying Assumption \ref{assume:regenerate} with stationary distribution $F$, and define 
    \begin{align*}
        Z:= \sup_{t\in[0,1]}|\bar F(t)-F(t) |   \end{align*}
    Then, for all $x>0$
    \begin{align*}
    \prob(Z>x) \leq \Cbb^\star(\tau,\lambda)\exp\lp -\frac{\Cbb(\tau,\lambda)nx^2}{\log n} \rp\numberthis\label{eq:talagrandscaled-conc}
    \end{align*}
    where, constants $\Cbb(\tau,\lambda),\Cbb^\star(\tau,\lambda)$ depend only on the parameters in Assumption \ref{assume:regenerate} and is explicitly defined in Section \ref{sec:prf-DKW}.
\end{proposition}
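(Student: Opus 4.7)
The plan is to reduce the Markov-chain supremum $Z$ to a supremum of sums of i.i.d.\ random variables via Nummelin splitting, and then apply a Talagrand-type concentration inequality to the empirical process indexed by the VC class $\mathcal{F}=\{\mathbf{1}\{\cdot\le t\}:t\in[0,1]\}$, which has VC dimension one. Let $A$ be the atom from the splitting construction, let $\tau_A(1)<\tau_A(2)<\dots$ be the regeneration times, and let $l_n=\max\{j:\tau_A(j)\le n\}$ be the number of completed cycles. By regeneration theory, the blocks $B_j=(X_{\tau_A(j)+1},\dots,X_{\tau_A(j+1)})$ are i.i.d., and the regenerative representation of the invariant law gives $\Ebb_A\bigl[\sum_{i\in B_j}\mathbf{1}\{X_i\le t\}\bigr]=\Ebb_A[\tau_A]\,F(t)$. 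Setting $\xi_j(t)=\sum_{i\in B_j}\bigl(\mathbf{1}\{X_i\le t\}-F(t)\bigr)$ one obtains, for every fixed $t$, the decomposition
\begin{equation*}
  n\bigl(\bar F(t)-F(t)\bigr) \;=\; \sum_{j=1}^{l_n}\xi_j(t) \;+\; R_n(t),
\end{equation*}
where $R_n(t)$ collects the contributions of the initial fragmentary block (before $\tau_A(1)$) and the terminal fragmentary block (after $\tau_A(l_n)$), each bounded in absolute value by its block length.

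The next step is to handle the two sources of unboundedness that separate this problem from the classical i.i.d.\ DKW inequality: (i) the random number of completed blocks $l_n$, and (ii) the unbounded block lengths $|B_j|$. For (i), the exponential moment assumption in Assumption~\ref{assume:regenerate}(2) implies via a standard Cram\'er bound that $l_n/n\to 1/\Ebb_A[\tau_A]$ with exponentially small deviation probabilities, so one can restrict attention to the event $l_n\in[\underline c\,n,\overline c\,n]$ at negligible cost. For (ii), I would truncate: fix $K=K(\lambda)$ and condition on the event $\mathcal{E}_n=\{\max_{j\le l_n}|B_j|\le K\log n\}$. By Markov's inequality applied to $\Ebb_A[e^{\lambda\tau_A}]$ and a union bound over at most $n$ blocks, $\prob(\mathcal{E}_n^c)\le n\,\Ebb_A[e^{\lambda\tau_A}]\,e^{-\lambda K\log n}$, which is polynomially small for $K$ sufficiently large. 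On $\mathcal{E}_n$, the envelope of $\xi_j(t)$ is bounded uniformly in $t$ by $K\log n$, and the fragmentary remainder satisfies $\sup_t|R_n(t)|\le 2K\log n$.

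With bounded envelopes in hand, I would apply Talagrand's inequality (or the bounded-difference form of it) to the empirical process $G_{l_n}(t):=\sum_{j=1}^{l_n}\xi_j(t)$ indexed by $\mathcal{F}$. Because $\mathcal{F}$ has VC dimension one, the uniform entropy integral is $\Ocal(1)$, so the expected supremum is bounded by $C\sqrt{l_n}\,K\log n$ up to a constant depending only on $\Ebb_A[\tau_A]$ and $\lambda$. Talagrand's inequality then yields
\begin{equation*}
  \prob\!\lp \sup_{t\in[0,1]}|G_{l_n}(t)|>nx,\ \mathcal{E}_n \rp \le \Cbb^\star\exp\!\lp -\frac{\Cbb\,nx^2}{K\log n}\rp
\end{equation*}
for $x$ larger than a $o(1)$ threshold, after absorbing the expected-supremum term into the variance part. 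Dividing by $n$, adding the polynomially small contributions from $\mathcal{E}_n^c$ and from $\{l_n\notin[\underline c n,\overline c n]\}$, and noting that the fragmentary remainder $R_n(t)/n$ is dominated by the same exponential tail, produces the claimed bound on $Z$.

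The principal obstacle is the simultaneous treatment of the random index $l_n$ and the unbounded block envelopes $|B_j|$: the classical DKW proof relies on a fixed sample size and a bounded envelope of $1$, whereas here both vary and interact. The truncation level $K\log n$ is what produces the $\log n$ factor in the denominator of the exponent, and showing that the constants $\Cbb$ and $\Cbb^\star$ can be made to depend only on $\Ebb_A[\tau_A]$, $\lambda$, and $\Ebb_A[e^{\lambda\tau_A}]$ (and not on $F$ or $\nu$) requires carefully tracking how the exponential moment feeds through both the block-length union bound and the Talagrand variance term. A clean way to bypass some of this bookkeeping is to use Adamczak's version of Talagrand's inequality for unbounded functionals with sub-exponential envelopes, which directly yields the $\log n$-penalized sub-Gaussian form; I would defer that alternative to the full proof in Section~\ref{sec:prf-DKW}.
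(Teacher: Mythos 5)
Your sketch is sound and, in broad strokes, re-derives the engine the paper simply cites: the regenerative decomposition into i.i.d.\ blocks, control of the random block count $l_n$ via the exponential moment, a truncation of block lengths at level $K\log n$ (which is exactly where the $\log n$ in the exponent originates), and a Talagrand-type bound for the bounded truncated empirical process over the VC class of half-lines. The paper instead establishes Lemma~\ref{lemma:VC-halfinterval} to pin down the VC characteristic of $\Fcal_{[0,1]\cap\Qbb}$, then invokes Theorems~4--6 of \cite{bertail_rademacher_2019} (which in turn rest on Theorem~7 of \cite{adamczak_tail_2008}) to obtain Lemma~\ref{lemma:talagrand}, a tail inequality of the form
\[
\prob\lp Z>t+K R(\Fcal)\rp\le\exp\!\Bigl(-\tfrac{\expec_A[\tau_A]}{K}\min\bigl\{t^2/(n\expec_A[\tau_A^2]),\;t/(\tau_o^3\log n)\bigr\}\Bigr),
\]
and then, by choosing $L=(2/\lambda)\log(n/2\Cbb_\lambda)$ in the Rademacher-complexity bound, shows $R(\Fcal)\le\Cbb(\tau,\lambda)\sqrt{n\log n}$ and massages the two-regime exponent into the stated $nx^2/\log n$ form. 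So the two routes coincide at the conceptual level — your proposal essentially unpacks Adamczak's inequality into its constituent truncation-plus-bounded-Talagrand steps, whereas the paper keeps it packaged behind a citation. You already flag this yourself at the end, noting that Adamczak's inequality for sub-exponential envelopes short-circuits the bookkeeping; that is exactly what the paper does.

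Two details you gloss over that the paper does handle explicitly. First, your truncation event $\mathcal{E}_n$ is defined only over the completed blocks $j\le l_n$, so it does not by itself bound the lengths of the initial and terminal fragmentary blocks contributing to $R_n(t)$; one must either enlarge $\mathcal{E}_n$ or invoke the finite $\expec_\nu[\tau_A]$ and $\expec_A[\tau_A]$ separately (the paper's Lemma~\ref{lemma:talagrand} absorbs these into the $2(\expec_A[\tau_A]+\expec_\nu[\tau_A])$ term in $R(\Fcal)$). Second, your Talagrand step only produces the bound for $x$ above a vanishing threshold (roughly $R(\Fcal)/n\sim\sqrt{(\log n)/n}$); the paper closes the gap by case-splitting on $t$ versus a constant $\Cbb'(\tau,\lambda)$ and inflating the prefactor $\Cbb^\star(\tau,\lambda)$ to make the bound hold trivially for small $x$. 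Your remark "for $x$ larger than a $o(1)$ threshold" acknowledges the issue but leaves the final absorption step implicit, so be sure to carry it out in the full write-up.
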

\begin{remark}
    The $\log n$ term is to ensure mixing and is a common compromise when transitioning to Markov chains \citep{samson_concentration_2000}.
\end{remark}


\section{Applications}\label{applications}

We show how our results can be used to derive the asymptotic distributions of some popular statistics. The proofs of the results in this section, as well as more examples can be found in the appendix. 

\textbf{Random Walk MH:} Random walk Metropolis-Hastings (MH) algorithms are an important algorithm to sample from a posterior density in Bayesian statistics \cite{gelman1995bayesian}. To set the stage, we introduce the random walk Metropolis–Hastings (MH) algorithm with target density
$\pi:\mathbb{R}\to\mathbb{R}_{\ge 0}$ and proposal
$Q(x,dy)=q(x-y)\,dy$, where $q$ is a positive function on
$\mathbb{R}\times\mathbb{R}$ satisfying
$\int q(x-y)\,dy = 1$.  
For any $(x,y)\in\mathbb{R}\times\mathbb{R}$ define
\begin{align*}
\rho(x,y)=
\begin{cases}
\displaystyle\min\Bigl(1,\frac{\pi(y)\,q(y-x)}{\pi(x)\,q(x-y)}\Bigr) & \text{if }\pi(x)q(x-y)>0,\\
1 & \text{if }\pi(x)q(x-y)=0.
\end{cases}
\end{align*}

The MH chain starts at $X_{0}\sim\nu$ and moves from $X_{n}$ to
$X_{n+1}$ according to the following rule:
\begin{enumerate}
\item Generate
\begin{align*}
Y \sim Q\bigl(X_{n},dy\bigr)\qquad  \text{ and } \qquad W \sim \text{Bernoulli}\bigl(\rho(X_{n},Y)\bigr).
\end{align*}

\item With $\indicator$ as the indicator function, set
\begin{align*}
X_{n+1}= Y\indicator[W=1]+X_n\indicator[W=0].
\end{align*}
\end{enumerate}

We consider the following ball condition on the proposal $q_{0}$ associated
with the random-walk Metropolis–Hastings algorithm which is popular in analysing the drift conditions.

\begin{assumption}
Let $\pi\in \Sbb_1(\mu)$ be a bounded probability density supported on a bounded convex
$E\subset\mathbb{R}$, non-empty
interior.  Suppose there exist $b>0$ and $\epsilon>0$ such that
$\forall x\in\mathbb{R}\times\mathbb{R}$,
$q_{0}(x)\ge b\,\mathbf{1}_{B(\epsilon)}(x)$ where $B(\epsilon)$ is the open ball with center
$0$ and radius $\epsilon$.  
\end{assumption}

We now have the following corollary,
\begin{corollary}\label{corollary:MCMC}
    Let $\mu$ be the unique median of $\pi$, and $X_1,\dots,X_n$ be a sample from the MCMC rule described above. Then, $\hat \mu_n\xrightarrow{p}\mu$ and for all $m=\ocal(n)$
    \[
    T_m\pow{boot}(\mu)\xrightarrow{d}\Ncal(0,1).
    \]
\end{corollary}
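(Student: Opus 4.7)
The plan is to reduce Corollary~\ref{corollary:MCMC} to Theorem~\ref{thm:markovCLT} by checking that the random-walk Metropolis--Hastings chain satisfies every clause of Assumption~\ref{assume:regenerate}. Since $\pi$ is supported on a bounded convex set $E\subset\mathbb{R}$, an affine rescaling lets me take $E\subseteq[0,1]$, which supplies the compact state space required in part~(1). Part~(3) is immediate from the hypothesis $\pi\in\Sbb_1(\mu)$ applied to $F=\pi$, while the corresponding condition on $\nu$ is part of the standing assumption.

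The crux is parts~(1) and~(2), which I would verify by upgrading the local ball condition on the proposal to a \emph{uniform} minorization of the Metropolis kernel $P$ on all of $E$. Because $\pi$ is bounded above and bounded away from zero on $E$ (using $\pi\in\Sbb_1(\mu)$ and compactness), the Metropolis acceptance ratio is uniformly bounded below by a positive constant on $E\cap B(x,\epsilon)$ for every $x\in E$. Covering the compact $E$ by finitely many $\epsilon$-balls and iterating the one-step minorization, I would obtain a constant $\delta>0$, an integer $k_0$, and a probability measure $\Psi$ such that $P^{k_0}(x,\cdot)\ge\delta\Psi(\cdot)$ for every $x\in E$. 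Consequently the entire state space is a small set, and the chain is uniformly ergodic by standard Meyn--Tweedie theory \citep{meyn_markov_2012}, which also furnishes $\sup_{x\in S}\mathbb{E}_x[\tau_S]<\infty$ and $\mathbb{E}_\nu[\tau_S]<\infty$, completing part~(1).

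Uniform ergodicity passes to the Nummelin-split chain and yields a geometric tail $\mathbb{P}_A(\tau_A>k)\le C\rho^k$ for the regeneration time at the atom $A$, so that $\mathbb{E}_A[\exp(\lambda\tau_A)]<\infty$ for every $\lambda<-\log\rho$. Since $\mathbb{C}_\lambda=2\mathbb{E}_A[\exp(\lambda\tau_A)]/\lambda$ diverges to $+\infty$ as $\lambda\downarrow 0$, I can fix $\lambda$ small enough to secure $\mathbb{C}_\lambda>1/2$, completing Assumption~\ref{assume:regenerate}(2). Theorem~\ref{thm:markovCLT} then delivers $T_m^{(boot)}(\mu)\xrightarrow{d}\mathcal{N}(0,1)$. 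The consistency $\hat\mu_n\xrightarrow{p}\mu$ comes out as a short by-product: Birkhoff's theorem gives $\bar F(t)\xrightarrow{a.s.}F(t)$ pointwise, and the strict monotonicity of $F$ near $\mu$ (from $f(\mu)>0$ in the $\Sbb_1(\mu)$ hypothesis) transfers this convergence to the generalized inverse.

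The principal obstacle I anticipate is the minorization step itself: the ball condition on $q_0$ is only a local statement, and chaining it through the Metropolis accept/reject mechanism to obtain a uniform minorization on all of $E$ requires tracking two sources of loss simultaneously, namely the accumulated rejection probabilities (including the holding atom at the current state when $W=0$) and the boundary effects where $B(x,\epsilon)$ is not contained in $E$. Once this is handled, the remaining invocations of existing regeneration theorems are essentially mechanical.
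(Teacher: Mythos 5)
Your overall strategy matches the paper's: reduce to Theorem~\ref{thm:markovCLT} by verifying Assumption~\ref{assume:regenerate}, and verify it by upgrading the local ball condition on the proposal to a uniform Doeblin minorization $P^{k_0}(x,\cdot)\ge\delta\Psi(\cdot)$ on the compact convex support $E$ via a chaining argument, then invoke Meyn--Tweedie regeneration theory. The paper does exactly this through Propositions~\ref{prop:unif_doeblin} and~\ref{prop:exp_moment_MH}, with the chaining executed by walking a ball of radius $\gamma/4$ along the segment $[x,y]\subset E$ rather than by a finite cover, but those are cosmetic variants of the same idea. Your explicit verification that one can choose $\lambda$ small enough so that $\Cbb_\lambda>1/2$, and your Birkhoff/generalized-inverse argument for $\hat\mu_n\xrightarrow{p}\mu$, are correct and in fact spell out details the paper leaves implicit.

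There is, however, one step you assert that is not available: ``\emph{$\pi$ is bounded above and bounded away from zero on $E$ (using $\pi\in\Sbb_1(\mu)$ and compactness)}.'' The class $\Sbb_1(\mu)$ is purely local --- it only says $\pi$ is Lipschitz in a neighborhood of $\mu$ --- and compactness of $E$ does not prevent $\pi$ from vanishing or becoming arbitrarily small elsewhere on $E$. Consequently you cannot conclude that the Metropolis acceptance ratio is uniformly bounded below by a positive constant. The paper sidesteps exactly this: it uses only the one-sided bound $\rho(x,y)\ge\pi(y)/\|\pi\|_\infty$ (which needs $\pi$ bounded \emph{above}, not below) to obtain the one-step minorization $P(x,\mathrm dy)\ge b\,\mathbf 1_{B(x,\varepsilon)}(y)\,\|\pi\|_\infty^{-1}\pi(y)\,\mathrm dy$, i.e.\ it minorizes against the $\pi$-weighted measure $\Phi(\mathrm dy)=\|\pi\|_\infty^{-1}\pi(y)\,\mathrm dy$ rather than against Lebesgue measure. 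The chaining then only needs $\inf_{m\in E}\Phi\bigl(B(m,\gamma/4)\bigr)>0$, which follows from $E=\operatorname{supp}(\Phi)$, absolute continuity of $\Phi$, and compactness of $E$ --- no pointwise lower bound on $\pi$ is ever used. If you replace your bounded-below claim with this minorization-against-$\Phi$ device, your argument closes the gap you flagged and becomes essentially the paper's proof.
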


\paragraph{Offline Ergodic MDP's:} Testing for the mean/median rewards of offline MDP's is an important problem in reinforcement learning \citep{sutton_reinforcement_2018}. Let $\{(X_i,r_i)\}_{i=1}^n$ be an offline sample from an MDP on state space $[0,1]^2$ under a given policy $\pi$. We will make the following assumption. The rewards are generated by a given reward function $r$. Formally, $r:[0,1]\rightarrow[0,1]$ and $r_i=r(X_i)$. 
We note that the state space can be any compact set other than $[0,1]^2$, and such compactness restriction is common in practice. One can replace this assumption with geometric ergodicity under $\pi$. We will make the following assumption.

\begin{assumption}
   The transition density of $X_i$ under the policy $\pi$ is positive and admits two continuously differentiable derivatives. The reward function $r$ is one-one and onto and admits an inverse that is twice continuously differentiable. 
\end{assumption}

Under the previous assumptions, we have the following corollary:

\begin{corollary}\label{corollary:median-reward}
    Let $\mu$ be the median reward, and $\hat \mu_n$ be its estimator. Then the m-out-of-n bootstrap estimator of $\hat \mu_n$ satisfies.
    \begin{align*}
         T_m\pow{boot}(\mu)\xrightarrow{d}\Ncal(0,1).
    \end{align*}
    It follows that with probability $0.95$
    \[
    \hat \mu_n \in\hat \mu_n\pow{boot}\pm 1.96\hat\sigma_m\pow{boot}/\sqrt{m}.
    \]
\end{corollary}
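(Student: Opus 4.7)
The plan is to reduce Corollary \ref{corollary:median-reward} to Theorem \ref{thm:markovCLT} by transforming the state chain $\{X_i\}$ into a reward chain $\{Y_i\}$ defined by $Y_i := r(X_i)$. Since $r:[0,1]\to[0,1]$ is a continuous bijection with $C^2$ inverse, it is strictly monotone, and hence $\{Y_i\}$ is itself a Markov chain on $[0,1]$ whose stationary distribution has median equal to the median reward $\mu$. The task is therefore to verify that $\{Y_i\}$ satisfies Assumption \ref{assume:regenerate}, after which Theorem \ref{thm:markovCLT} with $p=1/2$ applies immediately.

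First I would establish that $\{Y_i\}$ inherits positive Harris recurrence, aperiodicity, and $\psi$-irreducibility from $\{X_i\}$, since these properties are preserved under the homeomorphism $r$. Because the transition density of $\{X_i\}$ is strictly positive and $C^2$ on the compact set $[0,1]$, it is bounded below by some $\epsilon>0$, which yields a uniform minorization $P(x,\cdot)\ge \epsilon\,\text{Leb}(\cdot)$ over the whole state space. Consequently $[0,1]$ is itself a small set and the chain is Doeblin (uniformly) ergodic. Standard regenerative results in \citet{meyn_markov_2012} then supply the exponential moment $\mathbb{E}_A[\exp(\lambda \tau_A)]<\infty$ for all sufficiently small $\lambda>0$, with the constant $\Cbb_\lambda$ in Assumption \ref{assume:regenerate} tunable through the choice of $\lambda$. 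Pushing this structure forward through the bijection $r$ produces the corresponding small set and exponential return-time bound for $\{Y_i\}$.

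Next I would verify the smoothness assumption $F_Y,\nu_Y\in\Sbb_1(\mu)$. Writing $f_X$ for the stationary density of $\{X_i\}$, the change-of-variables formula gives
\[
  f_Y(y) = f_X\bigl(r^{-1}(y)\bigr)\,\bigl|(r^{-1})'(y)\bigr|.
\]
Since $f_X$ is positive and continuously differentiable (it inherits this from the $C^2$ transition density via the integral fixed-point equation), and $(r^{-1})'$ is $C^1$ by hypothesis, $f_Y$ is continuously differentiable and strictly positive in a neighbourhood of $\mu$. The identical argument handles $\nu$, giving $\nu_Y\in\Sbb_1(\mu)$. With all three items of Assumption \ref{assume:regenerate} verified for $\{Y_i\}$, Theorem \ref{thm:markovCLT} delivers $T_m\pow{boot}(\mu)\xrightarrow{d}\Ncal(0,1)$. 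The confidence interval then follows by inverting the CLT: with asymptotic probability $0.95$, $|\sqrt{m}(\hat\mu_m\pow{boot}-\hat\mu_n)/\hat\sigma_m\pow{boot}|\le 1.96$, which rearranges to $\hat\mu_n\in\hat\mu_m\pow{boot}\pm 1.96\,\hat\sigma_m\pow{boot}/\sqrt{m}$.

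The main obstacle is the second step: transferring the minorization and drift structure of $\{X_i\}$ through $r$ and, in particular, controlling $\mathbb{E}_A[\exp(\lambda\tau_A)]$ for the transformed kernel. The compactness of $[0,1]$ together with a strictly positive continuous transition density is what rescues this step—the whole state space becomes a Doeblin small set, so uniform ergodicity (and hence the required exponential return-time moment) is automatic and is preserved by the bijection. Everything else reduces to routine bookkeeping in moving between the $X$- and $Y$-chains.
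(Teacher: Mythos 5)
Your proposal is correct and reaches the same endpoint by a slightly more direct route than the paper. Both arguments hinge on the same structural observation: a $C^2$, strictly positive transition density on the compact state space $[0,1]$ is bounded below by some $\kappa>0$, so the whole space is a Doeblin small set. From there the paths diverge. You go directly from the Doeblin minorization to uniform ergodicity, which gives the exponential return-time moment of Assumption~\ref{assume:regenerate} essentially for free, and then push the structure forward through the bijection $r$. The paper instead passes through mixing theory: it bounds the weak-dependence coefficients $\bar\theta_{i,j}$ by a product of Dobrushin coefficients via Hajnal's theorem, shows $\bar\theta_{i,j}\le(1-\kappa)^{j-i}$, deduces exponential $\phi$-mixing and hence geometric ergodicity, and then invokes the remark following Assumption~\ref{assume:regenerate} that geometrically ergodic chains satisfy it. Your route is cleaner and skips the detour through Dobrushin/Hajnal machinery, at the cost of relying on the (standard but unproved here) implication from Doeblin minorization to exponential regeneration moments; the paper's route is more self-contained in mixing-coefficient terms but longer. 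Both treat the transfer to the reward chain $Y_i = r(X_i)$ identically—as an invertibility argument—and both verify the smoothness of the stationary density of $Y$ via change of variables, which is routine given that $r^{-1}$ is $C^2$.

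One small caveat worth noting in your writeup: the paper asserts (but does not prove here) the remark that geometric ergodicity implies the exponential moment condition with $\Cbb_\lambda>1/2$, citing \cite{bertail_rademacher_2019}. Your argument makes an analogous leap from Doeblin to exponential return moments via \cite{meyn_markov_2012}. Either is acceptable at the level of a corollary, but if you wanted to be fully rigorous you would need to check that $\lambda$ can be chosen so that $\Cbb_\lambda = 2\,\Ebb_A[\exp(\lambda\tau_A)]/\lambda > 1/2$; since $\Ebb_A[\exp(\lambda\tau_A)]\ge 1$, taking $\lambda<4$ suffices, so this is indeed automatic, but it should be stated.
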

Corollary \ref{corollary:median-reward} is to the best of our knowledge, is the first  parameter free confidence interval for the median rewards for offline MDP's. 

\section{Conclusion}\label{sec:conclusions}





This paper delivers a unified second-order theory for the \emph{Studentized m-out-of-n bootstrap}.  
\textbf{(i)} We prove the first parameter-free central limit theorem for the resampled quantile estimator under only a finite-moment assumption, establishing rigorous guarantees for a tool already ubiquitous in practice.  
\textbf{(ii)} Leveraging a novel binomial representation, we obtain an exact $O(m^{-1/2})$ Edgeworth expansion together with a matching Berry–Esseen bound, thereby pinpointing when subsampling genuinely sharpens inference.  
\textbf{(iii)} We extend these results to regenerative Markov chains, providing the first bootstrap-based confidence guarantees for sequential data and illustrating the framework through applications such as Metropolis Hastings and offline MDPs.  
\textbf{(iv)} Throughout, we rectify long-standing errors in prior variance formulas and Edgeworth analyses, and we give principled guidance on choosing $m$.

\textbf{Limitations and outlook.} Our guarantees still assume mild smoothness and leave open the precise heavy-tail boundary, non-stationary processes, and adaptive variants remain compelling directions for future study. The question regarding the joint normality of the estimators also remain an important open question. Finally, it would also be nice to have an Edgeworth expansion of the Studentized bootstrap estimator for Markov chains. However, the theory of Edgeworth expansion on lattices for regenerating Markov chains seems to be sparse (in particular, we could not find a usable counterpart to Proposition \ref{lemma:lattice-edgeworth} for Markov chains). Deriving such a result is out of scope for this (already lengthy) paper, and we plan to study it in a future work.

 Another important assumption is that of regeneration. We note that the exponential moment condition in Assumption 1 is equivalent to the more classical geometric ergodicity of Markov chains (see Theorem 16.0.2 in \cite{meyn_markov_2012}). It is known that a weaker polynomial moment condition is equivalent to arithmatically mixing for Markov chains, and a corresponding result in this regime seems plausible, and warrants future investigation.

 Finally, a natural extension of this work involves applying our framework to other estimators, particularly U- and M-estimators, which are commonly used in bootstrapping. Certain classes of M-estimation problems, such as shrinkage problems (see Chapter 1 in \citeauthor{hall2013bootstrap} which yields the median) or absolute-deviation loss functions, naturally lead to quantile estimation. In other cases, where the M-estimator depends on a function of quantiles, one may appeal to classical tools such as the delta method or CLT for M-estimators (see Theorem 5.21 in \cite{van_der_vaart_asymptotic_2000}) to establish asymptotic normality. That said, a comprehensive theory for these broader classes of estimators lies beyond the scope of the current work, but we view it as a compelling direction for future research.

\section{Acknowledgment}

The first author thanks Jorge Loria, Ksheera Sagar, and Ziwei Su for carefully going over an earlier edition of the draft and providing useful comments. The first author acknowledges the IEMS Alumni Fellowship at Northwestern University for financial support during which this research was conducted. The authors acknowledge the five anonymous reviewers for their useful comments and suggestions which significantly improved the readability of the paper.

\newpage

\onecolumn


\appendix


   
    

\section{Proof of Theorem \ref{thm:main}}

\begin{proof}

We first show that
\[
\lp\hat{\sigma}_m\pow{boot}\rp^{2}  \xrightarrow{\ a.s.\ } \frac{q(1-q)}{f^2(\mu)}.
\]

Without losing generality, let $\mu$ be the unique median, let $\epsilon > 0$ and let $q=1/2$. The proof follows very similarly in other cases. Let $X_1,X_2,\ldots,X_n$ be i.i.d. random variables with their distributions satisfying the hypothesis of the theorem.

\noindent \textbf{Step 1.}
For any $\epsilon>0$, we first observe that 
\begin{align*}
    \sum_{i\geq 0}\prob\lp |X_i|>i^{1/\alpha}\epsilon \rp =  \sum_{i\geq 0}\prob\lp |X_1|>i^{1/\alpha}\epsilon \rp=\sum_{i\geq 0}\prob\lp |X_1|^\alpha>i\epsilon^\alpha \rp\leq \expec[|X_1|^\alpha]<\infty.
\end{align*}
Using Borel-Cantelli lemma, $X_i>i^{1/\alpha}\epsilon$ only finitely many times with probability $1$. Therefore, 
\begin{align*}
        \frac{|X_{(n)}| + |X_{(1)}|}{n^{1/\alpha}} \xrightarrow{\ a.s.\ } 0. \numberthis\label{eq:real-assumption}
\end{align*}
We next establish that $m(\hat \mu_m\pow{boot} - \hat \mu_n)^2$ is uniformly integrable. In order to do that, we show $\expec_n\lb \sqrt{m}(\hat \mu_m\pow{boot} - \hat \mu_n)\rb^{2+\delta}<\infty$. Observe that 
\begin{align*}
    \expec_n\lv \sqrt{m}(\hat \mu_m\pow{boot} - \hat \mu_n) \rv^{2+\delta} & = (1+\delta)\int_0^\infty t^{1+\delta}\prob_n\lp \sqrt{m}|\hat \mu_m\pow{boot} - \hat \mu_n| > t \rp dt.
\end{align*}
 To complete the proof it is now enough that the integral is finite. Consider first the case $\hat \mu_m\pow{boot} - \hat \mu_n>0$, and observe that
\begin{align*}
\{  \sqrt{m}(\hat \mu_m\pow{boot} - \hat \mu_n) > & t\}   = \lc  \hat \mu_m\pow{boot} > \hat \mu_n + \frac{t}{\sqrt{m}} \rc\\
& = \lc \frac{1}{2}-\frac{1}{2m}>F_n\pow m(\hat \mu_n+{t}/{\sqrt{m}})\rc\\
& = \lc \frac{1}{2}-\frac{1}{2m}-\bar F (\hat \mu_n+{t}/{\sqrt{m}})>F_n\pow m(\hat \mu_n+{t}/{\sqrt{m}})-\bar F(\hat \mu_n+{t}/{\sqrt{m}})\rc,\numberthis\label{eq:prfmain-eq1}
\end{align*}
where we recall from that notations section that $\bar F(\cdot)=\sum_{i=1}^n\indicator[X_i\leq \cdot]/n$ is the empirical CDF. Let $\constant=1/\alpha+1/2$. We divide the proof into two cases. 

\noindent\textbf{Step II ($t\in[1,\constant\sqrt{\log m}]$):} We begin by analysing the left hand side of the event described in eq. (\ref{eq:prfmain-eq1}).

\begin{align*}
    \frac{1}{2}-\frac{1}{2m}-\bar F (\hat \mu_n+{t}/{\sqrt{m}}) & = \underbrace{\frac{1}{2}-\frac{1}{2m}-\bar F (\hat \mu_n) }_{=: A_{m,n}}+ \underbrace{F (\hat \mu_n)-F(\hat \mu_n+{t}/{\sqrt{m}})}_{C_{m,n}}\\
    & \qquad + \underbrace{\bar F (\hat \mu_n) - F (\hat \mu_n) + F(\hat \mu_n+{t}/{\sqrt{m}})-\bar F (\hat \mu_n+{t}/{\sqrt{m}})}_{B_{m,n}}
\end{align*}

Using \cite[Lemma 1]{bahadur_note_1966}, we obtain $|B_{m,n}|\leq 1/\sqrt{n\log n}$ almost everywhere. Since $ m\leq n $, it follows that

\[
|B_{m,n}|\leq \frac{1}{\sqrt{m\log m}} \quad  \text{ almost everywhere. }
\]

Turning to $C_{m,n}$ and using Taylor series expansion, we get that 
\begin{align*}
    C_{m,n} & = -\frac{t}{\sqrt{m}}f(\hat \mu_n)+o(t/m)\\
    & = -\frac{t}{\sqrt{m}}f(\mu) + \frac{t}{\sqrt{m}}(\mu-\hat \mu_n)f'(\mu)+{  \ocal\lp t/m\rp}.\\
    & = -\frac{t}{\sqrt{m}}f(\mu) + \frac{t}{\sqrt{m}}(\mu-\hat \mu_n)f'(\mu)+{  \Ocal\lp t/\sqrt{m}\rp}.\numberthis\label{eq:prfmain-eq2}
\end{align*}
Recall that under the hypothesis of the theorem, $f$ is continuously differentiable around the median. We first show that 
\[
|\hat \mu_n-\mu|\leq 1/\sqrt{n\log n} \text{\quad almost everywhere.}
\] 

It follows from the main theorem of section 2.3.2 in \cite{serfling_approximation_2009} that
\begin{align*}
    \prob\lp |\hat \mu_n-\mu|>\epsilon \rp\leq 2e^{-2n\delta_\epsilon^2}
\end{align*}
where $\delta_\epsilon = \min\lc F(\mu+\epsilon)-1/2,1/2-F(\mu-\epsilon) \rc$. Using Taylor series expansion on $F$, we get \[\delta_\epsilon = \epsilon f(\mu)+\Ocal(\epsilon^2).\]
Now, set $\epsilon = 1/\sqrt{n\log n}$ and take sum over all $n\geq 1$
\begin{align*}
    \sum_{n\geq 1}\prob\lp |\hat \mu_n-\mu|>1/\sqrt{n\log n} \rp\leq \sum_{n\geq1}\lp\frac{2}{n^2}+\Ocal\lp\frac{1}{n^n}\rp\rp<\infty.
\end{align*}
Now using Borel-Cantelli lemma, it follows that $|\hat \mu_n-\mu|\leq 1/\sqrt{n\log n}$  {in}finitely often almost everywhere. Substituting this in eq. (\ref{eq:prfmain-eq2}), we get $|C_{m,n}|\leq \Ocal(t/\sqrt{m})$. 

{Turning to $A_{m,n}$, {observe that $\mu$ is the median. Therefore, $F(\mu)=1/2$}. We then get
\begin{align*}
   \frac{1}{2} -\frac{1}{2m} - \bar F(\hat \mu_n) & =  -\frac{1}{2m}+F(\mu)- \bar F(\hat \mu_n)\\
   & =-\frac{1}{2m} + \underbrace{(\hat \mu_n-\mu)f(\chi)}_{=:\text{ Term 1}}+\underbrace{F(\hat \mu_n)- \bar F(\hat \mu_n)}_{=: \text{ Term 2}}
\end{align*}}
where $\chi \in \bigl[\mu - |\mu-\hat\mu_n|,\,\mu + |\mu-\hat\mu_n|\bigr]$. 
It follows that
$f(\chi) < L$
almost everywhere for all values of \(n\) large enough for some non-random constant \(L\).

 Furthermore, since
\begin{align*}
|\hat \mu_n-\mu| &\leq 1/\sqrt{n\log n}\numberthis\label{eq:prfmain-eq6}
\end{align*}
almost everywhere, it follows that 
\begin{align*}
\text{Term 1} \leq L/\sqrt{n\log n}
\end{align*}
almost everywhere for some bounded constant \(L\) that depends only on \(f\) and \(\mu\).
{ Combining,
\begin{align*}
|A_{m,n}| &\leq \Ocal(1/m)
\end{align*}}
We move to Term 2.

Observe from Dvoretzky-Kiefer-Wolfowitz's theorem (Theorem A in Section 2.1.5 of \cite{serfling_approximation_2009}) that 
\begin{align*}
    \prob\lp \sup_{x}|F(x) - \bar F(x)| >\epsilon\rp\leq \Cbb e^{-2n\epsilon^2}.
\end{align*}
We can now set $\epsilon=1/\sqrt{n\log n}$ similarly as before to get 
\begin{align*}
    \prob\lp \sup_{x}|F(x) - \bar F(x)| >\frac{1}{\sqrt{n\log n}}\rp\leq \frac{2}{n^2}.
\end{align*}
Using Borel-Cantelli lemma, it now follows that $\sup_x|F(x)-\bar F(x)|<1/\sqrt{n\log n}$ almost everywhere. This shows that $|B_{m,n}|\leq \Ocal\lp 1/\sqrt{n\log n}\rp\leq\Ocal\lp 1/\sqrt{m\log m}\rp  { \leq \Ocal(1/\sqrt{m})} $. Combining all three steps, we get that
\begin{align*}
    \frac{1}{2}-\frac{1}{2m}-\bar F (\hat \mu_n+{t}/{\sqrt{m}}) & = -\frac{t}{\sqrt{m}}+{  \Ocal\lp\frac{t}{\sqrt{m}}\rp}. \numberthis\label{eq:prfmain-eq5}
\end{align*}
For the sake of convenience, we denote $\hat \mu_n+{t}/{\sqrt{m}}$ by $t_m$. It now follows from eq. (\ref{eq:prfmain-eq1}) that
\begin{align*}
    \{  \sqrt{m}(\hat \mu_m\pow{boot} - \hat \mu_n) > & t\}\subseteq \lc -\frac{ t}{\sqrt{m}}+{  \frac{t\xi}{\sqrt{m}}} \geq F_n\pow m(t_m)-\bar F(t_m)\rc\numberthis\label{eq:prfmain-eq4}
\end{align*}
for some  {finite} real positive number $\xi$  {  which only depends on the sample $X_1,\dots,X_n$, and hence is constant under $\prob_n$}.
We now state the following lemma which is proved in the appendix
\begin{lemma}\label{lemma:fourth-moment}
    Under the conditions of Theorem \ref{thm:main}
    \begin{align*}
        \prob_n\left( -\frac{t}{\sqrt{m}} + \frac{t\xi}{\sqrt{m}} \geq F_n^m(t_m) - \bar F(t_m) \right)
        \leq \frac{3}{t^4\left(1 - \xi\right)^4}
    \end{align*}
\end{lemma}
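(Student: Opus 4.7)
The plan is to recognize that, conditional on the original sample $X_1,\dots,X_n$, the resampled count $m F_n\pow{m}(t_m)$ is distributed as $\Binom(m,p)$ with $p := \bar F(t_m)$, and then to apply a fourth-moment Chebyshev (Markov) bound to the centered empirical process. This reduces the lemma to a short binomial moment calculation.

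First I would rewrite the event of interest as the lower-tail deviation $\{F_n\pow{m}(t_m) - \bar F(t_m) \le -t(1-\xi)/\sqrt{m}\}$ and bound it above by the two-sided event $\{|F_n\pow{m}(t_m) - \bar F(t_m)| \ge t(1-\xi)/\sqrt{m}\}$. This loses nothing but permits a symmetric moment estimate. Throughout, I would treat $\xi$ as a deterministic positive constant under $\prob_n$, matching the footnote in the surrounding derivation; the claimed bound is only nontrivial when $\xi<1$, so this restriction is implicit in the statement.

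Next I would compute the conditional fourth central moment. Setting $Z_i := \mathbf{1}\{X_i^*\le t_m\}-p$, the $Z_i$ are i.i.d.\ centered Bernoulli residuals with $\expec_n[Z_1^2]=p(1-p)\le 1$ and $\expec_n[Z_1^4]=p(1-p)\bigl[p^3+(1-p)^3\bigr]\le p(1-p)\le 1$. By independence only paired products survive, giving
\[
\expec_n\Bigl[\Bigl(\sum_{i=1}^m Z_i\Bigr)^4\Bigr] = m\,\expec_n[Z_1^4] + 3m(m-1)\bigl(\expec_n[Z_1^2]\bigr)^2 \le m + 3m(m-1) \le 3m^2.
\]
Dividing by $m^4$ yields $\expec_n\bigl[(F_n\pow{m}(t_m)-\bar F(t_m))^4\bigr] \le 3/m^2$, and Markov's inequality in the fourth power then gives
\[
\prob_n\!\left(|F_n\pow{m}(t_m)-\bar F(t_m)| \ge \frac{t(1-\xi)}{\sqrt{m}}\right) \le \frac{3/m^2}{t^4(1-\xi)^4/m^2} = \frac{3}{t^4(1-\xi)^4},
\]
which is the claim.

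There is no serious obstacle: this is a textbook Chebyshev-in-the-fourth-power argument tailored to a Bernoulli sum. The only deliberate choice is the \emph{fourth} moment rather than the second — a second-moment bound would yield only $1/(t^2(1-\xi)^2)$, which is insufficient for integrability of $t^{1+\delta}\,\prob_n(\cdot)$ over $t\in[1,\constant\sqrt{\log m}]$ in the outer proof of Theorem \ref{thm:main}, whereas the $t^{-4}$ tail obtained here is exactly what is needed to conclude uniform integrability of $m(\hat\mu_m\pow{boot}-\hat\mu_n)^2$ in the medium-deviation regime.
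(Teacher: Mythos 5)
Your proof is correct and follows essentially the same route as the paper's: both pass from the one-sided event to the two-sided one, apply Markov's inequality with the fourth power, and bound the conditional fourth central moment of the binomial count $m F_n\pow{m}(t_m)$ by $3m^2$. The only cosmetic difference is that you derive the binomial fourth-moment bound from scratch via the $Z_i$ decomposition, whereas the paper quotes it as a known fact; the outcome and logic are identical.
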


From equation \cref{eq:prfmain-eq1}
\begin{align*}
    \prob_n\lp \sqrt{m}(\hat \mu_m\pow{boot} - \hat \mu_n) >  t\rp & \leq  \prob_n\lp -\frac{t}{\sqrt{m}}+\frac{\xi}{\sqrt{m\log m}} \geq F_n\pow m(t_m)-\bar F(t_m)\rp,
\end{align*}
and the rest of the arguments follow.

\noindent\textbf{Step III (${   t>\constant\sqrt{\log m}}$)} For $t > c(\alpha)\,(\log m)^{1/2}$, and for all large $m$ almost surely, it follows by using equations \ref{eq:prfmain-eq5} and \ref{eq:prfmain-eq4} that,
\begin{small}
    \begin{align}
& \prob_n\bigl(\sqrt{m}\,(\mu_m\pow{boot} - \hat\mu_n) > t\bigr)\label{eq:varcon1}\\
&  \le  \prob_n\bigl(\sqrt{m}\,(\mu_m\pow{boot} - \hat\mu_n) > c(\alpha)\,(\log m)^{1/2}\bigr)\notag\\
&  \le  \prob_n\Bigl(F_{n}^{(m)}\bigl(\hat\mu_n + c(\alpha)\,(\log m)^{1/2}\,m^{-1/2}\bigr)- \bar F\bigl(\hat\mu_n + c(\alpha)\,(\log m)^{1/2}\,m^{-1/2}\bigr)\notag\\
&\qquad \qquad\qquad \qquad  \le  -\,e\,c(\alpha)\,(\log m)^{1/2}\,m^{-1/2}\Bigr).\notag
\end{align}
\end{small}
\noindent
Choose $c(\alpha)  =  \frac{1}{\alpha}  +  \tfrac{1}{2}$. 
The following lemma provides an upper bound of the term in the right hand side of the previous equation.
\begin{lemma}\label{lemma:F_nmconcbd}
    Under the conditions of Theorem \ref{thm:main},
    \begin{align*}
 &       \prob_n\Bigl(F_{n}^{(m)}\bigl(\hat\mu_n + c(\alpha)\,(\log m)^{1/2}\,m^{-1/2}\bigr)- \bar F\bigl(\hat\mu_n + c(\alpha)\,(\log m)^{1/2}\,m^{-1/2}\bigr)\notag\\
&\qquad \qquad\qquad \qquad  \le  -\,e\,c(\alpha)\,(\log m)^{1/2}\,m^{-1/2}\Bigr).\notag\\
&\quad \leq  \mathcal{O}\bigl(m^{-\,\bigl(\frac{1}{\alpha}+\frac{1}{2}\bigr)\,(2 + \delta)}\bigr)
    \end{align*}
\end{lemma}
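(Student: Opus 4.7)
The plan is to exploit the conditional Binomial structure of the bootstrap empirical CDF. Conditionally on the original sample $X_1,\ldots,X_n$, the value $\bar F(t_m)$ with $t_m := \hat\mu_n + c(\alpha)(\log m)^{1/2}m^{-1/2}$ is deterministic, and the indicators $\indicator[X_i^* \leq t_m]$ are i.i.d.\ Bernoulli with mean $\bar F(t_m)$ because $X_1^*,\ldots,X_m^* \overset{i.i.d}{\sim}\bar F$. Hence, under $\prob_n$, the quantity $mF_n^{(m)}(t_m)$ is exactly $\text{Binomial}(m,\bar F(t_m))$. I would therefore invoke a one-sided Hoeffding inequality to conclude that for every $\epsilon>0$,
\begin{align*}
\prob_n\bigl(F_n^{(m)}(t_m) - \bar F(t_m) \leq -\epsilon\bigr) \leq \exp(-2m\epsilon^2).
\end{align*}

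The second step is pure substitution: setting $\epsilon = e\,c(\alpha)(\log m)^{1/2}m^{-1/2}$ gives
\begin{align*}
\exp(-2m\epsilon^2) \;=\; \exp\bigl(-2e^2 c(\alpha)^2 \log m\bigr) \;=\; m^{-2e^2 c(\alpha)^2}.
\end{align*}
To match the stated rate it suffices to check $2e^2 c(\alpha)^2 \geq c(\alpha)(2+\delta)$, i.e., $2 e^2 c(\alpha) \geq 2+\delta$. Since $c(\alpha) = 1/\alpha + 1/2$ and the hypothesis $\expec|X_1|^\alpha<\infty$ implies $\expec|X_1|^{\alpha'}<\infty$ for every $\alpha'\in(0,\alpha]$ via $|x|^{\alpha'}\leq 1+|x|^\alpha$, we may without loss of generality take $\alpha$ small enough that $c(\alpha)$ is large and then pick $\delta>0$ accordingly. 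This produces the desired $\mathcal O\bigl(m^{-(1/\alpha+1/2)(2+\delta)}\bigr)$ bound.

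The main obstacle—such as it is—is purely bookkeeping. One has to verify that the constant $e$ appearing in the lemma (inherited from the linearisation in Step~II that led to equation \eqref{eq:prfmain-eq5}) is genuinely bounded away from zero for all $m$ sufficiently large, so that the Hoeffding exponent grows like a positive constant times $\log m$. Crucially, the inequality stated in the lemma is already in the precise form Hoeffding digests (a lower tail of a Binomial proportion around its mean), so no further rewriting or chaining argument is required; the only care needed is that $t_m$ is measurable with respect to the original sample and therefore acts as a fixed threshold under $\prob_n$.
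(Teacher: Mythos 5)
Your proposal is correct and takes essentially the same route as the paper: both rest on the observation that, under $\prob_n$, the quantity $m F_n^{(m)}(t_m)$ is exactly $\mathrm{Binomial}(m,\bar F(t_m))$ because $t_m$ and $\bar F(t_m)$ are deterministic given the original sample, and both then invoke a Hoeffding--type tail bound. The difference is that the paper cites a specific Hoeffding--type inequality (Lemma~3.1 of \citet{singh_asymptotic_1981}) with a particular parameterization $(p,B,Z,D)$ tuned to land directly on the claimed exponent, whereas you apply the textbook one-sided Hoeffding bound $\prob_n(\hat p - p \le -\epsilon)\le \exp(-2m\epsilon^2)$ and then verify by a short bookkeeping step that $m^{-2e^2 c(\alpha)^2}\le m^{-c(\alpha)(2+\delta)}$. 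Your version is more elementary and self-contained.

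One small remark: the ``without loss of generality'' step in which you shrink $\alpha$ is unnecessary. Since $c(\alpha)=1/\alpha+1/2 > 1/2$ for every $\alpha>0$, one has $2e^2 c(\alpha) > e^2 \approx 7.39$, so the required inequality $2e^2 c(\alpha)\ge 2+\delta$ holds as stated for any $\delta\in(0,e^2-2)$; and $\delta>0$ is a free small parameter introduced in the proof of Theorem~\ref{thm:main} to establish the $(2+\delta)$-th conditional moment bound, so it may simply be fixed small enough from the outset. Moreover, decreasing $\alpha$ changes the claimed exponent $(1/\alpha+1/2)(2+\delta)$ itself (it becomes larger, i.e.\ a stronger conclusion), so while the WLOG is not incorrect, it is simpler and cleaner to observe the inequality holds for the original $\alpha$ directly.
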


Hence, for large $m$,
\[
\int_{c(\alpha)\,(\log m)^{1/2}}^{m^{1/\alpha + 1/2}} t^{\,1+\delta} 
\prob_n\bigl(\sqrt{m}\,(\mu_m\pow{boot} - \hat\mu_n) > t\bigr) \mathrm{d}t
 = \mathcal{O}(1)\quad\text{a.s.}
\]

\noindent
Using the previous fact and \cref{eq:real-assumption} from Step 1, we have
\[
\prob_n \bigl(\sqrt{m}\,(\mu_m\pow{boot} - \hat\mu_n) > m^{1/2 + 1/\alpha}\bigr) 
 =  0 
\quad\text{a.s.\ for all large }n.
\]
Thus, we have proved our result for $\sqrt{m}\,(\mu_m\pow{boot} - \mu)$. 
Similar arguments handle $-\sqrt{m}\,(\mu_m\pow{boot} - \mu)$. This completes the proof.

Next, we state the following Lemma which is proved below
\begin{lemma}\label{lemma:sakovbickel}
Let $\mu$ be the unique $p$-th quantile of a distribution $F$ with continuous derivative $f$ around $\mu$. Then, 
    \begin{align*}
        \sqrt{m}\,\bigl(\hat\mu_m^{(boot)} - \hat\mu_n\bigr)
   \xrightarrow{\ d\ } 
  \Ncal\lp 0,\tfrac{p(1-p)}{\,f^2(\mu)}\rp.
    \end{align*}
\end{lemma}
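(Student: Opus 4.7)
The plan is to convert the quantile event into a tail event for a conditional binomial sum and then invoke a standard CLT for the Binomial in tandem with the a.s.\ controls on $\bar F - F$ and $\hat\mu_n - \mu$ that were already established in Steps~I--II of the proof of Theorem~\ref{thm:main}. The starting observation is that, from the definition $\hat\mu_m^{(boot)} = \inf\{x: F_n^{(m)}(x)\geq p\}$, for every fixed $t\in\mathbb{R}$,
\[
  \prob_n\!\lp\sqrt{m}\bigl(\hat\mu_m^{(boot)}-\hat\mu_n\bigr)\leq t\rp
   =  \prob_n\!\lp F_n^{(m)}\bigl(\hat\mu_n + t/\sqrt{m}\bigr) \geq p - \tfrac{1}{2m}\rp,
\]
where the $1/(2m)$ adjustment handles the lattice/discreteness issue that already appeared in \cref{eq:prfmain-eq1}. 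Conditional on the original sample, the quantity $N_m(t) := m F_n^{(m)}\bigl(\hat\mu_n + t/\sqrt{m}\bigr)$ is distributed as $\mathrm{Bin}\bigl(m,\,p_{m,n}(t)\bigr)$ with success probability $p_{m,n}(t) := \bar F\bigl(\hat\mu_n + t/\sqrt{m}\bigr)$, so the question reduces to a standard binomial tail approximation with a data-dependent parameter.

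Next I would control $p_{m,n}(t)$ on an almost-sure event using exactly the two inputs already justified in Step~II: (i) the DKW bound $\sup_x|\bar F(x)-F(x)| = \Ocal(1/\sqrt{n\log n})$ a.s., and (ii) the Bahadur--Serfling bound $|\hat\mu_n - \mu| = \Ocal(1/\sqrt{n\log n})$ a.s. A one-term Taylor expansion of $F$ about $\mu$, combined with $F(\mu)=p$ and continuity of $f$ near $\mu$, then yields, almost surely,
\[
  p_{m,n}(t) = p + \frac{t\,f(\mu)}{\sqrt{m}} + \Ocal\!\lp\frac{1}{\sqrt{n\log n}}\rp + \ocal\!\lp\frac{1}{\sqrt{m}}\rp.
\]
Standardising the threshold $mp$ by the conditional mean and variance of $N_m(t)$ gives
\[
  \frac{mp - m\,p_{m,n}(t)}{\sqrt{m\,p_{m,n}(t)(1-p_{m,n}(t))}}
   =  -\frac{t\,f(\mu)}{\sqrt{p(1-p)}} + \Ocal\!\lp\sqrt{m/(n\log n)}\rp,
\]
and the hypothesis $m=\ocal(n)$ is precisely what kills the remainder, sending the right-hand side a.s.\ to the deterministic constant $-t f(\mu)/\sqrt{p(1-p)}$.

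Finally, I would apply the classical Berry--Esseen/CLT for $\mathrm{Bin}(m,p_{m,n}(t))$ conditional on the sample: since $p_{m,n}(t) \to p\in(0,1)$ almost surely, the Lindeberg condition is trivially satisfied and the standardised binomial converges in distribution (under $\prob_n$) to $\Ncal(0,1)$. A conditional Slutsky step combining this with the deterministic threshold computed above yields
\[
  \prob_n\!\lp\sqrt{m}(\hat\mu_m^{(boot)}-\hat\mu_n)\leq t\rp
   \xrightarrow{\ a.s.\ }  1 - \Phi\!\lp-\frac{t\,f(\mu)}{\sqrt{p(1-p)}}\rp
   =  \Phi\!\lp\frac{t\,f(\mu)}{\sqrt{p(1-p)}}\rp,
\]
which is the CDF of $\Ncal\bigl(0,\,p(1-p)/f^2(\mu)\bigr)$ evaluated at $t$. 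Since $t$ is an arbitrary continuity point, this establishes the desired convergence in distribution.

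The main technical obstacle is the interplay between randomness of the parameter $p_{m,n}(t)$ (which depends on the original sample through both $\hat\mu_n$ and $\bar F$) and the conditional CLT. Concretely, one must argue on a single a.s.\ event that simultaneously carries the DKW and Bahadur controls, and verify that the $\sqrt{m/n}$ remainder from linearising $\bar F$ is asymptotically negligible; this is exactly where the sub-linear regime $m=\ocal(n)$ is indispensable and cannot be weakened. All remaining steps are standard Taylor expansions and Slutsky-type manipulations, and re-use the concentration bounds already assembled earlier in the proof of Theorem~\ref{thm:main}.
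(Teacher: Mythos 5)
Your proposal is correct and follows essentially the same route as the paper's own proof: reduce the quantile event to a conditional binomial tail via $mF_n^{(m)}(\hat\mu_n+t/\sqrt m)\sim\mathrm{Bin}(m,\bar F(\hat\mu_n+t/\sqrt m))$, control the data-dependent success probability by Taylor-expanding $F$ near $\mu$ together with $\hat\mu_n\to\mu$ and $\bar F\to F$, apply a conditional CLT for the standardized binomial, and finish with Slutsky. The one small point where you are slightly more careful than the paper is in bounding $\bar F(t_m)-F(t_m)$: you explicitly invoke the DKW and Bahadur--Serfling bounds and track the resulting $\Ocal(\sqrt{m/(n\log n)})$ remainder, whereas the paper attributes a rate directly to the strong law of large numbers; both treatments are absorbed by the hypothesis $m=\ocal(n)$, so the conclusion is unaffected.
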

Using this lemma, and the fact that 
\[
\lp\hat{\sigma}_m\pow{boot}\rp^{2}  \xrightarrow{\ a.s.\ } \frac{q(1-q)}{f^2(\mu)}
\]
we have via Slutsky's theorem
\begin{align*}
    \frac{\sqrt{m}\,\bigl(\hat\mu_m^{(boot)} - \hat\mu_n\bigr)}{\lp\hat{\sigma}_m\pow{boot}\rp}
   \xrightarrow{\ d\ } 
  \Ncal\lp 0,1\rp.
\end{align*}
This proves our Theorem.

\end{proof}

We now prove Lemmas \ref{lemma:fourth-moment}, \ref{lemma:F_nmconcbd}, and \ref{lemma:sakovbickel}.

\subsection{Proof of Lemma \ref{lemma:fourth-moment}}
\begin{proof}
{      Recall that $\xi$ is a constant given the sample. Using the conditional version of Markov's inequality we get that 
\begin{align*}
    \prob_n\lp -\frac{t}{\sqrt{m}}+\frac{t\xi}{\sqrt{m}} \geq F_n\pow m(t_m)-\bar F(t_m)\rp & \leq \frac{\expec_n\lb F_n\pow m(t_m)-\bar F(t_m) \rb^4}{\lp{t}/{\sqrt{m}}-{t\xi}/{\sqrt{m}}\rp^4}\\
    & =\frac{\expec_n\lb \sqrt{m}\lp F_n\pow m(t_m)-\bar F(t_m) \rp\rb^4}{\lp{t}-{t\xi}\rp^4}. \numberthis\label{eq:prfmain-eq3}
\end{align*}

Observe that the bootstrapped sample $X_1^*,\dots,X_m^*$ is an i.i.d. sample from the empirical distribution $\bar F$. Therefore, conditioned on the full sample $mF_n\pow m(x) \sim Binomial(m,\bar F(x))$. Recall that if $X\sim Binomial(n,p)$, $\expec[X-np]^4\leq 3n^2$. Using this fact in eq. (\ref{eq:prfmain-eq3}), we get 
\begin{align*}
    \prob_n\lp \sqrt{m}(\hat \mu_m\pow{boot} - \hat \mu_n) >  t\rp & \leq \frac{3}{t^4\lp1-{\xi}\rp^4}.
\end{align*}}

This completes the proof.
\end{proof}

\subsection{Proof of Lemma \ref{lemma:F_nmconcbd}}
\begin{proof}
Recall that we were required to bound 
\begin{align*}
    &\prob_n\Bigl(F_{n}^{(m)}\bigl(\hat\mu_n + c(\alpha)\,(\log m)^{1/2}\,m^{-1/2}\bigr)- \bar F\bigl(\hat\mu_n + c(\alpha)\,(\log m)^{1/2}\,m^{-1/2}\bigr)\notag\\
&\qquad \qquad\qquad \qquad  \le  -\,e\,c(\alpha)\,(\log m)^{1/2}\,m^{-1/2}\Bigr).\\
\end{align*}
Using the Hoeffding type bound in Lemma~3.1 of \cite{singh_asymptotic_1981}
with 
\[
p  =  F_{n}\bigl(\hat\mu_n  +  c(\alpha)\,(\log m)^{1/2}\,m^{-1/2}\bigr),
\quad
B  =  1,
\quad
Z  =  \Bigl(\tfrac{1}{\alpha} + \tfrac{1}{2}\Bigr)\,(2 + \delta)\,\log m,
\]
and
\[
D  =  e\,c(\alpha)\,(1 + e/2)^{-1}\,(\log m)^{1/2}\,m^{1/2}
\]
that 
\begin{align*}
&\prob_n\Bigl(F_{n}^{(m)}\bigl(\hat\mu_n + c(\alpha)\,(\log m)^{1/2}\,m^{-1/2}\bigr)- \bar F\bigl(\hat\mu_n + c(\alpha)\,(\log m)^{1/2}\,m^{-1/2}\bigr)\notag\\
&\qquad \qquad\qquad \qquad  \le  -\,e\,c(\alpha)\,(\log m)^{1/2}\,m^{-1/2}\Bigr).\\
& \qquad \leq \mathcal{O}\bigl(m^{-\,\bigl(\frac{1}{\alpha}+\frac{1}{2}\bigr)\,(2 + \delta)}\bigr).
\end{align*}
The rest of the proof follows.    
\end{proof}

\subsection{Proof of Lemma \ref{lemma:sakovbickel}}

\begin{proof}
     We shorthand $t_m=\hat \mu_n+t/\sqrt{m}$. We begin following the steps of \cref{eq:prfmain-eq1} to get 
    \begin{align*}
        \{  \sqrt{m}(\hat \mu_m\pow{boot} - \hat \mu_n) <  t\}  & = \lc F_n\pow m (t_m)\geq p-\frac{p}{m}\rc\\
&  =\lc \frac{\sqrt{m}(F_n\pow m(t_m)-\bar F(t_m))}{[{\bar F(t_m)(1-\bar F(t_m))]^{1/2}}}\geq \frac{\sqrt{m}\lp p(1-\frac{1}{m})-\bar F(t_m) \rp}{[\bar F(t_m)(1-\bar F(t_m))]^{1/2}} \rc.
\end{align*}
 By continuity correction,
\begin{small}
    \begin{align*}
    & \prob_n\lp \frac{\sqrt{m}(F_n\pow m(t_m)-\bar F(t_m))}{[{\bar F(t_m)(1-\bar F(t_m))]^{1/2}}}\geq \frac{\sqrt{m}\lp p\frac{{m}-1}{{m}}-\bar F(t_m) \rp}{[\bar F(t_m)(1-\bar F(t_m))]^{1/2}} \rp\\
    & \qquad =\prob_n \lp \frac{\sqrt{m}(F_n\pow m(t_m)-\bar F(t_m))}{[{\bar F(t_m)(1-\bar F(t_m))]^{1/2}}}\geq \underbrace{\frac{\sqrt{m}\lp p-\bar F(t_m) \rp}{[\bar F(t_m)(1-\bar F(t_m))]^{1/2}}}_{=:t_m^*}\rp 
\end{align*}
\end{small}
Observe that given the sample $F_n\pow m(x)\sim \text{Binomial}(m,\bar F(x))$ and let $n$ be large enough. Using a central limit theorem for Binomial random variables, observe that
\begin{align*}
    \frac{\sqrt{m}(F_n\pow m(t_m)-\bar F(t_m))}{[{\bar F(t_m)(1-\bar F(t_m))]^{1/2}}} \overset{d}{=} \Ncal(0,1)+\Rcal_{m}
\end{align*}
where $\Rcal_{m}$ is a remainder term that decays in probability to $0$ as $m\rightarrow\infty$.

Let $m$ be large enough such that $\Rcal_{m}<\epsilon$. It follows that,
\begin{align*}
    \prob_n \lp \frac{\sqrt{m}(F_n\pow m(t_m)-\bar F(t_m))}{[{\bar F(t_m)(1-\bar F(t_m))]^{1/2}}}\geq \underbrace{\frac{\sqrt{m}\lp p-\bar F(t_m) \rp}{[\bar F(t_m)(1-\bar F(t_m))]^{1/2}}}_{=:t_m^*}\rp & = \prob_n\lp \Ncal(0,1)>t_m^*-\epsilon \rp\\
    & = 1-\Phi(t_m^*-\epsilon).
\end{align*}
We will now show that 
\[t_m^*\xrightarrow{m,n}t\sqrt{\frac{f(\mu)^2}{p(1-p)}}.\]
Using a method similar to the derivation in \cref{eq:prfmain-eq5}, we have
\begin{align*}
    \bar F(t_m) & = p\frac{m-1}{m}+\frac{t}{\sqrt{m}}+\Ocal\lp\frac{t}{\sqrt{m}}\rp \\
    & \xrightarrow{m}p.
\end{align*}
Therefore, 
\begin{align*}
    [\bar F(t_m)(1-\bar F(t_m))]^{1/2}\xrightarrow{m,n} \sqrt{p(1-p)}.
\end{align*}

Now we address the numerator. By strong law of large numbers, $\bar F(t_m)=F(t_m)+\Ocal_p(1/n)$. 

Using a first-order Taylor series expansion on \(F(t_m)\), we have
\begin{align*}
F(t_m) &= F(\mu) + \lp \hat \mu_n + t/\sqrt{m} - \mu \rp f(\chi)\\
& = p + \lp \hat \mu_n + t/\sqrt{m} - \mu \rp f(\chi),
\end{align*}
where
\begin{align*}
\chi &\in \bigl[\mu - |\mu-\hat\mu_n|,\ \mu + |\mu-\hat\mu_n|\bigr].
\end{align*}

Recall from \cref{eq:prfmain-eq6} that $|\mu-\hat\mu_n|=\ocal_p(1)$. Since $f$ is continuous around $\mu$, it follows using continuous mapping theorem that 
\begin{align*}
    F(t_m) \xrightarrow{n} p+tf(\mu)/\sqrt{m}
\end{align*}
and
\begin{align*}
    \sqrt{m}\lp p-\bar F(t_m) \rp\xrightarrow{n}-tf(\mu).
\end{align*}
Therefore,
\begin{align*}
    \Phi(t_m^*-\epsilon) & = \Phi(-tf(\mu)/\sqrt{p(1-p)}-\epsilon)
    \intertext{ and }
    1-\Phi(t_m^*-\epsilon) & = \Phi(tf(\mu)/\sqrt{p(1-p)}+\epsilon).
\end{align*}
$\epsilon$ is arbitrary. We now use the fact that $\Phi(tf(\mu)/\sqrt{p(1-p)})$ is the CDF of a gaussian distribution with mean $0$ and variance $p(1-p)/f^2(\mu)$. This completes the proof. 
\end{proof}

\section{On the Variance of the m-out-of-n Bootstrap Estimator for sample quantiles}
This section is dedicated to developing the finite sample theory of the variance of the m-out-of-n bootstrap estimators of the sample quantiles. We first write the following lemma which provides a closed form expression for the variance.
\begin{lemma}\label{lemma:quantile-var}
    The m-out-of-n bootstrap estimator for the $p$-th sample quantile has the closed form solution 
    \[\sum_{j=1}^n\lp X_{(j)}-X_{(r)} \rp^2W_{m,j}\numberthis\label{eq:Wdef}\]
    where
\begin{align*}
    W_{m,j} = k \binom{m}{k}\int_{(j-1)/n}^{j/n}x^{k-1}(1-x)^{m-k}dx, \quad k=\lfloor mp\rfloor+1, \ \text{ and } r=\lfloor np \rfloor+1.
\end{align*}   
Therefore,
\begin{align*}          \lp\hat\sigma_m\pow{boot}\rp^2=\Var_n\lp\sqrt{m}\lp\hat \mu_m\pow{boot} - \hat \mu_n\rp\rp=m\sum_{j=1}^n\lp X_{(j)}-X_{(r)} \rp^2 W_{m,j}\numberthis\label{eq:sigma-def}.
\end{align*}
\end{lemma}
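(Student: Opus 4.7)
The plan is to identify the conditional law of the bootstrap quantile on its finite support and then read off the closed-form expression.

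First, I would record the key structural fact that the bootstrap sample $X_1^*, \dots, X_m^*$ is drawn i.i.d.\ from the empirical measure $\bar F$, which places mass $1/n$ on each observation $X_i$. Consequently the bootstrap $p$-quantile $\hat\mu_m\pow{boot} = X^*_{(k)}$, with $k = \lfloor mp\rfloor + 1$, is almost surely (under $\Pbb_n$) an element of the set of original order statistics $\{X_{(1)}, \dots, X_{(n)}\}$, while $\hat\mu_n = X_{(r)}$ with $r = \lfloor np\rfloor + 1$ is deterministic given the sample. Hence computing $(\hat\sigma_m\pow{boot})^2$ reduces to identifying the discrete weights $W_{m,j} := \Pbb_n(\hat\mu_m\pow{boot} = X_{(j)})$.

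Second, I would set up a coupling through uniform order statistics. Let $U_1, \dots, U_m \overset{\text{i.i.d.}}{\sim}\unif(0,1)$ and define $X_i^\sharp := X_{(\lceil nU_i\rceil)}$, so that $(X_i^\sharp)_{i\le m}$ has the same joint law as $(X_i^*)_{i\le m}$ under $\Pbb_n$. The $k$-th bootstrap order statistic then equals $X_{(j)}$ if and only if the $k$-th uniform order statistic $U_{(k)}$ lies in $((j-1)/n,\, j/n]$. Invoking the standard fact that $U_{(k)} \sim \Bdistr(k, m-k+1)$ has density $k\binom{m}{k}x^{k-1}(1-x)^{m-k}$ on $[0,1]$, integrating over this interval yields precisely the stated formula for $W_{m,j}$; summing $W_{m,j}$ over $j$ returns $1$, which serves as a useful sanity check.

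Third, substituting the weights and using that $\hat\mu_n = X_{(r)}$ is deterministic under $\Pbb_n$, a direct computation gives
\[
\lp\hat\sigma_m\pow{boot}\rp^2 \;=\; m\,\Ebb_n\!\lb (\hat\mu_m\pow{boot} - \hat\mu_n)^2 \rb \;=\; m\sum_{j=1}^n (X_{(j)} - X_{(r)})^2\, W_{m,j}.
\]

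The argument is essentially a combinatorial identity for Beta-distributed order statistics, so there is no substantive analytic obstacle. The only care required is precise bookkeeping of the Beta normalization $k\binom{m}{k}$ (equivalently $m\binom{m-1}{k-1}$), since an off-by-one at this step is a likely source of the algebraic discrepancy with \citet{cheung_variance_2005} that the paper sets out to correct; I would therefore cross-verify the integral against both parameterizations of the density and double-check the boundary conventions at the endpoints $j/n$ to ensure the constants line up exactly.
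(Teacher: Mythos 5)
Your derivation of $W_{m,j}$ via the uniform coupling $X_i^\sharp = X_{(\lceil nU_i\rceil)}$ and the Beta law of $U_{(k)}$ is cleaner and more directly probabilistic than the paper's own route: the paper first writes a continuous order-statistic moment integral $\expec_n\lb(\hat\mu_m\pow{boot})^r\rb \propto \int x^r[\bar F(x)(1-\bar F(x))]^q f(x)\,dx$, changes variables $y=\bar F(x)$, and then ``estimates'' the quantile map $\psi(y)=\bar F^{-1}(y)$ by the observed order statistics. Both routes produce the same Beta integral, but yours exhibits $W_{m,j}$ as the \emph{exact} conditional point masses $\prob_n(\hat\mu_m\pow{boot}=X_{(j)})$ rather than as a plug-in approximation, handles general $p$ from the outset rather than specialising to $p=1/2$ with $m$ odd, and sidesteps the awkward pseudo-density step (the integral $\int x^r[\bar F(x)(1-\bar F(x))]^q f(x)\,dx$ is ill-posed when $\bar F$ is a step function). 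In that sense your argument is the more rigorous one.

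There is, however, a gap at the final step that you should not leave implicit. You write $\lp\hat\sigma_m\pow{boot}\rp^2 = m\,\expec_n\lb(\hat\mu_m\pow{boot}-\hat\mu_n)^2\rb = m\sum_j(X_{(j)}-X_{(r)})^2W_{m,j}$; the second equality is correct and is exactly the lemma's right-hand side. But the lemma's \emph{middle} expression asserts that this also equals $\Var_n\lp\sqrt{m}(\hat\mu_m\pow{boot}-\hat\mu_n)\rp$, and a conditional variance and a conditional second moment about the fixed point $X_{(r)}$ are not the same object: they differ by the squared bias $m\lp\sum_j X_{(j)}W_{m,j}-X_{(r)}\rp^2$, which is not identically zero for finite $m,n$. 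The paper's own proof exhibits the same lacuna — it declares $\Var_n(\hat\mu_m\pow{boot})=\Acal_{2m}-\Acal_{1m}^2$ and then states the formula without carrying out the centering, which would expose the discrepancy. You are therefore in good company stopping where you do, but you should state explicitly that the quantity actually manipulated downstream (in Lemma \ref{lemma:varianceCLT} and Theorem \ref{thm:main}) is the second-moment form $m\sum_j(X_{(j)}-X_{(r)})^2W_{m,j}$, so the right-most expression should be read as the \emph{operational definition} of $\lp\hat\sigma_m\pow{boot}\rp^2$; the identification with $\Var_n$ holds only up to a bias term of smaller order that the subsequent asymptotics absorb.
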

\subsection{Proof of Lemma \ref{lemma:quantile-var}}
\begin{proof}
Without losing generality, let $p=1/2$ so that $\mu$ is the sample median.
    
    \textbf{Case $m=2q+1$:} We first consider the case when $m=2q+1$ for some integer $q$. The median is therefore $X_{(q+1)}^*$. As before, let $X_1^*,\dots,X_n^*\overset{i.i.d}{\sim}\bar F$. The distribution of the median has the following closed form solution
    \begin{align*}
     \prob_n\lp X_{(q+1)}^*<u \rp = \sum_{j=q+1}^{n} \binom{m}{j} \bigl[ \bar F(x) \bigr]^{j}
       \bigl[ 1 - \bar F(x) \bigr]^{\,n-j}. 
    \end{align*}
    The pdf can be obtained by differentiation. This gives us the following closed form expression for the moments.
\begin{align*}
E\bigl((\hat \mu_n\pow{boot})^r\bigr)
  &= \frac{(2q+1)!}{(q!)^2}
     \int_{-\infty}^{\infty}
        x^{\,r}\,[\bar F(x)\bigl(1-\bar F(x)\bigr)]^{m}\,f(x)\,dx.
  \numberthis\label{eq:prf-sigma_defeq1}
\end{align*}
If we let
$y = \bar F(x)$ and write $x = \bar F^{-1}(y) = \psi(y)$, then \cref{eq:prf-sigma_defeq1} becomes
\begin{align*}
E\bigl((\hat \mu_n\pow{boot})^r\bigr)
  &= \frac{(2q+1)!}{(q!)^2}
     \int_{0}^{1}
        [\psi(y)]^{\,r}\,[y(1-y)]^{m}\,dy .
\end{align*}
Estimating the function $\psi(y)$ by the observed order statistics is therefore equivalent to estimating $\bar F(x)$ by $F_n\pow m$.
Thus $E\bigl((\hat \mu_n\pow{boot})^r)$ is estimated by
\begin{align*}
  \Acal_{rm} &= \sum_{j=1}^{n} X_{(j)}^{\,r}\,\Bcal_j\numberthis\label{eq:prf-sigma_defeq2}
\end{align*}
where
\begin{align*}
  \Bcal_j &= \frac{(2q+1)!}{(q!)^{2}}
        \int_{(j-1)/n}^{j/n} y^{m}(1-y)^{m}\,dy .
\end{align*}

The quantity $\Var((\hat \mu_n\pow{boot})^r)$ is therefore estimated by
\begin{align*}
  \Var((\hat \mu_n\pow{boot})^r) &= \Acal_{2m} - \Acal_{1m}^{2}.
\end{align*}
Substituting the values of $\Acal_{2n}$ and $\Acal_{1n}$ from \cref{eq:prf-sigma_defeq2}, we have the given result. The case $n=2q$ is handled similarly.
\end{proof}
The following lemma establishes the rate of convergence of the variance of the m-out-of-n bootstrap estimator of the sample quantiles.
\begin{lemma}\label{lemma:varianceCLT}
       Let $\mu$ be the unique $p$-th quantile such that $F\in\Sbb_2(\mu)$, and $f'(\mu)>0$. Let $m=o(n^{\lambda})$ for some $\lambda>0$ and $\expec|X_1|^{\alpha}<\infty$. Then, 
        \begin{align*}
            \lp \hat \sigma_m\pow{boot}\rp^2 = \frac{\sigma^2}{m}+\Ocal_p(m^{-3/4}n^{-1/2}).
        \end{align*}
    \end{lemma}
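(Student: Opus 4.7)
The plan is to start from the closed-form expression
\[
(\hat\sigma_m\pow{boot})^{2} \;=\; m\sum_{j=1}^{n}(X_{(j)}-X_{(r)})^{2}\,W_{m,j}
\]
given in Lemma~\ref{lemma:quantile-var}, and to reinterpret the weights probabilistically: since $k\binom{m}{k}x^{k-1}(1-x)^{m-k}$ is the density of $U\sim\mathrm{Beta}(k,m-k+1)$ with $k=\lfloor mp\rfloor+1$, one has $W_{m,j}=\prob(U\in((j-1)/n,j/n])$. The sum is therefore an expectation $\expec_{U}[(X_{(\lceil nU\rceil)}-X_{(r)})^{2}]$ taken conditionally on the data, which reduces the whole problem to combining closed-form Beta moments with a local expansion of the order statistics around $\mu$.

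\paragraph{Main steps.}
Because $F\in\Sbb_{2}(\mu)$, the quantile function $Q=F^{-1}$ is twice differentiable at $p$ with $Q'(p)=1/f(\mu)$. Combining Taylor expansion of $Q$ with the Kiefer strengthening of the Bahadur representation yields, uniformly over $j$ in the bulk window $\{j:|j/n-p|\le C\sqrt{(\log m)/m}\}$,
\[
X_{(j)}-X_{(r)} \;=\; \frac{j-r}{n\,f(\mu)} \;+\; \tilde R_{n,j},
\qquad \tilde R_{n,j}=\Ocal_{p}\!\lp n^{-3/4}(\log n)^{3/4}\rp.
\]
Squaring and expanding against the weights produces three pieces: (i) a main term $(m/(nf(\mu))^{2})\sum_{j}(j-r)^{2}W_{m,j}=(m/f^{2}(\mu))\,\expec_{U}[(U-r/n)^{2}]$, where the identities $\Var(U)=p(1-p)/(m+1)+\Ocal(m^{-2})$ and $|\expec[U]-r/n|=\Ocal(1/m)+\Ocal(1/n)$ yield the advertised leading order $\sigma^{2}/m$ up to an $\Ocal(1/m^{2})+\Ocal(1/n^{2})$ deterministic bias; (ii) a cross-term between $(j-r)/(nf(\mu))$ and $\tilde R_{n,j}$, controlled by Cauchy--Schwarz in $\expec_{U}$; and (iii) a quadratic-remainder term of even smaller order. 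Contributions from $j$ outside the bulk are superpolynomially small by Chernoff bounds on the Beta tails, which legitimises using the Bahadur--Kiefer bound only on the bulk.

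\paragraph{Main obstacle.}
The principal difficulty lies in sharpening the error to the claimed $\Ocal_{p}(m^{-3/4}n^{-1/2})$ rather than a cruder rate. A naive Cauchy--Schwarz on the cross-term yields
\[
m\cdot \sqrt{\expec_{U}[(U-r/n)^{2}]}\cdot\sup_{j}|\tilde R_{n,j}|
\;=\; \Ocal_{p}\!\lp m^{1/2}\,n^{-3/4}(\log n)^{3/4}\rp ,
\]
which matches the target only in restricted regimes of $(m,n)$. The sharper rate requires (a) a refined higher-moment bound on $\tilde R_{n,j}$ integrated against the Beta weights rather than a uniform supremum, (b) careful separation of the deterministic Taylor-bias contribution from $Q''(p)$ (which is an $\Ocal(1/m)$ term living on the main-term side of \cref{eq:sigma-def}) from the stochastic Bahadur remainder, and (c) uniform control of $\tilde R_{n,j}$ across the entire bulk window via Kiefer's strong approximation. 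These ingredients are precisely what force the stronger smoothness $F\in\Sbb_{2}(\mu)$ (beyond the $\Sbb_{1}$ of Theorem~\ref{thm:main}) and the subpolynomial regime $m=\ocal(n^{\lambda})$ flagged in Remark~\ref{remark:assume}.
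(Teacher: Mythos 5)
Your starting point is genuinely different from the paper's: you reinterpret the weights $W_{m,j}$ as $\mathrm{Beta}(k,m-k+1)$ cell probabilities and reduce the variance to $\expec_U[(X_{(\lceil nU\rceil)}-X_{(r)})^2]$, then bring in the Bahadur--Kiefer representation. The paper instead follows \citet{hall_exact_1988}, writing $X_{(j)} = H(\sum_{k\ge j} k^{-1}Z_k)$ via R\'enyi's exponential representation and expanding directly against a sharp asymptotic for $W_{m,j}$ (Lemma~\ref{lemma:Wdeco}). Both are legitimate starting points, and your Beta-moment identity for the main term $S_1$ mirrors the paper's computation $\sum_j b_j^2 W_{m,j} = m^{-1}p(1-p) + \Ocal_p(m^{-3/2})$. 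So far so good.

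However, there is a genuine gap in the cross-term, and your diagnosis of it misidentifies the needed fix. First, the claimed uniform bound $\tilde R_{n,j} = \Ocal_p(n^{-3/4}(\log n)^{3/4})$ is incorrect as written. In your decomposition $X_{(j)}-X_{(r)} = (j-r)/(nf(\mu)) + \tilde R_{n,j}$ the ``main part'' is deterministic, so $\tilde R_{n,j}$ must absorb the difference of the linear empirical-process terms from the Bahadur representation at $j$ and at $r$. Over a bulk window of width $\asymp \sqrt{(\log m)/m}$ around $p$, the modulus of continuity of the empirical process gives that difference a size $\Ocal_p(m^{-1/4} n^{-1/2})$ up to poly-logarithms, which dominates the Kiefer remainder $n^{-3/4}(\log n)^{3/4}$ whenever $m = o(n)$. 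So your $\tilde R_{n,j}$ is actually of order $m^{-1/4}n^{-1/2}$, not $n^{-3/4}$; feeding the correct magnitude through your Cauchy--Schwarz calculation gives the even worse rate $m\cdot m^{-1/2}\cdot m^{-1/4}n^{-1/2} = m^{1/4}n^{-1/2}$, far from the target $m^{-3/4}n^{-1/2}$.

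Second, the remedies you list — higher-moment bounds on $\tilde R_{n,j}$, sharper uniform control, separating the deterministic $Q''(p)$ bias — address the wrong obstacle. No moment bound on individual $\tilde R_{n,j}$, however refined, can close the gap, because the cross-term is not small in magnitude; it is small because of \emph{cancellation}. The dominant stochastic contribution is (up to sign) a centered linear functional of the empirical process weighted by $b_j W_{m,j}$, and the correct route is to keep that linear term separate from the nonlinear Bahadur remainder and then establish a CLT/concentration for the resulting weighted random sum. This is precisely what the paper's $S_2 = 2a^2\sum_j b_j(B_j-b_j)W_{m,j}$ term does: it is rearranged into an outer sum over the independent exponentials $Z_u$ and treated by Lyapunov's CLT, which delivers $m^{3/4}n^{1/2}S_2 \xrightarrow{d} \mathcal N(0,\cdot)$ and hence $S_2 = \Ocal_p(m^{-3/4}n^{-1/2})$. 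Your Beta reformulation is workable, but you would need to carry through an analogous centered expansion (the linear empirical-process part of the Bahadur representation, \emph{not} folded into $\tilde R_{n,j}$) and then prove a CLT-scaled bound for $2m\sum_j \tfrac{j-r}{nf(\mu)}\,\big[\text{linear part}\big]\,W_{m,j}$; Cauchy--Schwarz on the magnitude will never reach the advertised rate.
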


\subsection{Proof of Lemma \ref{lemma:varianceCLT}}\label{sec:prf-varmoon}


\begin{proof}


Let \( \varphi \) denote the standard normal density function, 
\[  Y_{n,j} = (j-1)/n \text{ and } b_{mn} = \frac{(mY_{n,j}-k)}{\sqrt{mY_{n,j}(1 - Y_{n,j})}} \] 
where $k=\lfloor mp\rfloor+1$. The following lemma states a useful expansion for the weight \( W_{m,j} \) and is proved below.

\medskip
\begin{lemma}\label{lemma:Wdeco}
    Assume that \( m \propto n^{\lambda} \) for some \( \lambda \in (0,1) \). There exists some constant \( C > 0 \) such that

\[
W_{m,j} = \frac{\sqrt{m}\phi(b_{mn})}{n\sqrt{ Y_{n,j}(1 - Y_{n,j}) }} + \Ocal(n^{-1} e^{-Cm (Y_{n,j} - p)^2}).
\]
\end{lemma}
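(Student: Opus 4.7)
The first move is to recognise the algebraic identity $k\binom{m}{k}=m!/[(k-1)!(m-k)!]=1/B(k,m-k+1)$, so that the integrand appearing in $W_{m,j}$ is exactly the density of a $\mathrm{Beta}(k,m-k+1)$ random variable. Consequently, $W_{m,j}=\mathbb P(\zeta\in[Y_{n,j},Y_{n,j}+1/n])$ for $\zeta\sim\mathrm{Beta}(k,m-k+1)$, and by the mean value theorem there exists $\xi_{n,j}\in[Y_{n,j},Y_{n,j}+1/n]$ with
\[
 W_{m,j}=\tfrac{1}{n}\,\beta(\xi_{n,j}),\qquad \beta(y):=\tfrac{m!}{(k-1)!(m-k)!}\,y^{k-1}(1-y)^{m-k}.
\]
Because $m=o(n)$, replacing $\xi_{n,j}$ by $Y_{n,j}$ incurs an error of order $\beta'(Y_{n,j})/n$, which is a factor $\sqrt m/n=o(1)$ smaller than the Gaussian-scale estimate of $\beta(Y_{n,j})$ itself, so it will be absorbed into the remainder at the end.

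Next, I apply Stirling's formula to $m!/[(k-1)!(m-k)!]$ and collect logarithms to obtain the Laplace-type expansion
\[
 \log\beta(y)=\tfrac{1}{2}\log\tfrac{m}{2\pi y(1-y)}\;-\;m\,D\!\bigl(y\,\big\|\,k/m\bigr)\;+\;O(1/m),
\]
uniformly for $y$ in a compact sub-interval of $(0,1)$, where $D(y\|q)=y\log(y/q)+(1-y)\log((1-y)/(1-q))$ is the Bernoulli KL divergence. Taylor-expanding $D$ around $y=k/m$ gives $D(y\|k/m)=(y-k/m)^2/[2y(1-y)]+O\bigl((y-k/m)^3\bigr)$, and plugging in the definition of $b_{mn}$ identifies the leading exponential term with $\exp(-b_{mn}^2/2)$ up to a cubic correction. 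Combining the $\sqrt{m/(2\pi y(1-y))}$ prefactor with this exponential yields precisely $\sqrt m\,\phi(b_{mn})/\sqrt{y(1-y)}$ evaluated at $y=Y_{n,j}$, which is the leading term in the claim.

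To quantify the remainder I split the grid $j=1,\dots,n$ into three regimes. In the \emph{local bulk} where $|Y_{n,j}-p|\lesssim m^{-1/2}\sqrt{\log m}$, the cubic Taylor term and the Stirling $O(1/m)$ error combine to give a relative error of order $1/m$ in $\beta(Y_{n,j})$, which after multiplication by $n^{-1}$ is bounded by $Cn^{-1}e^{-Cm(Y_{n,j}-p)^2}$ because the exponential factor is bounded away from $0$ there. In the \emph{moderate deviation band}, I use a Cram\'er-type large-deviation bound on the Beta density (available by exponentiating the full KL expansion rather than its quadratic proxy), which shows both $\beta(Y_{n,j})$ and its Gaussian proxy decay at the common rate $\exp(-\Omega(m(Y_{n,j}-p)^2))$, so their difference is absorbed into the same exponential envelope. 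In the \emph{far tails} where $Y_{n,j}$ is away from $p$ by a constant, both quantities are separately $O(e^{-cm})$ and the remainder bound holds trivially.

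The main obstacle is making the Laplace expansion uniform in $j$ across the whole grid, since the cubic remainder in the Taylor expansion of $D(\cdot\|k/m)$ is not bounded as $Y_{n,j}$ approaches the endpoints $0$ or $1$, and the Stirling $O(1/m)$ constant depends implicitly on the ratio $k/m$. The three-regime split is designed precisely to sidestep this: one only needs the quadratic expansion of $D$ to be valid in the bulk, while in the other regimes the target bound is weaker than the absolute tail decay of $\beta$ itself. The constant $C$ in the stated remainder is then the minimum of the curvatures of $D(\cdot\|k/m)$ produced in these three regimes, which remains positive under the hypothesis $m\propto n^\lambda$ with $\lambda\in(0,1)$.
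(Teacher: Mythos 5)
Your approach is correct but follows a genuinely different route from the paper. The paper rewrites $W_{m,j}=I_{j/n}(k,m-k+1)-I_{(j-1)/n}(k,m-k+1)$, interprets each incomplete Beta value as a Binomial \emph{tail probability}, and then applies an Edgeworth expansion of the Binomial CDF in the bulk (roughly $|j-np|\lesssim n m^{-1/2}\sqrt{\log m}$) together with Bernstein's inequality outside it, so that the Gaussian density emerges by differencing two CDF approximations. You instead identify the integrand of $W_{m,j}$ as the exact $\mathrm{Beta}(k,m-k+1)$ \emph{density}, reduce $W_{m,j}$ to a single density evaluation scaled by $1/n$ via the mean value theorem, and carry out a Laplace expansion of that density via Stirling and a Taylor expansion of the Bernoulli KL divergence. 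Your route is more self-contained and makes the normalizing constant and the Gaussian core explicit, while the paper's route is terser because it invokes classical Binomial asymptotics as a black box; both lead to the same leading term and the same flavour of exponential remainder.

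One quantitative slip needs fixing in your bulk step. The cubic contribution to $m\,D(y\,\|\,k/m)$ is $m\cdot O\bigl((Y_{n,j}-k/m)^3\bigr)$, which is $O(m^{-1/2})$ at $|Y_{n,j}-p|\sim m^{-1/2}$ and $O\bigl(m^{-1/2}(\log m)^{3/2}\bigr)$ at the bulk edge; combined with Stirling's $O(1/m)$ this gives a relative error of order $m(Y_{n,j}-k/m)^{3}+m^{-1}$, not $O(1/m)$ as you claim. (The $m^{-1/2}$ term is exactly the skewness correction that appears in the local CLT when $p\ne 1/2$.) Also, $e^{-Cm(Y_{n,j}-p)^{2}}$ is not bounded away from $0$ across the entire bulk; it is polynomially small at the bulk edge. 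Neither slip breaks the proof: setting $v=\sqrt m\,|Y_{n,j}-p|$, the resulting absolute error is at most a constant times $n^{-1}v^{3}e^{-v^{2}/c}$ with $c\approx 2p(1-p)$, and since $v^{3}e^{-v^{2}/c}\le C\,e^{-Cv^{2}}$ uniformly in $v\ge 0$ whenever $C<1/c$, the stated remainder $O\bigl(n^{-1}e^{-Cm(Y_{n,j}-p)^{2}}\bigr)$ still absorbs it. But the bulk error count should be corrected so the absorption argument is carried out with the true $m^{-1/2}$ order rather than the asserted $m^{-1}$.
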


\medskip
 Put \(H(x)=F^{-1} \bigl(e^{-x}\bigr)\).
Using Rényi’s representation, let
\(Z_1,\dots,Z_n\) be independent unit–mean negative–exponential random variables such that for all $1\leq j\leq n$,
\begin{align*}
  X_{(j)}
  &= H \Bigl(\sum_{k=j}^{n} k^{-1} Z_k\Bigr).
\end{align*}
For any integer $r$, following a similar procedure in \cite{hall_exact_1988}, we define
\[
  s_j \equiv \operatorname{sgn}(r-j),\quad
  m_{0j} \equiv \min(r,j),\quad
  m_{1j} \equiv \max(r,j)-1,\quad
  a \equiv H' \Bigl(\textstyle\sum_{k=r}^{n} k^{-1}\Bigr),
\]
and set
\begin{align*}
  A_j &\equiv \sum_{k=j}^{n} k^{-1} Z_k, &
  B_j &\equiv \sum_{k=m_{0j}}^{m_{1j}} k^{-1} Z_k, &
  b_j &\equiv E(B_j)=\sum_{k=m_{0j}}^{m_{1j}} k^{-1},
\end{align*}
the latter two quantities being zero if \(j=r\).

We first consider the summation over $j$ in \cref{eq:Wdef}. The summation is divided into two parts.

\paragraph{Case I:} Let, \( |r - j| > \delta n^{1 + \beta} m^{-1/2} \) for some $\delta > 0$ and some $\beta < \lambda / 12$.

Then \(X_{(j)}-X_{ (r) }=D_j+R_{1j}\), where
\[
  D_j \equiv s_j a B_j,\quad
  R_{1j}\equiv R_{2j}+R_{3j},\quad
  R_{2j}\equiv s_j B_j\{H'(A_r)-a\},
\]
and
\begin{align*}
  R_{3j}
  \equiv
  s_j B_j\int_{0}^{1} \{H'(A_r+t s_j B_j)-H'(A_r)\}\,dt .
\end{align*}

Thus
\begin{align}
  (X_{(j)}-X_{(r)})^{2}
  &= a^{2}b_{j}^{2}
     + 2a^{2}b_{j}(B_j-b_j)
     + a^{2}(B_j-b_j)^{2}
     + 2D_j R_{1j}
     + R_{1j}^{2}.
\end{align}

Assuming \(E|X|^{\eta}<\infty\),
\begin{align*}
  P \bigl(|X_j|\le n^{2/\eta}\text{ for all }j\le n\bigr)
  &= \bigl\{1-P(|X|>n^{2/\eta})\bigr\}^{n} \\
  &\ge \bigl(1-n^{-2}E|X|^{\eta}\bigr)^{n}
       = 1+O(n^{-1}).
\end{align*}

Therefore, with probability tending to one as \(n\to\infty\),
\begin{align*}
  \max_{j\le n}\,(X_{(j)}-X_{(r)})^{2} \le 4\,n^{4/\eta}.
\end{align*}
This, combined with Lemma \cref{eq:Wdef} implies that, for some constant \( C_2 > 0 \), \( W_{m,j} < C_2 m^{1/2} n^{-1} e^{-C m (Y_{n,j} - p)^2} \). Thus, with probability tending to one, we have that for some constant \( C_3 > 0 \) and any \( \eta > 0 \),
\[
\sum_{|j - r| > \delta n^{1 + \beta} m^{-1/2}} (X_{(j)} - X_{(r)})^2 W_{m,j} \leq 4 C_2 m^{1/2} n^{ 4/\eta} e^{-C_3 n^{2 \beta}} = O(n^{-\zeta}).
\]

\paragraph{Case II:}  Recall that under the hypothesis $F\in\Sbb_2(\mu)$ \( f \) is Lipschitz in a neighbourhood of \( \mu \), so that \( a = H'(A_r) = -p f(\mu)^{-1} + O(n^{-1}) \). Observe that 
\[
\sum_j (X_{(j)} - X_{(r)})^2 W_{m,j} = S_1 + S_2 + T_1 + T_2 + T_3,
\]
where
\begin{table}[!ht]
\centering
\begin{tabular}{ll}
$S_1= a^2 \sum_j b_j^2 W_{m,j}$ & $S_2= 2a^2 \sum_j b_j (B_j - b_j) W_{m,j}$ \\
$T_1= a^2 \sum_j (B_j - b_j)^2 W_{m,j}$ & $T_2= 2 \sum_j D_j R_{1j} W_{m,j}$, \\
$T_3=\sum_j R_{1j}^2 W_{m,j}$,& $ B_j = \sum_{u = m_{0j}}^{m_{1j}} u^{-1} Z_u$\\
$ b_j = \mathbb{E}(B_j)$ & $D_j = s_j a B_j$\\
$ R_{1j} = R_{2j} + R_{3j} $ & $  R_{2j} = s_j B_j [H'(A_r) - a] $\\
\end{tabular}
\end{table}

and \( R_{3j} = s_j B_j \int_0^1 [H'(A_r + t s_j B_j) - H'(A_r)] dt \).

Note also that \( B_r = b_r = 0 \), and
\[
b_j = |j - r| r^{-1} + \frac{1}{2} r^{-2} (j - r)^2 + \Ocal_p(|j - r|^{3} r^{-3}).
\]

Using \cref{eq:Wdef} and the above expansion of $a$,
\[
\sum_j b_j^2 W_{m,j} = m^{-1} p^{-1}(1 - p) + \Ocal_p(m^{-3/2}),
\]
so that
\[
S_1 = m^{-1} p(1 - p) f(\mu)^{-2} + \Ocal_p(m^{-3/2}).
\]

For \( S_2 \), we observe that
\begin{align*}
S_2 & = 2a^2 \lc
    \sum_{u = r - \delta n^{1+\beta} m^{-1/2}}^{r - 1} 
        u^{-1}(Z_u - 1) 
        \sum_{j = r - \delta n^{1+\beta} m^{-1/2}}^{u} 
            b_j w_{m,j}\rdot\\
            &\ldot\qquad \qquad \qquad   +
    \sum_{u = r}^{r + \delta n^{1+\beta} m^{-1/2} - 1} 
        u^{-1}(Z_u - 1) 
        \sum_{j = u+1}^{r + \delta n^{1+\beta} m^{-1/2}} 
            b_j w_{m,j}
\rc    
\end{align*}

Then, by Lyapunov's central limit theorem,
\[
m^{3/4} n^{1/2} S_2 \xrightarrow{d} \mathcal{N}(0, 2\pi^{-1/2} [p(1 - p)]^{3/2} f(\mu)^{-4}).
\]
In other words, 
\[
S_2 = \Ocal_p(m^{-3/4}n^{-1/2}).
\]

We note that using \cref{eq:Wdef} and for $t>0$,

\[
\sum_j |j - np|^t w_{m,j}
\sim m^{1/2} n^t \int_{p - \delta n^\beta m^{-1/2}}^{p + \delta n^\beta m^{-1/2}} |y - p|^t [y(1 - y)]^{-1/2} 
\phi\left( \frac{m^{1/2}(y - p)}{\sqrt{y(1 - y)}} \right) dy
= \Ocal_p(m^{-t/2} n^t).
\]

It follows by substituting appropriate values for \( t \) that
\[
\mathbb{E}(T_1) = O\left( \sum_j |j - r| r^{-2} w_{m,j} \right) = O(m^{-1/2} n^{-1}),
\]

\[
\mathbb{E}|T_2| = O\left( \sum_j \left[ n^{-2}(j - r)^2 n^{-1/4 - 1/2} + (n^{-1} |j - r|)^{5/2 + 1} \right] W_{m,j} \right)
= \Ocal(m^{-9/4}n^{-1}),
\]

and
\[
\mathbb{E}(T_3) = O\left( \sum_j \left[ n^{-2}(j - r)^2 n^{-3/2} + (n^{-1} |j - r|)^{5} \right] W_{m,j} \right) 
= \Ocal(m^{-5/2 }n^{-1}),
\]

so that, by Chebyshev's inequality, the terms \( T_1, T_2, T_3 \) can be shown to satisfy:
\[
T_1 = \Ocal_p(m^{-1/2} n^{-1}), \quad T_2 = \Ocal_p(m^{-9/4}n^{-1}), \quad T_3 = \Ocal_p(m^{-5/2}n^{-1}).
\]

Combining, we have,
\begin{align*}
    \sum_j (X_{(j)} - X_{(r)})^2 W_{m,j} & = S_1 + S_2 + T_1 + T_2 + T_3\\
    & = \frac{1}{m}\frac{p(1-p)}{f(\mu)^2} + \Ocal_p(m^{-3/4}n^{-1/2}) + \Ocal_p(m^{-1/2}n^{-1})\\
    & \qquad + \Ocal_p(m^{-9/4}n^{-1}) + \Ocal_p(m^{-5/2}n^{-1})\\
    & = \frac{\sigma^2}{m} + \Ocal_p(m^{-3/4}n^{-1/2}).
\end{align*}
This completes the proof.


\end{proof}

\subsection{Proof of Lemma \ref{lemma:Wdeco}}
\begin{proof}
    Note that \( W_{m,j} = I_{j/n}(k, m - k + 1) - I_{(j - 1)/n}(k, m - k + 1) \), where
\[
I_y(a, b) = \sum_{j=a}^{j=a+b-1} {a+b-1\choose j}y^j (1-y)^{a+b-1-j}
\]
is the incomplete Beta function. Without loss of generality, consider \( j = np + q \) with \( q \geq 0 \). When \( 0 \leq q \leq D n m^{-1/2} (\log m)^{1/2} \), for some \( D > 0 \), we use Edgeworth expansion for the binomial distribution, and when  \( q > D n m^{-1/2} (\log m)^{1/2} \), we use Bernstein's inequality. This gives us 
\[
I_{j/n}(k, m - k + 1) - I_{(j - 1)/n}(k, m - k + 1) =  \underbrace{\frac{\sqrt{m}\phi(b_{mn})}{n\sqrt{ Y_{n,j}(1 - Y_{n,j}) }}}_{\text{from Edgeworth expansion}} + \underbrace{\Ocal(n^{-1} e^{-Cm (Y_{n,j} - p)^2})}_{\text{from Bernstein's inequality}}
\]
where $Y_{n,j}=(j-1)/n$. This completes the proof.
\end{proof}

\section{Proof of Theorem \ref{thm:edgeworth}}\label{sec:prf-edgeworth}
\begin{proof} 

The main ingredient of this proof is the Edgeworth expansion of the Binomial distribution in Lemma \ref{lemma:binomial-edgeworth}. The lemma is proved below. 
\begin{lemma}\label{lemma:binomial-edgeworth}
    Let $X$ be a binomial random variable with parameters $m$ and $p$. Then,
    \begin{align*}
       \prob \lp\frac{\sqrt{m}(X-p)}{\sqrt{p(1-p)}}<t\rp \numberthis\label{eq:binomial} = \Phi(t)-\frac{(1-2p)}{6\sqrt{np(1-p)}} \phi(t) \Hbb_2(t)+\Ocal\lp\frac{1}{n}\rp
    \end{align*}
    where $\Ocal(n^{-1})$ can be bounded independently of $t$.
\end{lemma}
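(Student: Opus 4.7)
The target is the standard Edgeworth series for the centred and rescaled sum of i.i.d.\ Bernoulli variables, so the natural route is to write $X=\sum_{i=1}^{m}Y_i$ with $Y_i\overset{i.i.d.}{\sim}\mathrm{Bernoulli}(p)$ and apply Proposition~\ref{lemma:lattice-edgeworth} (the lattice Edgeworth expansion already established in the paper) to the standardised sum $T_m := (X-mp)/\sqrt{mp(1-p)}$. The argument then splits naturally into a short cumulant computation, an invocation of the expansion, and the technically most delicate step of controlling the Esseen lattice-correction sawtooth uniformly in $t$.

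\textbf{Cumulants and expansion.} For $Y_1\sim\mathrm{Bernoulli}(p)$ one has mean $p$, variance $p(1-p)$ and third central moment $p(1-p)(1-2p)$, so the standardised third cumulant is $\kappa_3 = (1-2p)/\sqrt{p(1-p)}$. All higher cumulants of Bernoulli are uniformly bounded and its characteristic function $1-p+pe^{iu}$ satisfies $|\phi_{Y_1}(u)|<1$ on $(-\pi,\pi)\setminus\{0\}$, so the regularity hypotheses of Proposition~\ref{lemma:lattice-edgeworth} are trivially satisfied. Substituting $\kappa_3$ into the generic lattice Edgeworth expansion then yields
\begin{align*}
   \prob(T_m\le t) \;=\; \Phi(t)\;-\;\frac{1-2p}{6\sqrt{m\,p(1-p)}}\,\Hbb_2(t)\,\phi(t)\;+\;L_m(t)\;+\;\Ocal(1/m),
\end{align*}
where $L_m(t)$ is the sawtooth lattice correction arising from the span-one support of the summands. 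The Hermite term matches exactly what is displayed in the lemma (modulo the apparent $m$-vs-$n$ typo), so the entire burden of the proof reduces to disposing of $L_m(t)$.

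\textbf{Main obstacle — uniform control of the lattice term.} In its raw form the Esseen correction is
\[
   L_m(t) \;=\; -\,\frac{\phi(t)}{\sqrt{mp(1-p)}}\,S_1\!\Bigl(\sqrt{mp(1-p)}\,t+mp\Bigr),\qquad S_1(x)=x-\lfloor x\rfloor-\tfrac12,
\]
which is only $\Ocal(m^{-1/2})$ and therefore does not naively fit inside the claimed $\Ocal(1/m)$ remainder. The standard workaround is to invoke the expansion with the half-step continuity correction $t\mapsto t+1/(2\sqrt{mp(1-p)})$: at every lattice midpoint $S_1$ vanishes identically, so only the genuine Taylor-expansion error of order $1/m$ survives. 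I would apply Proposition~\ref{lemma:lattice-edgeworth} with exactly this convention. Uniformity in $t\in\Rbb$ must then be inherited from the Fourier-inversion proof underlying that proposition: truncating at $|u|\le\pi\sqrt{mp(1-p)}$ the high-frequency tail is $\Ocal(e^{-cm})$, while a third-order Taylor expansion of $\log\phi_{Y_1}$ near the origin produces a remainder integrable in $t$ against the Edgeworth kernel $(1+|t|)^k\phi(t)$. This last integrability check, delivering a bound on the remainder that does not grow with $|t|$, is the most delicate point; once it is verified the claimed uniform $\Ocal(1/m)$ error follows immediately.
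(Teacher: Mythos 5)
Your proposal is correct and follows essentially the same route as the paper's proof: both invoke Feller's lattice Edgeworth expansion (Proposition~\ref{lemma:lattice-edgeworth}) applied to the standardised Binomial viewed as a span-$1/\sqrt{mp(1-p)}$ lattice sum of Bernoullis, and both dispose of the $\Ocal(m^{-1/2})$ sawtooth term via the polygonal-approximant / half-step continuity-correction convention. The paper simply cites Feller's Theorem~2 (Ch.~XVI) and remarks that the nominal $o(m^{-1/2})$ remainder sharpens to a $t$-uniform $\Ocal(m^{-1})$ on inspection of the proof, whereas you unpack the Esseen sawtooth and the Fourier-inversion uniformity argument more explicitly, but the underlying argument is the same.
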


 We shorthand $t_m=\hat \mu_n+{t\hat \sigma_m\pow{boot}}$. To apply Lemma \ref{lemma:binomial-edgeworth}, we represent the event $\{{(\hat \mu_m\pow{boot}  - \hat \mu_n)}/{\hat \sigma_m\pow{boot}}<t\}$ as a Binomial probability.

\paragraph{Continuity correction at $t_m$:} We begin following the steps of \cref{eq:prfmain-eq1} to get
\begin{align*}
\bigg\{ \frac{\sqrt{m}(\hat \mu_m\pow{boot}  - \hat \mu_n)}{\hat \sigma_m\pow{boot}} \leq  t\bigg\} & = \lc F_n\pow m (t_m)\geq\frac{1}{2}-\frac{1}{2m}\rc\\
&  =\lc \frac{\sqrt{m}(F_n\pow m-\bar F(t_m))}{[{\bar F(t_m)(1-\bar F(t_m))]^{1/2}}}\geq \frac{\sqrt{m}\lp \frac{1}{2}-\frac{1}{2m}-\bar F(t_m) \rp}{[\bar F(t_m)(1-\bar F(t_m))]^{1/2}} \rc.
\end{align*}
Observe that given the sample $F_n\pow m(x)\sim \text{Binomial}(m,\bar F(x))$. Thus, our next step is the following continuity correction.
\begin{small}
    \begin{align*}
    & \prob_n\lp \frac{\sqrt{m}(F_n\pow m-\bar F(t_m))}{[{\bar F(t_m)(1-\bar F(t_m))]^{1/2}}}\geq \frac{\sqrt{m}\lp \frac{{m}-1}{2{m}}-\bar F(t_m) \rp}{[\bar F(t_m)(1-\bar F(t_m))]^{1/2}} \rp\\
    & \qquad =\prob_n \lp \frac{\sqrt{m}(F_n\pow m-\bar F(t_m))}{[{\bar F(t_m)(1-\bar F(t_m))]^{1/2}}}\geq \underbrace{\frac{\sqrt{m}\lp \frac{1}{2}-\bar F(t_m) \rp}{[\bar F(t_m)(1-\bar F(t_m))]^{1/2}}}_{=:t_m^*}\rp \numberthis\label{eq:continuity-correction}
\end{align*}
\end{small}
Using a complement, and Lemma \ref{lemma:binomial-edgeworth}, we now have
\begin{align*}
    &  \prob_n\lp \frac{\sqrt{m}(F_n\pow m-\bar F(t_m))}{[{\bar F(t_m)(1-\bar F(t_m))]^{1/2}}}\geq t_m^*\rp\\
    & \qquad \qquad = 1- \prob_n\lp \frac{\sqrt{m}(F_n\pow m-\bar F(t_m))}{[{\bar F(t_m)(1-\bar F(t_m))]^{1/2}}}\leq t_m^*\rp\\
    & \qquad \qquad = \underbrace{1-\Phi(t_m^*)}_{A}+\underbrace{\frac{1-2\bar F(t_m)}{6\sqrt{m}(\bar F(t_m)(1-\bar F(t_m))^{1/2}}\Hbb_2( t_m^*)\phi( t_m^* )}_{B} + \Ocal_p\lp\frac{1}{m}\rp\numberthis\label{eq:raw-expansion}
\end{align*}
where $\Ocal_p$ is with respect to the full sample distribution $X_1,\dots,X_n$.
\paragraph{Expansion of $(1-2\bar F(t_m))/(\bar F(t_m)(1-\bar F(t_m)))^{1/2}$:} To find the proper Edgeworth expansion, we need to find exact expressions for $A$ and $B$. We use the following lemma which is proved below.
\begin{lemma}\label{lemma:barF-expansion}
    Under the hypothesis of Theorem \ref{thm:edgeworth},
    \begin{align*}
        \bar F(t_m) = \frac{1}{2}+{\frac{t\sigma}{\sqrt{m}}f(\mu)+\frac{\lp t\sigma\rp^2}{2m}f'(\mu)+\Ocal_p\lp m^{-1/4}n^{-1/2}\rp}.
    \end{align*}
\end{lemma}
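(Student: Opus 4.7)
The plan is to Taylor-expand $\bar F$ around the sample quantile $\hat\mu_n$ and separate three kinds of error: the discretization error of $\bar F$ at $\hat\mu_n$, the smooth Taylor residual of $F$, and the local fluctuation of the empirical process $G_n=\sqrt{n}(\bar F-F)$. Recall that from the continuity correction leading to \cref{eq:continuity-correction}, $t_m=\hat\mu_n+t\hat\sigma_m\pow{boot}/\sqrt{m}$, so $|t_m-\hat\mu_n|=O_p(m^{-1/2})$, and by Lemma \ref{lemma:varianceCLT} the scale factor obeys $\hat\sigma_m\pow{boot}=\sigma+O_p(m^{1/4}n^{-1/2})$, so both $\hat\mu_n$ and $t_m$ lie in a shrinking neighbourhood of $\mu$.

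First, I would decompose
\[
\bar F(t_m)-\tfrac12=\bigl[\bar F(\hat\mu_n)-\tfrac12\bigr]+\bigl[\bar F(t_m)-\bar F(\hat\mu_n)\bigr],
\]
and note that by the definition of the empirical median, $\bar F(\hat\mu_n)-\tfrac12=O(1/n)$, since $\bar F$ has step size $1/n$. Adding and subtracting $F(t_m)-F(\hat\mu_n)$ in the second bracket splits it into a deterministic Taylor piece plus an empirical-process increment $[G_n(t_m)-G_n(\hat\mu_n)]/\sqrt n$.

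Second, I would Taylor-expand the deterministic piece. Because $F\in\Sbb_2(\mu)$ the density $f$ is Lipschitz near $\mu$, so
\[
F(t_m)-F(\hat\mu_n)=(t_m-\hat\mu_n)f(\hat\mu_n)+\tfrac12(t_m-\hat\mu_n)^2 f'(\hat\mu_n)+O_p(m^{-3/2}).
\]
Combining the concentration bound $\hat\mu_n-\mu=O_p(n^{-1/2})$ from Step II of Theorem \ref{thm:main} with continuity of $f,f'$ at $\mu$, I would replace $f(\hat\mu_n)$ and $f'(\hat\mu_n)$ by $f(\mu)+O_p(n^{-1/2})$ and $f'(\mu)+O_p(n^{-1/2})$. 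Plugging $t_m-\hat\mu_n=t\hat\sigma_m\pow{boot}/\sqrt m$ together with $\hat\sigma_m\pow{boot}=\sigma+O_p(m^{1/4}n^{-1/2})$, all cross terms are dominated by $O_p(m^{-1/4}n^{-1/2})$, and one is left with
\[
F(t_m)-F(\hat\mu_n)=\frac{t\sigma}{\sqrt m}f(\mu)+\frac{(t\sigma)^2}{2m}f'(\mu)+O_p(m^{-1/4}n^{-1/2}).
\]

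Third, I would control the empirical-process increment $G_n(t_m)-G_n(\hat\mu_n)$. Since the two arguments lie within a window of width $O_p(m^{-1/2})$, a local modulus-of-continuity bound for the empirical process over the VC class $\{\mathbf 1\{\cdot\le s\}:s\in\Rbb\}$ (Talagrand or Alexander-type chaining) yields
\[
\sup_{|s-\hat\mu_n|\le 2t\hat\sigma_m\pow{boot}/\sqrt m}\bigl|G_n(s)-G_n(\hat\mu_n)\bigr|=O_p\bigl(m^{-1/4}\sqrt{\log n}\bigr),
\]
so that the empirical-process contribution is $O_p(m^{-1/4}n^{-1/2})$ up to the logarithmic factor that is absorbed throughout the paper. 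The $O(1/n)$ discretization term is likewise dominated by $O_p(m^{-1/4}n^{-1/2})$ under $m=o(n^\lambda)$, $\lambda<1$, and assembling the three estimates yields the stated expansion. The principal obstacle is this third step: the crude uniform DKW bound on $G_n$ gives only $O_p(n^{-1/2})$ for each argument and would destroy the Edgeworth rate; what is needed is the square-root-width rule for oscillations of $\bar F$, which on an interval of length $\delta$ gives increments of order $\sqrt{\delta/n}$, and matching $\delta\asymp m^{-1/2}$ is precisely what produces the $m^{-1/4}n^{-1/2}$ remainder claimed by the lemma.
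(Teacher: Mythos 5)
You take a genuinely different and, as far as I can see, more careful route than the paper. The paper's proof expands $F(t_m)$ around the \emph{population} quantile $\mu$ and then invokes the crude uniform bound $\bar F(\cdot)-F(\cdot)=\Ocal_p(n^{-1/2})$ together with $\hat\mu_n-\mu=\Ocal_p(n^{-1/2})$, before asserting
\[
\hat\mu_n+t\hat\sigma_m\pow{boot}-\mu=\frac{t\sigma}{\sqrt m}+\Ocal_p\lp m^{-1/4}n^{-1/2}\rp,
\]
which silently absorbs two $\Ocal_p(n^{-1/2})$ terms into the strictly smaller $\Ocal_p(m^{-1/4}n^{-1/2})$; as written, that step does not follow for $m>1$. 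Your decomposition --- expanding around the \emph{sample} quantile $\hat\mu_n$, using the discretization identity $\bar F(\hat\mu_n)=\tfrac12+O(1/n)$, and isolating the local empirical-process increment $G_n(t_m)-G_n(\hat\mu_n)$ over a window of width $\Ocal_p(m^{-1/2})$ --- makes explicit the cancellation that the paper's argument glosses over, and correctly locates the genuine source of the remainder in the oscillation of $\bar F-F$ at the $\sqrt{\delta/n}$ scale. This is the right tool, and it is what actually delivers (up to logs) the $m^{-1/4}n^{-1/2}$ rate.

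Two caveats. First, the $\sqrt{\log n}$ factor you acknowledge is not present in the lemma's claim; you would either need a sharper (Alexander or KMT--type) local bound without the log, or accept a logarithmic loss. Second, both your Taylor residual $\Ocal_p(m^{-3/2})$ and the paper's $\ocal_p\lp(t_m-\mu)^2\rp=\ocal_p(1/m)$ exceed $\Ocal_p(m^{-1/4}n^{-1/2})$ whenever $m\lesssim n^{2/5}$; this residual is absorbed by the $\Ocal_p(1/m)$ term in Theorem~\ref{thm:edgeworth} but is not covered by the remainder the lemma itself states, a point shared by the paper's own proof and arguably a weakness in the lemma's statement rather than in your argument.
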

For convenience of notation, introduce the notation $T_m$ as follows.
\begin{align*}
    T_{m} := {\frac{t\sigma}{\sqrt{m}}f(\mu)+\frac{\lp t\sigma\rp^2}{2m}f'(\mu)+\Ocal_p\lp m^{-1/4}n^{-1/2}\rp}
\end{align*}
Let $g(x)=-2x/\sqrt{(1/4-x^2/4)}$. Then observe that $g(x)$ admits the following Taylor series expansion.
\begin{align*}
    g(x) = -4x+\ocal(x^3).
\end{align*}
Using this fact and Lemma \ref{lemma:barF-expansion}, one can expand $(1-2\bar F(t_m))/(\bar F(t_m)(1-\bar F(t_m)))^{1/2}=g(T_m)$ as
\begin{align*}
    \frac{(1-2\bar F(t_m))}{(\bar F(t_m)(1-\bar F(t_m)))^{1/2}} & = -4T_m+\ocal(T_m^3)\\
    & = {-\frac{4t\sigma}{\sqrt{m}}f(\mu)-\frac{2\lp t\sigma\rp^2}{m}f'(\mu)+\Ocal_p\lp m^{-1/4}n^{-1/2}\rp+\Ocal_p(1/m^{3/2}).}\numberthis\label{eq:term1}
\end{align*}
{   Observe that $\ocal(T_m^3)=\ocal(1/m^{3/2})$. Therefore, the leading terms in the expansion arise from $T_m$}.
\paragraph{Expansion of $t_m^*$ and $\Hbb_2(t_m^*)\phi(t_m^*)$:} Observe that $t_m^*=\sqrt{m}g(T_m)/2$. Therefore, we immediately recover the following expansion for $t_m^*$.
\begin{align*}
    t_m^* = -2t\sigma f(\mu)-\frac{\lp t\sigma\rp^2}{\sqrt{m}}f'(\mu)+\Ocal_p(m^{1/4}n^{-1/2}).
\end{align*}

Let $\phi\pow 2$ denote the second derivative of $\phi$, and observe that $\Hbb_2(x)=\phi\pow 2(x)/\phi(x)$. Equivalently, $\Hbb_2(x)\phi(x)=\phi\pow 2(x)$, and $\frac{d}{dx}\Hbb_2(x)\phi(x)=\phi\pow 3(x)$. Using this fact, and the fact that $\Hbb_2$ and $\phi$ are both even functions, the Taylor series expansion of $\Hbb_2(t_m^*)\phi(t_m^*)$ around $-2t\hat \sigma_m\pow{boot} f(\mu)$ is given by
\begin{small}
    \begin{align*}
    &\qquad \Hbb_2(-2t\sigma f(\mu))\phi(-2t\sigma f(\mu))-\frac{\lp t\sigma\rp^2}{\sqrt{m}}f'(\mu) \phi\pow 3(-2t\sigma f(\mu))+\Ocal_p(m^{1/4}n^{-1/2}) + \Ocal_p\lp \frac{1}{m} \rp\numberthis\\
    &\qquad = \Hbb_2(2t\sigma f(\mu))\phi(2t\sigma f(\mu))-\frac{\lp t\sigma\rp^2}{\sqrt{m}}f'(\mu)\phi\pow 3(-2t\sigma f(\mu)) +\Ocal_p(m^{1/4}n^{-1/2}) + \Ocal_p\lp \frac{1}{m} \rp\label{eq:Term2}
\end{align*}
\end{small}
\paragraph{Substitution and Rearrangement:} We now return to \cref{eq:raw-expansion} to substitute the $A$ and $B$. 
\begin{align*}
    1-\Phi(t_m^*) & = 1-\Phi\lp-2t\sigma f(\mu)-\frac{\lp t\sigma\rp^2}{\sqrt{m}}f'(\mu)+\Ocal_p(m^{1/4}n^{-1/2})\rp \\
    & =  1-\Phi\lp-2t\sigma f(\mu)\rp+\frac{\lp t\sigma\rp^2}{\sqrt{m}}f'(\mu)\phi(-2t\sigma f(\mu)) \Ocal_p(m^{1/4}n^{-1/2})\\
    & = \Phi\lp2t\sigma f(\mu)\rp+\frac{\lp t\sigma\rp^2}{\sqrt{m}}f'(\mu)\phi(2t\sigma f(\mu))+\Ocal_p(m^{1/4}n^{-1/2})
\end{align*}
Next, we write the full expansion of $B=g(T_m)\Hbb_2(t_m^*)\phi(t_m^*)/(6\sqrt{m})$ and bound it using the following lemma which is proved below.
\begin{lemma}\label{lemma:gT_m}
    Under the conditions of Theorem \ref{thm:edgeworth}, 
    \begin{align*}
        \frac{g(T_m)\Hbb_2(t_m^*)\phi(t_m^*)}{6\sqrt{m}} = \Ocal_p\lp \frac{1}{m}\rp+\Ocal_p(m^{-1/4}n^{-1/2})
    \end{align*}
\end{lemma}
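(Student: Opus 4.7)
The plan is to treat the expression as a product of three already-controlled factors---$g(T_m)$, $\Hbb_2(t_m^*)\phi(t_m^*)$, and $1/(6\sqrt m)$---and multiply their asymptotic expansions together, carefully bookkeeping each cross term. Since both constituent expansions have already been produced in \cref{eq:term1} and \cref{eq:Term2}, no new analytic tool is required: the proof reduces to a careful accounting of error orders.

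First, I would divide the expansion from \cref{eq:term1} by $6\sqrt m$ to obtain
\[
\frac{g(T_m)}{6\sqrt m} \;=\; -\frac{2t\sigma f(\mu)}{3m} \;-\; \frac{(t\sigma)^2 f'(\mu)}{3m^{3/2}} \;+\; \Ocal_p(m^{-3/4}n^{-1/2}) \;+\; \Ocal_p(m^{-2}),
\]
so that $g(T_m)/(6\sqrt m)$ sits in $\Ocal_p(1/m)$ as its dominant order. Next, from \cref{eq:Term2}, the second factor decomposes as $\Hbb_2(2t\sigma f(\mu))\phi(2t\sigma f(\mu))$---a deterministic $\Ocal(1)$ quantity for fixed $t$---plus remainders of total order $\Ocal_p(m^{-1/2}) + \Ocal_p(m^{1/4}n^{-1/2}) + \Ocal_p(1/m)$.

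Multiplying the two expansions, the leading product
\[
-\frac{2t\sigma f(\mu)}{3m}\,\Hbb_2(2t\sigma f(\mu))\phi(2t\sigma f(\mu))
\]
is manifestly $\Ocal_p(1/m)$. I would then enumerate the cross terms: $\Ocal_p(1/m)\cdot\Ocal_p(m^{-1/2}) = \Ocal_p(m^{-3/2})$, $\Ocal_p(1/m)\cdot\Ocal_p(m^{1/4}n^{-1/2}) = \Ocal_p(m^{-3/4}n^{-1/2})$, and $\Ocal_p(m^{-3/4}n^{-1/2})\cdot\Ocal(1) = \Ocal_p(m^{-3/4}n^{-1/2})$; every remaining cross product is strictly smaller. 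Since $m^{-3/4}n^{-1/2}$ and $m^{-3/2}$ are each dominated by the envelope $\Ocal_p(1/m) + \Ocal_p(m^{-1/4}n^{-1/2})$, the stated bound follows.

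The only real obstacle is organisational: ensuring that no cross term is overlooked and that each order estimate indeed fits inside the stated envelope. There is no conceptual difficulty, but one must avoid the temptation to keep the finer rates $m^{-3/4}n^{-1/2}$ or $m^{-3/2}$, because the ambient Theorem~\ref{thm:edgeworth} only needs the coarser $\Ocal_p(1/m) + \Ocal_p(m^{-1/4}n^{-1/2})$ envelope, and loosening to this form keeps the final assembly clean.
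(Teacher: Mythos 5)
Your proposal is correct and follows essentially the same route as the paper: both arguments multiply the expansions in \cref{eq:term1} and \cref{eq:Term2}, track cross-term orders, and absorb everything into the $\Ocal_p(1/m)+\Ocal_p(m^{-1/4}n^{-1/2})$ envelope. The only cosmetic difference is that you divide $g(T_m)$ by $6\sqrt m$ before multiplying, while the paper multiplies $g(T_m)$ by $\Hbb_2(t_m^*)\phi(t_m^*)$ first and divides by $6\sqrt m$ at the end; these are algebraically equivalent, and your order bookkeeping is, if anything, slightly sharper than what is needed.
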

We can now collect all the terms in \cref{eq:raw-expansion} and use Lemma \ref{lemma:gT_m} to get
\begin{align*}
    \prob_n\lp\frac{(\hat \mu_m\pow{boot}  - \hat \mu_n)}{\hat \sigma_m\pow{boot}} \leq  t\rp & = \Phi\lp2t\sigma f(\mu)\rp+\frac{\lp t\sigma\rp^2}{\sqrt{m}}f'(\mu)\phi(2t\sigma f(\mu)) \\
    & + \Ocal_p\lp \frac{1}{m}\rp+\Ocal_p(m^{-1/4}n^{-1/2}).
\end{align*}
Finally, observe that $\sigma=1/(2f(\mu))$. Therefore, $2\sigma f(\mu)=1$ and we have,
\begin{align*}
    \prob_n\lp\frac{(\hat \mu_m\pow{boot}  - \hat \mu_n)}{\hat \sigma_m\pow{boot}} \leq  t\rp & = \Phi\lp t\rp+\frac{\lp t\sigma\rp^2}{\sqrt{m}}f'(\mu)\phi(2t\sigma f(\mu)) \\
    & + \Ocal_p\lp \frac{1}{m}\rp+\Ocal_p(m^{-1/4}n^{-1/2}).
\end{align*}

This completes the proof.
\end{proof}
Now we prove lemmas \ref{lemma:binomial-edgeworth}, \ref{lemma:barF-expansion}, and \ref{lemma:gT_m}.
\paragraph{Proof of Lemma \ref{lemma:binomial-edgeworth}}
\begin{proof}
To prove Lemma \ref{lemma:binomial-edgeworth}, we briefly introduce polygonal approximants of CDF's. 

\paragraph{Feller's Edgeworth expansion on Lattice:}To make matters concrete, we first adapt from \cite{feller_introduction_1991} the notion of polygonal approximants on lattice. If $F$ is a CDF defined on the set of lattice points $b\pm ih$, with $i\in \Zbb^+$ and $h$ being the span of $F$, then the \textit{polygonal approximant} $F^\#$ of $F$ is given by 
\begin{align*}
    F^\# = \begin{cases}
        F(x) & \text{ if } x=b\pm \lp i+\frac{1}{2}\rp h\\
        \frac{1}{2}[F(x)+F(x-))] & \text{ if } x = b\pm i h
    \end{cases}
\end{align*}

Following the terminology of Page 540 in \cite{feller_introduction_1991}, observe that the empirical m-out-of-n bootstrap CDF $F_n\pow m$ is an m-fold-convolution of $\bar F$ (i.e. it is the CDF of $m$ i.i.d. samples from $\bar F$). 
The notation in the following result is self-contained. It is due to \cite{feller_introduction_1991} (Theorem 2, Chapter XVI, Section 4).
\begin{proposition}[Theorem 2 in \citeauthor{feller_introduction_1991}, Page 540]\label{lemma:lattice-edgeworth}
    Let $X_1,\dots, X_n$ be $n$ i.i.d. random variables on a lattice with finite third moment $\mu_3$ and variance $\sigma^2$. Let $F^\#$ be the polygonal approximant of the empirical CDF on a lattice with span $h/(\sigma\sqrt{n})$, and $\Hbb_2(\cdot)=(x^2-1)$ be the second order Hermite polynomial. Then, 
    \begin{align*}
        F^\#(t) = \Phi(t)-\frac{\mu_3}{6\sigma^3\sqrt{n}} \phi(t) \Hbb_2(t)+\Ocal\lp\frac{1}{n}\rp.
    \end{align*}
\end{proposition}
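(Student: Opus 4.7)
The plan is to follow the classical Fourier-analytic route to Edgeworth expansions, adapted to the lattice setting via the polygonal approximation trick. I would begin by computing the characteristic function $\chi_n(t)$ of the standardized sum $S_n/(\sigma\sqrt{n})$. A Taylor expansion of $\log \chi(t/\sqrt{n})$ around zero, valid because the third moment is finite, yields
\[
\chi_n(t) = e^{-t^2/2}\Bigl(1 + \tfrac{(it)^3 \mu_3}{6\sigma^3 \sqrt{n}}\Bigr) + \Ocal(n^{-1}),
\]
uniformly on compact $t$-sets. The candidate approximant $G(x) = \Phi(x) - \tfrac{\mu_3}{6\sigma^3\sqrt{n}}\phi(x)\Hbb_2(x)$ is chosen so that its Fourier transform $\widehat G(t)$ agrees with the right-hand side above to order $\Ocal(n^{-1})$; this follows from the standard identity linking Hermite polynomials times $\phi$ with powers of $it$ times $e^{-t^2/2}$.

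The crux of the argument is the inversion step, which fails for pure lattice distributions because $\chi_n$ is periodic with period $2\pi\sigma\sqrt{n}/h$ and does not decay at infinity. The polygonal approximation $F^\#$ exists precisely to repair this: its Fourier transform equals $\chi_n(t) \cdot \operatorname{sinc}\bigl(t h /(2\sigma\sqrt{n})\bigr)$, supplying the $1/|t|$ decay needed for absolute convergence of the Fourier inversion. I would then apply Esseen's smoothing inequality at truncation $T = c\sigma\sqrt{n}/h$, just inside the first period, obtaining
\[
\sup_x |F^\#(x) - G(x)| \le \frac{1}{\pi}\int_{-T}^{T} \frac{|\widehat{F^\#}(t) - \widehat G(t)|}{|t|}\,dt + \Ocal(1/T).
\]

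The integral would be split into an inner region $|t| \le n^{1/4}$, where the Taylor expansion controls the integrand by $\Cbb\, t^2 e^{-t^2/2}/n$ and yields total contribution $\Ocal(1/n)$, and an outer region $n^{1/4} < |t| \le T$, where non-degeneracy of the lattice step gives $|\chi(u)| \le 1-\eta$ on any compact set disjoint from $(2\pi/h)\mathbb{Z}$, forcing $|\chi_n(t)|$ to decay exponentially and rendering the outer contribution negligible. The hard part will be the behavior near the endpoints of the period, where $|\chi|$ returns to magnitude one but the sinc factor must compensate; here a careful estimate of the sinc near its zero at $t = 2\pi\sigma\sqrt{n}/h$ is needed, together with a lower bound on the distance from the evaluation points of $\chi_n$ to the lattice of its maxima. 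Once these two regions are handled, the truncation error $\Ocal(1/T) = \Ocal(h/(\sigma\sqrt{n}))$ is absorbed (using that $h$ is fixed), and the claimed uniform remainder $\Ocal(1/n)$ follows.
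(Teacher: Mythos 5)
The paper does not actually prove this proposition: it is presented as a direct citation of Feller's Theorem~2 (Chapter~XVI, Section~4, page~540), and the only addition the paper makes is the remark that ``inspecting the proof reveals that the actual rate is $\Ocal(1/n)$'' rather than Feller's stated $\ocal(1/\sqrt{n})$. Your sketch therefore re-derives a result the paper takes as given. That said, your Fourier/Esseen route --- expand $\log\chi(t/\sqrt{n})$, observe that the $\mathrm{sinc}$ factor introduced by the polygonal approximation supplies the decay that makes Fourier inversion absolutely convergent, split into inner and outer regions, then handle the period endpoints where $|\chi|$ returns to $1$ --- is precisely Feller's own strategy, so your proposal is a faithful reconstruction of the cited argument.

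There is, however, a genuine gap in the rate you claim. Your inner-region estimate asserts that the Taylor expansion of the characteristic function has an $\Ocal(1/n)$ remainder, so that $|\widehat{F^{\#}}(t)-\widehat{G}(t)|$ is bounded by $\Cbb\,t^{k}e^{-t^{2}/2}/n$ for some polynomial power $k$. But under the stated hypothesis --- finite third moment only --- the Taylor remainder in $\log\chi(t/\sqrt{n})$ past the cubic term is $\ocal(|t|^{3}n^{-3/2})$, not $\Ocal(|t|^{4}n^{-2})$; the latter requires a fourth moment. This is exactly why Feller's original statement carries an $\ocal(n^{-1/2})$ remainder rather than $\Ocal(n^{-1})$. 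To obtain the upgrade to $\Ocal(n^{-1})$ you need either $\expec[X_1^{4}]<\infty$ or some other structural hypothesis beyond what the proposition states. (In the paper's actual application the underlying law is Binomial, which has all moments, so the upgrade is harmless there --- but as written, for a general lattice variable with only a finite third moment, the $\Ocal(1/n)$ bound is not delivered by the argument you outline.)
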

A key observation is that if $X$ is a Binomial random variable with parameters $m$ and $p$. Then, ${\sqrt{m}(X-p)}/{\sqrt{p(1-p)}}$ 
has a lattice distribution with span $1/\sqrt{mp(1-p)}$ and 
\begin{align*}
\lc\frac{\sqrt{m}(X-p)}{\sqrt{p(1-p)}}<t\rc \numberthis
\end{align*}
is an event dependent solely on a standardised Binomial distribution. We make two remarks in conclusion:
\begin{itemize}
    \item Theorem 2 in \citeauthor{feller_introduction_1991} has $\ocal(1/\sqrt{n})$ in the statement. However, inspecting the proof reveals that the actual rate is $\Ocal(1/n)$.
    \item The $\Ocal(1/n)$ is independent of the term $t$.
\end{itemize}
The rest of the proof follows.
\end{proof}
\paragraph{Proof of Lemma \ref{lemma:barF-expansion}}
\begin{proof}
    Recall that by CLT $\bar F(\cdot)=\frac{1}{n}\sum_{i=1}^n\indicator[X_i<\cdot]=F(\cdot)+\Ocal_p(1/\sqrt{n})$.
Let $f=F'$ be the PDF. We now have, 
\begin{align*}
    & \bar F(t_m) = F(t_m) + \Ocal_p\lp \frac{1}{\sqrt{n}} \rp\\
    \text{(By Taylor Series)} & = F(\mu)+(t_m^*-\mu)f(\mu)+\frac{(t_m^*-\mu)^2f'(\mu)}{2}+\ocal_p((t_m^*-\mu)^2)+\Ocal_p\lp\frac{1}{\sqrt{n}}\rp\\
    & = \frac{1}{2}+\lp \hat \mu_n+{t\hat \sigma_m\pow{boot}}-\mu \rp f(\mu)+\frac{(\hat \mu_n+{t\hat \sigma_m\pow{boot}}-\mu)^2f'(\mu)}{2}\\
    & \qquad + \ocal_p((t_m^*-\mu)^2)+\Ocal_p\lp\frac{1}{\sqrt{n}}\rp\numberthis\label{eq:ex-barF}
\end{align*}
We now carefully substitute the terms. Observe that $\hat \mu_n-\mu=\Ocal_p(1/\sqrt{n})$. Furthermore, Lemma \ref{lemma:varianceCLT} implies
\begin{align*}
   \lp \hat \sigma_m\pow{boot}\rp^2 = \frac{\sigma^2}{m}+\Ocal_p(m^{-3/4}n^{-1/2})
\end{align*}
Therefore, 
\begin{align*}
    \hat \sigma_m\pow{boot} = \frac{\sigma}{\sqrt{m}}+\Ocal_p\lp m^{-1/4}n^{-1/2}\rp.
\end{align*}
Substituting, we get
\begin{align*}
   \hat \mu_n+{t\hat \sigma_m\pow{boot}}-\mu =\frac{t\sigma}{\sqrt{m}}+\Ocal_p\lp m^{-1/4}n^{-1/2}\rp.
\end{align*}
and
\begin{align*}
    \lp\hat \mu_n+{t\hat \sigma_m\pow{boot}}-\mu\rp^2 & = \frac{\lp t\sigma\rp^2}{m}+\Ocal_p\lp {m^{-3/4}n^{-1/2}}\rp+\Ocal_p\lp m^{-1/2}n^{-1}\rp.\\
    & = \frac{\lp t\sigma\rp^2}{m}+\Ocal_p\lp {m^{-1/4}n^{-1/2}}\rp.
\end{align*}
Substituting this into \cref{eq:ex-barF} we get
\begin{align*}
    \bar F(t_m) = \frac{1}{2}+\frac{t\sigma}{\sqrt{m}}f(\mu)+\frac{\lp t\sigma\rp^2}{2m}f'(\mu)+\Ocal_p\lp m^{-1/4}n^{-1/2}\rp.
\end{align*}
This completes the proof.
\end{proof}

\paragraph{Proof of Lemma \ref{lemma:gT_m}}
\begin{proof}
    Recall from \cref{eq:term1} that
\begin{align*}
    g(T_m) = \frac{(1-2\bar F(t_m))}{(\bar F(t_m)(1-\bar F(t_m)))^{1/2}}={-\frac{4t\sigma}{\sqrt{m}}f(\mu)-\frac{2\lp t\sigma\rp^2}{m}f'(\mu)+\Ocal_p(m^{-3/4}n^{-1/2})}
\end{align*}
Therefore, 
\begin{align*}
    & g(T_m)\Hbb_2(t_m^*)\phi(t_m^*)\\
    & \ = \lp{-\frac{4t\sigma}{\sqrt{m}}f(\mu)-\frac{2\lp t\sigma\rp^2}{m}f'(\mu)+\Ocal_p(m^{-1/4}n^{-1/2})}\rp\\
    & \ \times \lp \Hbb_2(2t\sigma f(\mu))\phi(2t\sigma f(\mu))-\frac{\lp t\sigma\rp^2}{\sqrt{m}}f'(\mu)\phi\pow 3(-2t\sigma f(\mu))+\Ocal_p(m^{1/4}n^{-1/2}) + \Ocal_p\lp \frac{1}{m} \rp\rp\\
    & \ = -\frac{4t\sigma}{\sqrt{m}}f(\mu)\Hbb_2(2t\sigma f(\mu))\phi(2t\sigma f(\mu))-\frac{2\lp t\sigma\rp^2}{m}f'(\mu)\phi\pow 3(-2t\sigma f(\mu))\Hbb_2(2t\sigma f(\mu))\phi(2t\sigma f(\mu))\\
    & \ +\frac{4(t\sigma)^3}{{m}}f(\mu)f'(\mu) \phi\pow 3(-2t\sigma f(\mu))+\Ocal_p(m^{1/4}n^{-1/2})+\ocal_p\lp\frac{1}{m}\rp.
\end{align*}
Absorbing $\ocal_p(m{^-1})$ into $\Ocal_p(m^{-1})$ terms we now have
\begin{align*}
     \frac{g(T_m)\Hbb_2(t_m^*)\phi(t_m^*)}{6\sqrt{m}} = \Ocal_p\lp \frac{1}{m}\rp+\Ocal_p(m^{-1/4}n^{-1/2})
\end{align*}
This finishes the proof.
\end{proof}
\section{Supplementary Results and Proofs}
The following result is from \cite{shao1995bootstrap}.
\begin{proposition}\label{prop:shao_and_tu}
Suppose $F$ is a CDF on $\mathbb{R}$ and let $X_1, X_2, \dots$ be i.i.d. $F$.  
Suppose $\theta = \theta(F)$ is such that $F(\theta) = 1$ and $F(x) < 1$ for all $x < \theta$.  
Suppose, for some $\delta > 0$ and for all $x$, 
$$P_F \{n^{1/\delta} (\theta - X_{(n)}) > x\} \to e^{-(\frac{x}{\theta})^\delta}$$
Let 
\[
T_n = n^{1/\delta} (\theta - X_{(n)}) \textit{\ and \ } T_{m,n}^* = m^{1/\delta} (X_{(n)} - X_{(m)}^*),
\]
and define 
\[
H_n(x) = P_F\{T_n \leq x\} \quad \text{and} \quad H_{\text{Boot},m,n}(x) = P_*\{T_{m,n}^* \leq x\}.
\]
Then,
\begin{itemize}
    \item[(a)] If $m = o(n)$, then $K(H_{\text{Boot},m,n}, H_n) \overset{\text{P}}{\to} 0$.
    \item[(b)] If $m = o\left(\frac{n}{\log\log n}\right)$, then $K(H_{\text{Boot},m,n}, H_n) \overset{\text{a.s.}}{\to} 0$.
\end{itemize}

\end{proposition}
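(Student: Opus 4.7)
The plan is to exploit the explicit closed form of the bootstrap distribution function. Conditional on $X_1,\dots,X_n$, the variable $X_{(m)}^*$ is the maximum of $m$ i.i.d.\ draws from $\bar F$, so $P_{\ast}(X_{(m)}^* \le t) = \bar F(t)^m$, which immediately gives
\[
H_{\text{Boot},m,n}(x) \;=\; 1 - \bar F\bigl(X_{(n)} - x m^{-1/\delta}\bigr)^m.
\]
The corresponding true distribution is $H_n(x) = 1 - F(\theta - x n^{-1/\delta})^n$, and the tail hypothesis directly gives $H_n(x) \to H_\infty(x) := 1 - e^{-(x/\theta)^\delta}$. Because $H_\infty$ is continuous, Polya's theorem reduces the claim on the Kolmogorov--Smirnov distance $K$ to pointwise convergence $H_{\text{Boot},m,n}(x) \to H_\infty(x)$ (in probability or a.s., respectively) on a dense set of $x$.

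I would next work with the shifted variables $Z_i := \theta - X_i$, so that the tail hypothesis reads $P(Z_i \le z) \sim (z/\theta)^\delta$ as $z \downarrow 0$. The extreme $Z_{(1)} = \theta - X_{(n)}$ is of order $n^{-1/\delta}$ both in probability and almost surely (by Borel--Cantelli applied to $\{Z_{(1)} > \epsilon n^{-1/\delta}\}$), hence $o(m^{-1/\delta})$ as soon as $m = o(n)$. A short computation using the regular-variation formula $F(\theta - a) - F(\theta - b) \sim (\delta a^{\delta-1}/\theta^\delta)(b-a)$ then shows that replacing the random threshold $X_{(n)} - x m^{-1/\delta}$ by the deterministic $\theta - x m^{-1/\delta}$ changes $m(1-\bar F)$ by $O_p\bigl((m/n)^{1/\delta}\bigr) = o_p(1)$. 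Combined with the Taylor expansion $\bar F(y)^m = \exp\{-m(1-\bar F(y)) + O(m(1-\bar F(y))^2)\}$, the whole problem reduces to showing
\[
 m\bigl(1 - \bar F(\theta - x m^{-1/\delta})\bigr) \;\longrightarrow\; (x/\theta)^\delta.
\]
Writing $N_{m,n}(x) := n\bigl(1 - \bar F(\theta - x m^{-1/\delta})\bigr)$, this is a $\mathrm{Binomial}(n,p_m)$ count with $p_m \sim (x/\theta)^\delta/m$; the left side is $(m/n)N_{m,n}(x)$, whose mean $m p_m$ already tends to $(x/\theta)^\delta$.

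For part (a), $\mathrm{Var}\bigl((m/n)N_{m,n}(x)\bigr) = (m^2/n)\,p_m(1-p_m) = O(m/n) \to 0$, so Chebyshev yields convergence in probability. For part (b), the a.s.\ statement requires sharper concentration: I would apply Bernstein's inequality to $N_{m,n}$ along a geometric subsequence $n_k = \lfloor \alpha^k \rfloor$, $\alpha > 1$. The Bernstein tails are summable over $k$ with deviation $O\bigl(\sqrt{p_{m_k} \log k/n_k}\bigr)$, and since $\log k \asymp \log\log n_k$, Borel--Cantelli delivers almost surely $|N_{m,n_k}/n_k - p_{m_k}| = O\bigl(\sqrt{p_{m_k} \log\log n_k / n_k}\bigr)$. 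Multiplying by $m/n$, the error becomes $O\bigl(\sqrt{m \log\log n / n}\bigr)$, which is $o(1)$ precisely under the Shao--Tu condition $m = o(n/\log\log n)$; monotonicity of $\bar F(\theta - \cdot)$ in its argument then lets one fill in between subsequence points.

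The main obstacle is obtaining the correct rate in part (b). A naive appeal to the uniform empirical-process LIL would force $m = o(\sqrt{n/\log\log n})$ and miss the sharp condition; the right tool is a local, variance-scaled Bernstein-type bound tailored to the vanishing success probability $p_m \to 0$, promoted to an a.s.\ statement by the subsequence Borel--Cantelli trick. Once this piece is in hand, the coupling of the data-dependent threshold $X_{(n)}$ with the small-probability count, the Taylor linearization of $\bar F(\cdot)^m$, and the passage from pointwise to Kolmogorov convergence via Polya's theorem are routine.
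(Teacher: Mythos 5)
The paper does not supply a proof of this statement: it is imported verbatim from Shao and Tu's monograph (as the sentence just before the proposition says), so there is no internal argument to compare against. Your proposal is therefore a fresh reconstruction, and its overall architecture is sound: reduce the Kolmogorov-distance claim to pointwise convergence via Polya's theorem; write $H_{\text{Boot},m,n}(x)$ explicitly as $1 - \bar F(X_{(n)} - x m^{-1/\delta})^m$; linearize $\bar F(\cdot)^m$ through the exponential; compare the resulting binomial count $N_{m,n}$ to its mean via Chebyshev (for (a)) or Bernstein along subsequences (for (b)); and handle the random centering at $X_{(n)}$ separately. The Bernstein-along-subsequences recipe for part (b), yielding the $\sqrt{m\log\log n/n}$ rate and hence exactly the condition $m = o(n/\log\log n)$, is the correct refinement over the naive uniform LIL.

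There are, however, two concrete issues. First, the Borel--Cantelli argument you invoke for the extremal spacing is flawed as stated: $P(Z_{(1)} > \epsilon n^{-1/\delta})$ tends to the strictly positive constant $e^{-(\epsilon/\theta)^\delta}$, so these probabilities are not summable (they do not even tend to zero). The relevant almost-sure statement is $Z_{(1)} = O\bigl((\log\log n/n)^{1/\delta}\bigr)$, and it is obtained by the same subsequence-plus-monotonicity device you already apply to $N_{m,n}$: take $n_k$ geometric, observe that $P\bigl(Z_{(1),n_k} > \theta(c\log\log n_k/n_k)^{1/\delta}\bigr) \approx (\log n_k)^{-c}$ is summable over $k$ for $c > 1$, then interpolate using the fact that $Z_{(1),n}$ is nonincreasing in $n$. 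This is not cosmetic: it is precisely the $(\log\log n/n)^{1/\delta}$ bound that makes $m^{1/\delta} Z_{(1)} = O\bigl((m\log\log n/n)^{1/\delta}\bigr) = o(1)$ under $m = o(n/\log\log n)$ --- the sharp Shao--Tu condition. The cruder subsequence-free Borel--Cantelli bound only gives $Z_{(1)} = O\bigl((\log n/n)^{1/\delta}\bigr)$, and the centering step would then demand the stronger restriction $m = o(n/\log n)$. You should also carry this almost-sure centering estimate into part (b) explicitly; as written you only verify the centering replacement in probability.

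Second, the differentiated regular-variation formula $F(\theta - a) - F(\theta - b) \sim (\delta a^{\delta - 1}/\theta^\delta)(b - a)$ presumes a local derivative of $F$ that is not part of the hypothesis; the assumption only gives $1 - F(\theta - z) = (z/\theta)^\delta(1 + o(1))$. Fortunately the stronger formula is unnecessary: with $a = xm^{-1/\delta}$ and $h = Z_{(1)}$, the multiplicative $(1+o(1))$ form alone yields $m\,\bigl|F(\theta - a) - F(\theta - a - h)\bigr| \le (x/\theta)^\delta\bigl|(1 + h/a)^\delta - 1\bigr| + o(1)$, and the first term vanishes exactly when $h/a = m^{1/\delta}Z_{(1)}/x \to 0$. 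Rephrasing the centering step this way removes the extraneous smoothness assumption and closes the argument.
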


\begin{example}\label{example:boot-fail}
    Let $F(x)=(x/\theta)\indicator[0\leq x\leq \theta]$ be the uniform distribution on $[0,\theta]$ and $X_{(i)}$ is the $i$-th order statistics. $T_n:=n\lp\theta-X_{(n)}\rp$ . Orthodox bootstrap is inconsistent and m-out-of-n bootstrap is consistent for any $m=\ocal(n)$.
\end{example}

\subsection{Proof of Example \ref{example:boot-fail}}
\begin{proof}
    It follows from Proposition \ref{prop:shao_and_tu} that m-out-of-n bootstrap is consistent for any $m = o(n)$. We include a simple proof of the inconsistency of bootstrap. Recall that the survival function of $T_n$ is given by
\begin{align*}
    \bar F(x) = \prob\lp T_n>x\rp = \prob\lp n\lp\theta-X_{(n)}\rp>x \rp = \prob \lp X_{(n)} < \theta-\frac{x}{n} \rp & = \prod_{i=1}^n\prob\lp X_i<\theta-\frac{x}{n} \rp\\
    & = \lp 1-\frac{x}{n\theta}\rp^n \xrightarrow{n\to\infty} e^{-\nicefrac{x}{\theta}}
\end{align*}
which is the survival function of an exponential distribution with parameter $\theta$. Now we show that orthodox bootstrap estimator of $T_n$ is inconsistent. Let $X_{(n)}^{boot}$ be the bootstrapped maximum. Then
$$T_{boot} = n\lp X_{(n)}^{boot}-X_{(n)}\rp$$
and
\begin{align*}
    \prob\pow *\lp T_{boot}\leq x \rp  \geq   \prob\pow *\lp T_{boot}=0 \rp = \prob\pow *\lp X_{(n)}^{boot} = X_{(n)} \rp &= 1-\prob\pow *\lp X_{(n)}^{boot} < X_{(n)} \rp\\
    & = 1-\lp\frac{n-1}{n}\rp^n \xrightarrow{n\rightarrow\infty} 1-e^{-1}.
\end{align*}
\end{proof}

\subsection{Proof of Proposition \ref{prop:unbounded-var}}

\begin{proof}
We recall the following lemma from \cite{ghosh_note_1984}

\begin{lemma}
\label{lem:heavy}
Let $\{U_i\}_{i \geq 1}$ be a sequence of random variables such that:
\begin{enumerate}
\item $\{U_i\}_{i \geq 1}$ is tight.
\item $E[U_i] \to \infty$, but $E[U_i^2] < \infty$ for all $i$.
\end{enumerate}
Then, 
\[
\mathrm{Var}(U_n) \to \infty.
\]
\end{lemma}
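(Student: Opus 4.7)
).}
The plan is to argue by contradiction, using Chebyshev's inequality to show that any subsequence along which the variance stays bounded must concentrate around its (diverging) mean, contradicting tightness. So suppose, for contradiction, that $\Var(U_n)$ does not tend to infinity. Then there exist a constant $K<\infty$ and a subsequence $\{n_k\}$ with $\Var(U_{n_k})\le K$ for every $k$.

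Next I would exploit the two conditions simultaneously. By Chebyshev's inequality applied along this subsequence, for every $t>0$,
\[
\prob\bigl(|U_{n_k}-\expec[U_{n_k}]|>t\bigr)\;\le\;\frac{K}{t^{2}},
\]
which is well defined because $\expec[U_{n_k}^{2}]<\infty$ by hypothesis. Fix $t$ large enough that $K/t^{2}<1/2$; then $\prob\bigl(U_{n_k}\ge \expec[U_{n_k}]-t\bigr)\ge 1/2$ for all $k$.

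Now I would invoke the divergence of the means. Since $\expec[U_{n_k}]\to\infty$, for any $M>0$ we eventually have $\expec[U_{n_k}]-t>M$, and therefore
\[
\liminf_{k\to\infty}\prob\bigl(U_{n_k}>M\bigr)\;\ge\;\frac{1}{2}.
\]
This means no compact set of the real line can capture mass at least $1-\tfrac14$ (say) uniformly in $k$, which directly contradicts the tightness of $\{U_i\}_{i\ge 1}$. Hence the original supposition is false and $\Var(U_n)\to\infty$, as claimed.

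I do not expect a genuine obstacle here: the argument is a short two-step contradiction combining Chebyshev with the definition of tightness. The only place where one must be slightly careful is to pass to a subsequence before applying Chebyshev (since the negation of $\Var(U_n)\to\infty$ is only that some subsequence stays bounded, not the whole sequence), and to note that the second-moment hypothesis $\expec[U_i^{2}]<\infty$ is exactly what legitimises the use of Chebyshev in the first place.
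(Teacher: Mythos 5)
Your argument is correct: passing to a subsequence with $\Var(U_{n_k})\le K$, applying Chebyshev (legitimate since $\expec[U_{n_k}^2]<\infty$), and letting the diverging means push at least half the mass beyond any fixed level does contradict tightness of the family, so $\Var(U_n)\to\infty$. Note, however, that the paper itself does not prove this lemma at all — it is simply quoted from the note of Ghosh, Parr, Singh and Babu (1984) inside the proof of Proposition \ref{prop:unbounded-var} — so your contribution is a short, self-contained proof of a cited auxiliary result rather than an alternative to an argument given in the paper; the Chebyshev-plus-tightness contradiction you use is the standard route and is exactly the kind of reasoning the cited reference relies on.
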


Using this lemma, it is sufficient to show that $\expec_n(\hat \mu_m\pow{boot} - \hat \mu_n)^2\xrightarrow{a.e} \infty$. The distribution of $\hat \mu_m\pow{boot}$ conditioned on the sample $X_1,\dots,X_n$ can be written as
\begin{align*}
    \hat{\mu}_m\pow{boot}=\begin{cases}
        X_{(n)} & \text{with probability } \ \frac{1}{n^m}\\
        X_{(n-1)} & \text{with probability } \ 2\binom{n}{\frac{m}{2}-1} \frac{1}{2n^{m/2+1}}\\
        \vdots & \\
        X_{(1)} & \text{with probability } \frac{1}{n^m}
    \end{cases}
\end{align*}
Observe that 
\begin{align*}
    E_n(\hat \mu_m\pow{boot} - \hat \mu_n)^2 & \geq \frac{1}{n^m}(X_{(n)}-\hat \mu_n)^2.
\end{align*}
Recall that $\hat \mu_n\xrightarrow{a.s.}\hat \mu = 0$. Thus $\exists\ n_0\geq 1$ such that $\prob(\hat \mu_n = 0)=1$. Letting $n\geq n_0$ we get
\begin{align*}
    E_n(\hat \mu_m\pow{boot} - \hat \mu_n)^2 & \geq \frac{X_{(n)}^2}{n^m}\ \text{ almost everywhere.}
\end{align*}
It is therefore sufficient to show that $X_{(n)}^2/n^m\xrightarrow{a.s.}\infty$. Let $x>0$ be any positive real number. Then 
\begin{align*}
    \prob_n\lp \frac{X_{(n)}^2}{n^m}\leq x \rp & =  \prob_n\lp X_{(n)}\leq \sqrt{xn^m} \rp\\
    & = \lp\prob_n\lp X_{1}\leq \sqrt{xn^m} \rp\rp^n\\
    & = F(\sqrt{xn^m})^n\\
    & = \lp 1-\frac{1}{\log(xn^m)} \rp^n\\
    & = \exp\lp n\log\lp1-\frac{1}{\log(xn^m)}\rp \rp.
\end{align*}
Using the Mclaurin expansion of $\log(1-x)$, we get
\begin{align*}
     \exp\lp n\log\lp1-\frac{1}{\log(xn^m)}\rp \rp & =\exp\lp -n\sum_{k=1}^\infty\frac{1}{k!(\log(xn^m))^k} \rp\\
     & \leq \exp\lp -\frac{n}{\log(xn^m)} \rp
\end{align*}
Therefore,
\begin{align*}
     \sum_{n=0}^\infty\prob_n\lp \frac{X_{(n)}^2}{n^m}\leq x \rp< \infty
\end{align*}
and it follows using Borel-Cantelli lemma that $X_{(n)}^2/n^m\xrightarrow{a.s.}\infty$. Therefore,
\begin{align*}
    E_n(\hat \mu_m\pow{boot} - \hat \mu_n)^2 & \geq \frac{1}{n^m}(X_{(n)}-\hat \mu_n)^2\xrightarrow{a.e} \infty
\end{align*}
which completes the proof.
\end{proof}

\subsection{Proof of Corollary \ref{corollary:Berry-Esseen}}
\begin{proof}
For every $t\in\Rbb$ define
\begin{align*}
R_m(t):=\prob_n\lp\frac{\sqrt{m}\bigl(\hat \mu_m^{(\mathrm{boot})}-\hat \mu_n\bigr)}{\hat \sigma_m^{(\mathrm{boot})}}\le t\rp-\Phi\lp t\rp .
\end{align*}
The expansion in Theorem \ref{thm:edgeworth} gives
\begin{align*}
R_m(t)
  =\frac{(t\sigma)^2}{\sqrt{m}}\,f'(\mu)\,
    \phi\lp\frac{t\sigma f(\mu)}{\sqrt{p(1-p)}}\rp
    +\Ocal_p\lp m^{-1}\rp
    +\Ocal_p\lp m^{-1/4}n^{-1/2}\rp .
\end{align*}

Recall that
\[
\phi\lp\frac{t\sigma f(\mu)}{\sqrt{p(1-p)}}\rp = \Ocal\lp\exp\lp-\frac{t^2\sigma^2 f(\mu)^2}{{p(1-p)}}\rp\rp
\]

Since the function $g(t):=(t\sigma)^2\phi\bigl(\delta t\bigr)$ with 
$\delta=\sigma f(\mu)/\sqrt{p(1-p)}$ satisfies 
\[
\sup_{t\in\Rbb}\lv g(t)\rv<\infty
\]
(because $t^2 e^{-c t^2}$ is bounded for any positive constant $c$), there exists a finite constant $C_0$ (free of $m,n$) such that
\begin{align*}
\sup_{t\in\Rbb}\lv R_m(t)\rv
  \le C_0\lv f'(\mu)\rv\,m^{-1/2}
       +\Ocal_p\lp m^{-1}\rp
       +\Ocal_p\lp m^{-1/4}n^{-1/2}\rp .
\end{align*}

Because $m=\ocal(n^{\lambda})$ with $\lambda\in(0,1)$, we have 
$m^{-1/4}n^{-1/2}=o \lp m^{-1/2}\rp$. 
Hence the leading term $m^{-1/2}$ dominates the remainder terms, and we conclude
\begin{align*}
\sup_{t\in\Rbb}\lv R_m(t)\rv=\Ocal_p\lp m^{-1/2}\rp ,
\end{align*}
which completes the proof.
\end{proof}

\section{Proof of the DKW-inequality for Markov Chains}\label{sec:prf-DKW}
\subsection{On VC-dimensions}\label{section:VC-dimenstions}
The notion of VC class is powerful because it covers many interesting classes of functions and ensures suitable properties on the Rademacher complexity. The function $F$ is an envelope for the class $\mathcal{F}$ if $|f(x)|\leq
F(x)$ for all $x\in E$ and $f\in\mathcal{F}$. For a metric space
$(\mathcal{F},d)$, the covering number $\mathcal{N}(\epsilon,\mathcal{F},d)$ is
the minimal number of balls of size $\epsilon$ needed to cover $\mathcal{F}
$. The metric that we use here is the
$L_{2}(Q)$-norm denoted by $\Vert.\Vert_{L_{2}(Q)} \ $ and given by $\ \Vert f\Vert_{L_{2}(Q)}=\{\int{f^{2}dQ}\}^{1/2}$.

\begin{definition}
A countable class $\mathcal{F}$ of measurable functions $E\to \mathbb R$ is said to be of VC-type (or
Vapnik-Chervonenkis type) for an envelope $F$ and admissible characteristic $(C,v)$
(positive constants) such that $C\geq(3\sqrt
{e})^{v}$ and $v\geq1$, if for all probability measure $Q$ on $(E,\mathcal{E})$
with $0<\Vert F\Vert_{L_{2}(Q)}<\infty$ and every $0<\epsilon<1$,
\[
\mathcal{N}\left(  \epsilon\Vert F\Vert_{L_{2}(Q)},\,\mathcal{F},\,\Vert
.\Vert_{L_{2}(Q)}\right)  \leq{C}{\epsilon^{-v}}.
\]
\end{definition}

Let $\Fcal_\Scal:=\{ \indicator[\cdot<t], t\in \Scal \}$ be the set of all half-interval functions. Then, we have the following lemma.

\begin{lemma}\label{lemma:VC-halfinterval}
    $\Fcal_{[0,1]\bigcap \Qbb}$ is VC with constant envelope $1$ and admissible charactaristic $(\Cbb,2)$ for some universal constant $\Cbb>4e$.
\end{lemma}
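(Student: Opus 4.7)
}
The strategy is to verify the three requirements in the definition of VC–type class: (i) countability of $\Fcal_{[0,1]\cap\Qbb}$, (ii) existence of an envelope, and (iii) a polynomial covering‐number bound. Countability is immediate since $[0,1]\cap\Qbb$ is countable, and the constant function $F\equiv 1$ is a valid envelope because every $\indicator[\cdot<t]\in\{0,1\}$. The only substantive step is (iii), and the plan is to deduce it from the classical covering bound for Vapnik–Chervonenkis classes of sets, after computing the VC–index of the collection $\Ccal:=\{(-\infty,t):t\in[0,1]\cap\Qbb\}$.

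\paragraph{Computing the VC–index.}
First I would compute $V(\Ccal)$ directly from the shattering definition. A single point $x_0$ is trivially shattered: pick $t<x_0$ to realize $\emptyset$ and $t>x_0$ to realize $\{x_0\}$. On the other hand, given any two points $x_1<x_2$, no half-line $(-\infty,t)$ can contain $x_2$ without also containing $x_1$, so the subset $\{x_2\}$ is unattainable; hence two points cannot be shattered. This gives VC–dimension $1$ and VC–index $V(\Ccal)=2$.

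\paragraph{From VC to the covering bound.}
With $V(\Ccal)=2$ in hand, the plan is to invoke the standard covering number bound for VC classes of sets (a version of Haussler's theorem, see e.g.\ Theorem~2.6.4 of \citet{van_der_vaart_weak_2023}): for every probability measure $Q$ and every $0<\epsilon<1$,
\begin{align*}
\mathcal{N}\bigl(\epsilon,\;\Ccal,\;L_{2}(Q)\bigr) \;\le\; K\,V(\Ccal)\,(4e)^{V(\Ccal)}\,\epsilon^{-2(V(\Ccal)-1)}
\end{align*}
for a universal constant $K$. Since the map $A\mapsto \indicator_A$ is an isometry from $(\Ccal,L_2(Q))$ to $(\Fcal_{[0,1]\cap\Qbb},L_2(Q))$, the same bound transfers to $\Fcal_{[0,1]\cap\Qbb}$. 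Plugging in $V(\Ccal)=2$ and $\|F\|_{L_2(Q)}=1$ yields
\begin{align*}
\mathcal{N}\bigl(\epsilon\,\|F\|_{L_2(Q)},\,\Fcal_{[0,1]\cap\Qbb},\,L_2(Q)\bigr)\;\le\;\Cbb\,\epsilon^{-2},
\end{align*}
with $\Cbb:=2K(4e)^2$. Choosing $K$ as in the cited bound makes $\Cbb>4e$ and $\Cbb\ge(3\sqrt e)^{2}=9e$ automatically; if the resulting constant were too small, one can enlarge it without loss.

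\paragraph{Main obstacle.}
The only potential source of friction is a bookkeeping one: different sources use different conventions for ``VC–index'' (dimension versus dimension $+1$) and the exponent in Haussler's bound differs accordingly ($\epsilon^{-2(V-1)}$ versus $\epsilon^{-2V}$). I would simply fix one convention at the start of the proof, cite the version of the covering inequality matching that convention, and verify the admissibility conditions $\Cbb\ge(3\sqrt e)^{2}$ and $v=2\ge 1$ explicitly at the end.
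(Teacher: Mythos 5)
Your proposal is correct and follows essentially the same route as the paper: compute the VC dimension of half-lines (which is $1$, equivalently index $2$) via the shattering definition, then invoke the standard covering-number bound from van der Vaart (Theorem 2.6.4) to obtain $\mathcal{N}(\epsilon\|F\|_{L_2(Q)},\Fcal_{[0,1]\cap\Qbb},L_2(Q))\le \Cbb\epsilon^{-2}$. Your explicit flagging of the dimension-versus-index convention is a sensible precaution, and both conventions yield the same $\epsilon^{-2}$ exponent here, matching the paper's calculation.
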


Next, we introduce some terminology. We define the Orcliz norm of a random variable $X$ as $\|X\|_{\psi_\alpha}=\argmin  \{\lambda>0 : \expec\lb e^{(X/\lambda)^\alpha}\rb\leq 1\}$. We define $\tau_o:=\min\{ \|\tau(1)\|_{\psi_1},\|\tau(2)\|_{\psi_1} \}$ and assume that $\tau_o<\infty$.

We now present the following lemma whose proof can be found in Section\ref{sec:prf-talagrand}

\begin{lemma}\label{lemma:talagrand}
    Let $X_1,\dots,X_n$ be a Markov chain with stationary distribution $F$, and define  
    \begin{align*}
        Z:= \sup_{f\in\Fcal_{[0,1]\bigcap \Qbb}}\lv\sum_{i=1}^n\lp f(X_i) - \expec_{F}\lb f(X)\rb\rp \rv.
    \end{align*}
    Then, for some universal constant $K>0$,
    \begin{align*}
    \prob\lp Z>t+K R(\Fcal_{[0,1]\bigcap \Qbb})) \rp\leq \exp\lp -\frac{\expec_A[\tau_A]}{K}\min\lc \frac{t^2}{n \expec_A[\tau_A^2]},\frac{t}{\tau_o^3\log n } \rc \rp    
    \end{align*}
    where, for any $L>\sqrt{\expec_A[\tau_A^2]}$, and $\Cbb_\lambda=2\expec_A[\exp(\tau_A\lambda)]/\lambda$ 
    \begin{align*}
       R(\Fcal_{[0,1]\bigcap \Qbb})) & = 2(\expec_A[\tau_A]+\expec_\nu[\tau_A])+\Cbb\lb L\log \lp \frac{L}{\sqrt{\expec_A[\tau_A^2]}}\rp+\sqrt{n\expec_A[\tau_A^2]\log\lp\frac{L}{\sqrt{\expec_A[\tau_A^2]}}\rp} \rb\\
        & \qquad +n\exp\lp -L\lambda/2 \rp\Cbb_\lambda.
    \end{align*}
\end{lemma}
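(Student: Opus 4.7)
The plan is to reduce the Markov-chain empirical process to an i.i.d.\ sum of excursion contributions via Nummelin splitting, and then apply a Bousquet/Talagrand-type inequality on the resulting i.i.d.\ problem indexed by the VC class $\Fcal_{[0,1]\cap\Qbb}$. I would carry this out in four stages.

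\textbf{Stage 1 (Regenerative decomposition).} Using the split-chain construction and the regeneration times $\tau_A(1)<\tau_A(2)<\cdots$ introduced before Assumption \ref{assume:regenerate}, I would set
\begin{align*}
B_j(f):=\sum_{i=\tau_A(j)+1}^{\tau_A(j+1)}\lp f(X_i)-\expec_{F}[f(X)]\rp,\qquad j\ge 1,
\end{align*}
and let $B_0(f)$, $B_{N_n+1}(f)$ be the pre- and post-regeneration incomplete blocks contained in $[0,n]$, with $N_n$ the number of completed excursions. By Kac's identity $\expec_{F}[f]\expec_A[\tau_A]=\expec_A\!\left[\sum_{i=1}^{\tau_A}f(X_i)\right]$, the blocks $B_j$ with $j\ge 1$ are centred and i.i.d.\ by the strong Markov property. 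Since $|f|\le 1$, the two boundary blocks are dominated by the corresponding excursion lengths, whose expectations are $\expec_\nu[\tau_A]$ and $\expec_A[\tau_A]$ respectively, giving the $2(\expec_A[\tau_A]+\expec_\nu[\tau_A])$ summand of $R(\Fcal)$.

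\textbf{Stage 2 (Truncation).} To make the i.i.d.\ summands bounded I would truncate excursion lengths at the free parameter $L$ from the statement, replacing each $\tau_A(j+1)-\tau_A(j)$ by its minimum with $L$, and denote the truncated block by $B_j^{(L)}(f)$. Assumption \ref{assume:regenerate}(ii) and Markov's inequality on the exponential moment give $\prob_A(\tau_A>L)\le \Cbb_\lambda\lambda e^{-\lambda L}/2$, so summing over at most $n$ excursions bounds the probability that any excursion exceeds $L$ by $n\Cbb_\lambda e^{-L\lambda/2}$ — this is the last summand of $R(\Fcal)$. On the complementary event every $B_j^{(L)}(f)$ is bounded by $L$, with variance uniformly bounded by $\expec_A[\tau_A^2]$ over $f$.

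\textbf{Stage 3 (VC chaining and Bousquet's inequality).} Conditionally on $N_n$ and on the truncation event, $f\mapsto \sum_{j=1}^{N_n}B_j^{(L)}(f)$ is an empirical process of bounded i.i.d.\ summands indexed by a VC class with envelope $L$. By Lemma \ref{lemma:VC-halfinterval} the uniform entropy satisfies $\Ncal(\epsilon L,\Fcal_{[0,1]\cap\Qbb},\|\cdot\|_{L_2(Q)})\le \Cbb\epsilon^{-2}$, and Koltchinskii's VC-type maximal inequality (a Dudley entropy integral bound) yields
\begin{align*}
\expec\lb\sup_{f\in\Fcal_{[0,1]\cap\Qbb}}\lv\sum_{j=1}^{N_n}B_j^{(L)}(f)\rv\rb \le \Cbb\lb L\log\lp\tfrac{L}{\sqrt{\expec_A[\tau_A^2]}}\rp+\sqrt{n\expec_A[\tau_A^2]\log\lp\tfrac{L}{\sqrt{\expec_A[\tau_A^2]}}\rp}\rb,
\end{align*}
which is precisely the remaining summand of $R(\Fcal)$. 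Bousquet's version of Talagrand's inequality for the centred supremum of i.i.d.\ bounded empirical processes then delivers a deviation tail of the form $\exp(-\min\{t^2/(n\expec_A[\tau_A^2]),\,t/L\}/K)$. Choosing $L\asymp\tau_o\log n$, so that the truncation tail $ne^{-L\lambda/2}$ becomes negligible, absorbs $L$ into an envelope scale and, after recombining the Orlicz, envelope, and variance proxy factors (one power of $\tau_o$ each), produces the $\tau_o^3\log n$ denominator in the sub-exponential branch of the lemma.

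\textbf{Main obstacle.} The delicate point is coupling the random block count $N_n$ with the i.i.d.\ Talagrand step. The cleanest route I foresee is to condition on the high-probability event $\{N_n\le n/\expec_A[\tau_A]+C\sqrt{n\log n}\}$, which follows from Bernstein's inequality applied to the i.i.d.\ excursion lengths under Assumption \ref{assume:regenerate}(ii), apply Bousquet's bound with this deterministic ceiling, and absorb the complement into the boundary-block allowance via the exponential moment of $\tau_A$. A second subtlety is bookkeeping of constants: the final $\tau_o^3\log n$ emerges only after one carefully tracks how $\tau_o$ enters through the truncation level, the block envelope, and the block variance proxy — each contributing exactly one factor — rather than a smaller or larger power that a looser bookkeeping would yield.
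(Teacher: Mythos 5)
The paper's proof of this lemma is a direct citation chain: it invokes Theorem~4 of Bertail and Cl\'emen\c{c}on (2019) for the Rademacher complexity bound on VC classes with envelope $U$, Theorem~5 for the symmetrization/expectation bound, and Theorem~6 of the same paper (equivalently Theorem~7 of Adamczak, 2008) for the exponential tail of suprema of additive functionals of regenerative chains, combined with Lemma~\ref{lemma:VC-halfinterval} to set the VC characteristic to $(\Cbb,2)$ and $(\sigma')^2=\expec_A[\tau_A^2]$. You instead reconstruct from scratch the machinery that those theorems internalize --- Nummelin block decomposition, truncation of excursion lengths, VC chaining, Bousquet--Talagrand --- which is the right roadmap and shows you understand where the lemma's pieces come from, but it is not the paper's argument and it introduces a flaw.

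The concrete gap is your treatment of $L$ and the $\tau_o^3\log n$ term. In the lemma, $L>\sqrt{\expec_A[\tau_A^2]}$ is a free parameter appearing \emph{only} in $R(\Fcal_{[0,1]\cap\Qbb})$; the tail exponent $\min\{t^2/(n\expec_A[\tau_A^2]),\,t/(\tau_o^3\log n)\}$ does not involve $L$ at all. Your Stage~3 plans to derive the sub-exponential branch $t/(\tau_o^3\log n)$ by choosing $L\asymp\tau_o\log n$ inside the Bousquet bound, but this cannot work: first, it would destroy the free-$L$ form of the statement (the same $L$ would then have to appear in both $R(\Fcal)$ and the exponent), and second, $t/L$ with $L\asymp\tau_o\log n$ gives $t/(\tau_o\log n)$, a factor of $\tau_o^2$ off. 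The "one power each from Orlicz, envelope, variance proxy" bookkeeping you gesture at does not survive inspection. In Adamczak's Theorem~7 the $\tau_o^3\log n$ denominator emerges from a chaining-and-truncation argument applied to the $\psi_1$-norm $\tau_o$ of the excursion lengths that is \emph{decoupled} from the $L$ used in the expectation bound; the two truncation levels are separate free parameters with separate roles. Your Stage~2 truncation and your proposed conditioning on $N_n$ are sensible and roughly align with how the cited works proceed, but to close the proof you would need to keep the truncation in the tail inequality independent of the $L$ in $R(\Fcal)$, rather than identifying them.
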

We now present the following proposition whose proof can be found in Section\ref{sec:prf-tal-scaled}

\begin{proposition}\label{prop:talagrand-scaled}
    Let $X_1,\dots,X_n$ be a Markov chain with stationary distribution $F$, and define 
    \begin{align*}
        Z:= \sup_{f\in\Fcal_{[0,1]}}\lv\frac{1}{n}\sum_{i=1}^n\lp f(X_i) - \expec_{F}\lb f(X)\rb\rp \rv.
    \end{align*}
    Then, for some universal constant $K>0$, and for all $t>0$
    \begin{align*}
    \prob(Z>t) \leq \Cbb^\star(\tau,\lambda)\exp\lp -\frac{\Cbb(\tau,\lambda)nt^2}{\log n} \rp\numberthis\label{eq:talagrandscaled-conc}
    \end{align*}
    where, constants $\Cbb(\tau,\lambda),\Cbb^\star(\tau,\lambda)$ depend only on $\expec_A[\tau_A^2]$, $\expec_v[\tau_A]$, $\tau_o$, and $\lambda$.
\end{proposition}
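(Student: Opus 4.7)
The plan is to deduce Proposition~\ref{prop:talagrand-scaled} from Lemma~\ref{lemma:talagrand} in three moves: (a) identify $nZ$ with the quantity controlled by the lemma (which is a supremum over rational thresholds); (b) tune the free parameter $L$ in the definition of the Rademacher-type term $R$ so that $R = o(n)$; and (c) translate the bound $\prob(Z_{\text{unscaled}} > s)$ into a bound on $\prob(Z > t)$ via the substitution $s = nt - KR$.

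For (a), each $f \in \Fcal_{[0,1]}$ has the form $f_t(x) = \indicator[x \le t]$. The map $t \mapsto \tfrac{1}{n}\sum_i f_t(X_i) - F(t)$ is right-continuous on $[0,1]$, so its supremum on $[0,1]$ is approached from the right through rationals, while its infimum is approached from the left through rationals. Taking absolute values, $nZ = \sup_{f\in \Fcal_{[0,1]\cap\Qbb}}\lv\sum_i\lp f(X_i)-\expec_F f\rp\rv$, which is exactly the quantity bounded in Lemma~\ref{lemma:talagrand}.

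For (b), I would set $L = (4/\lambda)\log n$, which certainly exceeds $\sqrt{\expec_A[\tau_A^2]}$ for all $n$ large. With this choice, the tail summand in $R$ is $n\exp(-L\lambda/2)\Cbb_\lambda = \Cbb_\lambda n^{-1}$, the polynomial-style contribution $L\log(L/\sqrt{\expec_A[\tau_A^2]})$ is $\Ocal((\log n)(\log\log n))$, and the dominant square-root term is $\Ocal(\sqrt{n\log\log n})$. Hence $R \le \Cbb_1(\tau,\lambda)\sqrt{n\log\log n}$ for a finite constant $\Cbb_1$ depending only on Markov-chain parameters.

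For (c), apply Lemma~\ref{lemma:talagrand} with $s := nt - KR$. Since $Z \le 1$ almost surely it suffices to treat $t \in (0,1]$. For $t \ge 2KR/n = \Ocal(\sqrt{\log\log n/n})$ one has $s \ge nt/2$, and both arguments of the minimum in the tail bound dominate $nt^2/\log n$ up to constants: the Gaussian one because $\log n \gg \expec_A[\tau_A^2]$ for large $n$, and the linear one because $t \le 1$ forces $t \ge t^2$. This yields
\begin{align*}
\prob(Z > t) \le \exp\lp -\Cbb(\tau,\lambda)\tfrac{nt^2}{\log n}\rp .
\end{align*}
In the leftover small-$t$ regime $t < 2KR/n$ the target bound $\exp(-\Cbb(\tau,\lambda)\,nt^2/\log n)$ is $\Theta(1)$, so the trivial estimate $\prob(Z>t)\le 1$ is absorbed into $\Cbb^\star(\tau,\lambda)$. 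The main obstacle is the constant-tracking in (b)--(c): verifying that the dependences on $\expec_A[\tau_A^2]$, $\expec_\nu[\tau_A]$, $\tau_o$, and $\Cbb_\lambda$ collapse cleanly into the two constants $\Cbb(\tau,\lambda)$ and $\Cbb^\star(\tau,\lambda)$, without smuggling in a hidden $n$-dependence through the $\log\log n$ factors.
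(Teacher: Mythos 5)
Your proposal is correct and follows essentially the same route as the paper: apply Lemma~\ref{lemma:talagrand} after reducing $\Fcal_{[0,1]}$ to rational thresholds, tune $L \asymp \log n$ so that the $n\exp(-L\lambda/2)\Cbb_\lambda$ term is negligible and $R = O(\sqrt{n\,\mathrm{polylog}\, n})$, then absorb the shift $KR/n$ into a multiplicative constant $\Cbb^\star$ by noting $nt^2/\log n = O(1)$ on the small-$t$ regime. Your choice $L = (4/\lambda)\log n$ gives the slightly tighter $R = O(\sqrt{n\log\log n})$ versus the paper's $O(\sqrt{n\log n})$, but the final bound is the same, and your direct substitution $s = nt - KR$ is equivalent to the paper's centering-around-$\expec Z$ argument.
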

The following corollary will be useful. It follows easily from Proposition \ref{prop:talagrand-scaled} by setting $t=\frac{1}{\sqrt{\Cbb(\tau,\lambda)n}}$ and a simple use of Borel-Cantelli lemma, and is therefore omitted. 
\begin{corollary}\label{lemma:bahadur}
    Let $Z_n:=\sup_{t\in[0,1]}|\bar F(t)-F(t)|$. Then, $Z_n> \sqrt{2/({\Cbb(\tau,\lambda)n})}$ finitely often almost everywhere.
\end{corollary}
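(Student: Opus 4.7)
The corollary is a direct consequence of Proposition \ref{prop:talagrand-scaled} combined with the first Borel--Cantelli lemma, and the proof reduces to two bookkeeping steps. First, I would identify $Z_n := \sup_{t \in [0,1]}|\bar F(t) - F(t)|$ with the (normalised) supremum controlled in Proposition \ref{prop:talagrand-scaled}: both $F$ and $\bar F$ are right-continuous on $[0,1]$, so the supremum over $[0,1]$ is attained on the countable dense set $[0,1]\cap\Qbb$, matching the half-interval class $\Fcal_{[0,1]\cap\Qbb}$ whose VC characteristics are supplied by Lemma \ref{lemma:VC-halfinterval}. After dividing through by $n$, $Z_n$ is exactly the empirical-process quantity that Proposition \ref{prop:talagrand-scaled} controls.

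Second, I would plug $t_n = \sqrt{2/(\Cbb(\tau,\lambda)\,n)}$ into Proposition \ref{prop:talagrand-scaled}. The proposition then delivers
\begin{align*}
\prob(Z_n > t_n) \le \Cbb^\star(\tau,\lambda)\exp\lp-\frac{\Cbb(\tau,\lambda)\,n\,t_n^2}{\log n}\rp,
\end{align*}
and the right-hand side (suitably calibrated) becomes summable in $n$. The first Borel--Cantelli lemma then gives $\prob(\{Z_n > t_n\}\text{ i.o.}) = 0$, which is precisely the statement of the corollary.

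The only technical point worth flagging is the $\log n$ sitting inside the exponent of Proposition \ref{prop:talagrand-scaled} --- this is the ``mixing compromise'' highlighted in the remark following the proposition, and it is absent in the classical i.i.d.\ DKW setting. Provided $t_n$ is chosen so that $\Cbb(\tau,\lambda)\,n\,t_n^2 / \log n$ grows at a rate fast enough for summability (a mild logarithmic inflation of the literal threshold, or passage to a dyadic subsequence of indices with a re-application of the tail bound across blocks, suffices if needed), Borel--Cantelli applies and the stated $\Ocal(1/\sqrt{n})$ scaling --- up to at most logarithmic corrections --- is recovered. This calibration is the only obstacle I anticipate; everything else is cosmetic.
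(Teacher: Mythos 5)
You follow the same route the paper intends --- substitute a threshold into Proposition \ref{prop:talagrand-scaled} and invoke Borel--Cantelli --- and you are right to flag the calibration as an obstacle, but you should commit more firmly: the obstacle is fatal to the corollary as literally stated, and the paper's one-line ``proof'' is in error. With $t_n=\sqrt{2/(\Cbb(\tau,\lambda)n)}$ one gets $\min\{t_n,t_n^2\}=t_n^2=2/(\Cbb(\tau,\lambda)n)$ for large $n$, so the proposition yields
\begin{align*}
\prob(Z_n>t_n)\;\le\;\Cbb^\star(\tau,\lambda)\,
\exp\!\lp-\frac{\Cbb(\tau,\lambda)\,n\,t_n^2}{\log n}\rp
\;=\;\Cbb^\star(\tau,\lambda)\,e^{-2/\log n}\;\longrightarrow\;\Cbb^\star(\tau,\lambda),
\end{align*}
which is \emph{not} summable --- it does not even tend to zero. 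Your hedge (``provided $t_n$ is chosen so that $\Cbb n t_n^2/\log n$ grows fast enough'') is exactly the condition the paper's choice fails. Moreover the problem is not merely an artefact of the $\log n$ that Proposition \ref{prop:talagrand-scaled} carries: even with the sharper intermediate bound inside its proof (where the $t^2$-branch has no $\log n$), one still only gets $n t_n^2=O(1)$ at the $1/\sqrt{n}$ scale, hence a constant exponent. Indeed the statement itself is false, not just unprovable from the proposition: by the law of the iterated logarithm for the empirical process one has $\sqrt{n}\,Z_n\asymp\sqrt{\log\log n}\to\infty$ a.s.\ along a subsequence, so $Z_n>c/\sqrt{n}$ occurs infinitely often for every fixed $c>0$, even in the i.i.d.\ case and a fortiori here.

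Your proposed repair is the right one and should be stated affirmatively rather than as a contingency. For Borel--Cantelli one needs $\Cbb(\tau,\lambda)\,n\,t_n^2/\log n\ge(1+\eps)\log n$, i.e.\ $t_n\gtrsim(\log n)/\sqrt{n}$ --- a full $\log n$ inflation of the threshold, not merely ``logarithmic corrections'' left vague. The good news, which you could add to close the loop, is that the corrected bound $Z_n=\Ocal\bigl((\log n)/\sqrt{n}\bigr)$ a.e.\ still serves the only place the corollary is used: in the proof of Theorem \ref{thm:markovCLT} the term $|B_{m,n}|$ needs to be $o(1/\sqrt{m})$, and since $m=\ocal(n)$ one has $(\log n)/\sqrt{n}=o(1/\sqrt{m})$, so the downstream argument survives the correction. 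In short, your analysis catches a genuine error that the paper glosses over; be more decisive about it.
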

\subsection{Proofs}
\paragraph{Proof of Lemma \ref{lemma:VC-halfinterval}}
\begin{proof} 
    We provide a proof for completeness. We begin this proof with some requisite definitions. Given a class of indicator functions $\Fcal$ defined on $\chi$, and a set $\{x_1,\dots,x_n\}\in\chi^n$, we first define
    \begin{align*}
        \Fcal(\{x_1,\dots,x_n\}):=\{ (f(x_1),\dots,f(x_n))\in\{0,1\}^n : f\in\Fcal\}
    \end{align*}
    The \textit{growth function} of the $\Fcal$ is then defined as 
    \begin{align*}
        \Delta_n(\Fcal) = \max_{\{x_1,\dots,x_n\}\in \chi^n} |\Fcal(\{x_1,\dots,x_n\})|
    \end{align*}

The \textbf{VC-dimension} of $\Fcal$ is then defined as 
\begin{align*}
    VC(\Fcal) := \argmax_n\lc n : \Delta_n(\Fcal) = 2^n \rc
\end{align*}

We will now show that $VC(\Fcal_{[0,1]\bigcap\Qbb})=1$. Let $\{x_1,\dots,x_n\}$ be any ordered sample. That is, $x_1<x_2<,\dots,<x_n$. For any $t\in [0,1]\bigcap \Qbb$, observe that $(\indicator[x_1<t],\indicator[x_2<t],\dots,\indicator[x_n<t]) 
$ has the form $(1,1,1,\dots,1,0,0\dots,0)$. In particular, the values of $\Fcal_{[0,1]\bigcap\Qbb}(\{x_1,\dots,x_n\})$ has to be within the following set 
\begin{align*}
     & \quad \ (0,0,0,\dots,0),\\
    & \quad \ (1,0,0,\dots,0),\\
    & \quad \ (1,1,0,\dots,0),\\
    & \quad \ \vdots \\
    & \quad \ (1,1,1,\dots,1)
\end{align*}
Therefore $\Delta_n(\Fcal_{[0,1]\bigcap\Qbb}) = {n+1}$. This implies that 
\begin{align*}
 VC(\Fcal_{[0,1]\bigcap\Qbb}) & = \argmax_n \{ \Delta_n(\Fcal_{[0,1]\bigcap\Qbb})=2^n \} \\
 & = \argmax_n\{ n+1=2^n \}\\
 & = 1.
\end{align*}

Now, using standard results of covering number bounds, (Theorem 7.8 of \cite{sen_gentle_2018}, see also Theorem 2.6.4 \cite{van_der_vaart_asymptotic_2000}) we have the following result. For some universal constant $\Cbb>0$
\begin{align*}
    \mathcal{N}\left(  \epsilon\Vert F\Vert_{L_{2}(Q)},\,\Fcal_{[0,1]\bigcap \Qbb},\,\Vert
.\Vert_{L_{2}(Q)}\right) & \leq \Cbb\times  VC(\Fcal_{[0,1]\bigcap \Qbb})(4e)^{VC(\Fcal_{[0,1]\bigcap \Qbb})}\lp\frac{1}{\epsilon}\rp^{2VC(F_{[0,1]\bigcap \Qbb})}\\
 & \overset{(i)}{\leq}  \frac{\Cbb' }{\epsilon^2}.
\end{align*}
where $(i)$ follows by substituting $VC(\Fcal_{[0,1]\bigcap\Qbb})$. This completes the proof.
\end{proof}

\subsection{Proof of Proposition \ref{prop:talagrand-scaled}}\label{sec:prf-tal-scaled}

\begin{proof}{Proof of Proposition \ref{prop:talagrand-scaled}}
    Since there is always a rational number between any two real numbers, it holds almost everywhere that
    \begin{align*}
        \sup_{f\in\Fcal_{[0,1]}}\lv\sum_{i=1}^n\lp f(X_i) - \expec_{F}\lb f(X)\rb\rp \rv\leq \sup_{f\in\Fcal_{[0,1]}\bigcap \Qbb}\lv \sum_{i=1}^n\lp f(X_i) - \expec_{F}\lb f(X)\rb\rp \rv+2
    \end{align*}
    Therefore,
    \begin{align*}
        \prob(nZ>nt+KR(\Fcal_{[0,1]\bigcap \Qbb})))\leq \prob\lp \underbrace{\sup_{f\in\Fcal_{[0,1]}\bigcap \Qbb}\lv\sum_{i=1}^n\lp f(X_i) - \expec_{F}\lb f(X)\rb\rp \rv>nt+KR(\Fcal_{[0,1]\bigcap \Qbb}))-2}_{=:A}\rp
    \end{align*}
    $t\geq 3/n$ by hypothesis. Therefore $nt-2\geq 1$ and it follows using Lemma \ref{lemma:talagrand} that the right hand side of the previous equation is bounded above by
    \begin{align*}
        \exp\lp -\frac{\expec_A[\tau_A]}{K}\min\lc \frac{(nt-2)^2}{n \expec_A[\tau_A^2]},\frac{(nt-2)}{\tau_o^3\log n } \rc \rp
    \end{align*}
    By setting $L=(2/\lambda)\log(n/2\Cbb_\lambda)$ and observing that under Assumption \ref{assume:regenerate} (EM), $1/(2\Cbb_\lambda)< 1$ we get
    \begin{small}
    \begin{align*}
        R(\Fcal_{[0,1]\bigcap \Qbb}))\leq 2(\expec_A[\tau_A]+\expec_\nu[\tau_A])+\Cbb\lb 2\frac{\log(n)}{\lambda}\log \lp \frac{2\log(n)/\lambda}{\sqrt{\expec_A[\tau_A^2]}}\rp+\sqrt{n\expec_A[\tau_A^2]\log\lp\frac{2\log(n)/\lambda}{\sqrt{\expec_A[\tau_A^2]}}\rp} \rb
    \end{align*}
    \end{small}

    Observe that $\sqrt{\expec_A[\tau_A^2]}\geq 1$. Now, with a constant $\Cbb(\tau,\lambda)$ depending on $\lambda$ and $\expec_A[\tau_A^2]$, and $\expec_v[\tau_A]$ we have with some standard manipulations 
    \begin{align*}
        R(\Fcal_{[0,1]\bigcap \Qbb}))\leq \Cbb(\tau,\lambda) \sqrt{n\log n} 
    \end{align*}
    Then, with $R_n=\Cbb(\tau,\lambda) \sqrt{n\log n} $, we have 
    \begin{align*}
         \prob(nZ>nt+KR_n))\leq \exp\lp -\frac{\expec_A[\tau_A]}{K}\min\lc \frac{(nt-2)^2}{n \expec_A[\tau_A^2]},\frac{(nt-2)}{\tau_o^3\log n } \rc \rp.
    \end{align*}
    
    Now dividing both sides of $A$ by $n$ and trivially upper bounding $2$ by $2K$, we have for some universal constant $K>0$, and for all $t>3/n$
    \begin{align*}
    \prob\lp Z>t+K R_n/n \rp\leq\exp\lp -\frac{\expec_A[\tau_A]}{K}\min\lc \frac{(nt-2)^2}{n \expec_A[\tau_A^2]},\frac{(nt-2)}{\tau_o^3\log n } \rc \rp\numberthis
    \end{align*}
    where, for some constant $\Cbb(\tau,\lambda)$ depending only on $\expec_A[\tau_A^2]$, $\expec_v[\tau_A]$, $\lambda$
    \begin{align*}
        R_n & = \Cbb(\tau,\lambda) \sqrt{n\log n} .
    \end{align*}
    Next, observe that 
\begin{align*}
    \prob\lp Z>t \rp & = \prob\lp Z-\expec Z > t - \expec Z \rp
\end{align*}
Since $\expec Z<R_n/n=\Ocal(\sqrt{\log n/n})$, there exists a constant $\Cbb'(\tau,\lambda)>3/n$ such that 
\begin{align*}
    t-\expec Z >t -\Cbb'(\tau,\lambda).
\end{align*}
Then, 
\begin{align*}
    \prob(Z>t)\leq\exp\lp -\frac{\expec_A[\tau_A]}{K}\min\lc \frac{(n(t-\Cbb'(\tau,\lambda))-2)^2}{n \expec_A[\tau_A^2]},\frac{(n(t-\Cbb'(\tau,\lambda))-2)}{\tau_o^3\log n } \rc \rp.
\end{align*}
We now make 2 cases.

\textbf{Case I:} When $t>2\Cbb'(\tau,\lambda)$, $t-\Cbb'(\tau,\lambda)-2/n>t/2$ and hence
\begin{align*}
    \prob(Z>t)\leq \exp\lp -\frac{\expec_A[\tau_A]}{K}\min\lc \frac{(nt/2)^2}{n \expec_A[\tau_A^2]},\frac{nt/2}{\tau_o^3\log n } \rc \rp.
\end{align*}

\textbf{Case II:} When $0<t\leq 2\Cbb'(\tau,\lambda)$, there exists a large enough constant $\Cbb^\star(\tau,\lambda)$ such that
\begin{align*}
    \prob(Z>t)\leq \Cbb^\star(\tau,\lambda)\exp\lp -\frac{\expec_A[\tau_A]}{K}\min\lc \frac{(nt/2)^2}{n \expec_A[\tau_A^2]},\frac{nt/2}{\tau_o^3\log n } \rc \rp.
\end{align*}

It therefore follows that, for some large enough constant $\Cbb(\tau,\lambda)$ and for all $t>0$
\begin{align*}
    \prob(Z>t)\leq \Cbb^\star(\tau,\lambda)\exp\lp -\frac{\expec_A[\tau_A]}{K}\min\lc \frac{(nt/2)^2}{n \expec_A[\tau_A^2]},\frac{nt/2}{\tau_o^3\log n } \rc \rp.
\end{align*}


It follows that
\begin{align*}
    \prob(Z>t)& \leq \Cbb^\star(\tau,\lambda)\exp\lp -\frac{\expec_A[\tau_A]}{K}\min\lc \frac{(nt/2)^2}{n \expec_A[\tau_A^2]},\frac{nt/2}{\tau_o^3\log n } \rc \rp\\
    & \leq \Cbb^\star(\tau,\lambda)\exp\lp -\frac{\expec_A[\tau_A]}{K}\frac{n\min\{t,t^2\}}{4\log n}\min\lc \frac{1}{\expec_A[\tau_A^2]},\frac{1}{\tau_o^3 } \rc \rp.
\end{align*}

Define
\begin{align*}
    \Cbb(\tau,\lambda):=\frac{\expec_A[\tau_A]}{4K}\min\lc \frac{1}{\expec_A[\tau_A^2]},\frac{1}{\tau_o^3 } \rc .
\end{align*}
It now follows that, for all $t>0$
\begin{align*}
     \prob(Z>t) \leq \Cbb^\star(\tau,\lambda)\exp\lp -\frac{\Cbb(\tau,\lambda)n\min\{t,t^2\}}{\log n} \rp.
\end{align*}
\end{proof}
\subsection{Proof of Lemma \ref{lemma:talagrand}}\label{sec:prf-talagrand}
\begin{proof}{Proof of Lemma \ref{lemma:talagrand}}
    To prove this lemma, we use Lemma \ref{lemma:VC-halfinterval} in conjunction with theorems 4, 5, and 6 \cite{bertail_rademacher_2019} (see also theorem 7 \cite{adamczak_tail_2008}). 
    
    Observe from part (ii) of theorem 4 \cite{bertail_rademacher_2019} that under Assumption \ref{assume:regenerate}, the Rademacher complexity $R(\Fcal_{[0,1]\bigcap \Qbb}))$ (as defined in definition 7 \cite{bertail_rademacher_2019}) for any class of VC functions with constant envelope $U$ and characteristic $(C_1,v)$ can be upper bounded as
    \begin{align*}
        R(\Fcal_{[0,1]\bigcap \Qbb})) \leq  \Cbb \lb vLU\log \frac{C_1LU}{\sigma'}+\sqrt{vn\sigma'\log\frac{C_1LU}{\sigma'}} \rb+nU\exp(-L\lambda/2)\Cbb_\lambda,\numberthis\label{eq:lemtala-eq1}
    \end{align*}
    where $(\sigma')^2$ is any number such that 
    \begin{align*}
        \sup_{f\in \Fcal_{[0,1]\bigcap \Qbb}}\expec_A\lb\lp\sum_{i=1}^{\tau_A}  f(X_i)\rp^2\rb\leq (\sigma')^2
    \end{align*}
    and $L$ is any number such that $0<\sigma'<LU$.
    
    Recall from Lemma \ref{lemma:VC-halfinterval} that the class of all half intervals on rationals $\Fcal_{[0,1]\bigcap \Qbb}$ are VC with a constant envelope $U$ and characteristic $(\Cbb,2)$ for some universal constant $\Cbb$. Substituting this in \cref{eq:lemtala-eq1}, we get 
    \begin{align*}
        R(\Fcal_{[0,1]\bigcap \Qbb})) \leq  \Cbb \lb 2L\log \frac{\Cbb L}{\sigma'}+\sqrt{2n\sigma'\log\frac{\Cbb L}{\sigma'}} \rb+n\exp(-L\lambda/2)\Cbb_\lambda.
    \end{align*}
    
    Next, we observe that $f(\cdot)$ are indicators of half-intervals. Hence $f(\cdot)\leq 1$ and
    \begin{align*}
        \lp\sum_{i=1}^{\tau_A}  f(X_i)\rp^2\leq \tau_A^2.
    \end{align*}
    Therefore, choosing $(\sigma')^2=\expec_A[\tau_A^2]$ suffices and we get 
    \begin{align*}
        R(\Fcal_{[0,1]\bigcap \Qbb})) \leq  \Cbb \lb 2L\log \frac{\Cbb L}{\sqrt{\expec_A[\tau_A^2]}}+\sqrt{2n\sqrt{\expec_A[\tau_A^2]}\log\frac{\Cbb L}{\sqrt{\expec_A[\tau_A^2]}}} \rb+n\exp(-L\lambda/2)\Cbb_\lambda.
    \end{align*}
    
    Finally, substituting this into theorem 5 \cite{bertail_rademacher_2019} and trivially substituting $\log (x\Cbb)\leq \Cbb\log (x)$ for all large enough constant $\Cbb$, we arrive at the required bound
    \begin{align*}
        R(\Fcal_{[0,1]\bigcap \Qbb})) & = 2(\expec_A[\tau_A]+\expec_\nu[\tau_A])+\Cbb\lb L\log \lp \frac{L}{\sqrt{\expec_A[\tau_A^2]}}\rp+\sqrt{n\expec_A[\tau_A^2]\log\lp\frac{L}{\sqrt{\expec_A[\tau_A^2]}}\rp} \rb\\
        & \qquad +n\exp\lp -L\lambda/2 \rp\Cbb_\lambda.
    \end{align*}

    Now, using the exponential tail bound for the suprema of additive functions of regenerative Markov chains (theorem 6 in \cite{bertail_rademacher_2019}, or theorem 7 in \cite{adamczak_tail_2008}), we arrive at the conclusion.
\end{proof}

\section{Proof of Theorem \ref{thm:markovCLT}}\label{sec:prf-cltmarkov}

\begin{proof}

As before, we first show that
\[
\lp\hat{\sigma}_m\pow{boot}\rp^{2}  \xrightarrow{\ a.s.\ } \frac{q(1-q)}{f^2(\mu)}.
\]

Without losing generality, let $\mu$ be the unique median, let $\epsilon > 0$ and let $q=1/2$. The proof follows very similarly in other cases. Let $X_1,X_2,\ldots,X_n$ be a sample from a Markov chain satisfying Assumption \ref{assume:regenerate}, and recall (for instance from Remark 5 \cite{bertail_rademacher_2019}) that it is equivalent to assuming geometric ergodicity of the Markov chain. Furthermore, let $\{X_i^\dagger\}$ be another Markov chain with the same transition density starting from the stationary distribution $F$.

Next recall the well-known fact (see for instance \cite[pg. 304]{jones_markov_2004}. See also \cite{meyn_markov_2012}) that for a geometrically ergodic Markov chain, the coupling time $T$ (formally defined in \cref{eq:coupling-def}) is finite a.e. In other words, almost everywhere,
\[
T:=\inf\lc n: X_n=X_n^\dagger \rc<M.\numberthis\label{eq:coupling-def}
\]

\noindent \textbf{Step 1.}
For any $\epsilon>0$, we first observe that 
\begin{align*}
    \sum_{i\geq 0}\prob\lp |X_i|>i^{1/\alpha}\epsilon \rp & =  \sum_{i\geq 0}\prob\lp |X_i|>i^{1/\alpha}\epsilon \bigcap T>i \rp+\sum_{i\geq 0}\prob\lp |X_i|>i^{1/\alpha}\epsilon \bigcap T\leq i \rp\\
    & \leq \sum_{i\geq 0}\prob\lp |X_i|>i^{1/\alpha}\epsilon \bigcap T\leq i \rp+\sum_{i\geq 0}\prob\lp T > i \rp\\
    & \overset{(i)}{\leq} \sum_{i\geq 0}\prob\lp |X_i^\dagger|>i^{1/\alpha}\epsilon \bigcap T\leq i \rp+\sum_{i\geq 0}\prob\lp T > i \rp\\
    & \leq \sum_{i\geq 0}\prob\lp |X_i^\dagger|>i^{1/\alpha}\epsilon \rp+\sum_{i\geq 0}\prob\lp T > i \rp\\
    & = \sum_{i\geq 0}\prob\lp |X_1^\dagger|>i^{1/\alpha}\epsilon \rp+\sum_{i\geq 0}\prob\lp T > i \rp\\
    & \leq \expec_F[|X_1^\dagger|^\alpha]+\expec[T]\\
    & \overset{(ii)}{<} \infty. 
\end{align*}
where $(i)$ follows by coupling property, $(ii)$ follows since $T<M$ almost everywhere for some $M$, and the other inequalities are trivial.

Now, using Borel-Cantelli lemma, $X_i>i^{1/\alpha}\epsilon$ only finitely many times with probability $1$. Therefore, 
\begin{align*}
        \frac{|X_{(n)}| + |X_{(1)}|}{n^{1/\alpha}} \xrightarrow{\ a.s.\ } 0. \numberthis\label{eq:real-assumption2}
\end{align*}
We next establish that $m(\hat \mu_m\pow{boot} - \hat \mu_n)^2$ is uniformly integrable. As in the proof of Theorem \ref{thm:main}, we have 
\begin{align*}
    \expec_n\lv \sqrt{m}(\hat \mu_m\pow{boot} - \hat \mu_n) \rv^{2+\delta} & = (1+\delta)\int_0^\infty t^{1+\delta}\prob_n\lp \sqrt{m}|\hat \mu_m\pow{boot} - \hat \mu_n| > t \rp dt,\numberthis\label{eq:uniform-integrability}
\end{align*}
with 
\begin{align*}
\{  \sqrt{m}(\hat \mu_m\pow{boot} - \hat \mu_n) > & t\}   = \lc \frac{1}{2}-\frac{1}{2m}-\bar F (\hat \mu_n+{t}/{\sqrt{m}})>F_n\pow m(\hat \mu_n+{t}/{\sqrt{m}})-\bar F(\hat \mu_n+{t}/{\sqrt{m}})\rc,\numberthis\label{eq:prfmain-eq12}
\end{align*}
Let $\constant=1/\alpha+1/2$. As before, we divide the proof into two cases. 

\noindent\textbf{Step II ($t\in[1,\constant\sqrt{\log m}]$):} We begin by analysing the left hand side of the event described in eq. (\ref{eq:prfmain-eq1}).

\begin{align*}
    \frac{1}{2}-\frac{1}{2m}-\bar F (\hat \mu_n+{t}/{\sqrt{m}}) & = \underbrace{\frac{1}{2}-\frac{1}{2m}-\bar F (\hat \mu_n) }_{=: A_{m,n}}+ \underbrace{F (\hat \mu_n)-F(\hat \mu_n+{t}/{\sqrt{m}})}_{C_{m,n}}\\
    & \qquad + \underbrace{\bar F (\hat \mu_n) - F (\hat \mu_n) + F(\hat \mu_n+{t}/{\sqrt{m}})-\bar F (\hat \mu_n+{t}/{\sqrt{m}})}_{B_{m,n}}
\end{align*}

We begin with $B_{m,n}$. Using triangle inequality, we have,
\[
|B_{m,n}|\leq \lv \bar F (\hat \mu_n) - F (\hat \mu_n)  \rv + \lv F(\hat \mu_n+{t}/{\sqrt{m}})-\bar F (\hat \mu_n+{t}/{\sqrt{m}}) \rv.
\]
Using Corollary \ref{lemma:bahadur} twice, we obtain $|B_{m,n}|\leq \sqrt{8/(\Cbb(\tau,\lambda)n)}$ almost everywhere. Since $ m\leq n $, it follows that

\[
|B_{m,n}|\leq \frac{2}{\sqrt{2m\Cbb(\tau,\lambda)}} \quad  \text{ almost everywhere. }
\]

Turning to $C_{m,n}$ and using Taylor series expansion, we get that 
\begin{align*}
    C_{m,n} & = -\frac{t}{\sqrt{m}}f(\hat \mu_n)+o(t/m)\\
    & = -\frac{t}{\sqrt{m}}f(\mu) + \frac{t}{\sqrt{m}}(\mu-\hat \mu_n)f'(\mu)+{  \ocal\lp t/m\rp}.\\
    & = -\frac{t}{\sqrt{m}}f(\mu) + \frac{t}{\sqrt{m}}(\mu-\hat \mu_n)f'(\mu)+{  \Ocal\lp t/\sqrt{m}\rp}.\numberthis\label{eq:prfmain-eq22}
\end{align*}
Recall that under the hypothesis of the theorem, $f$ is continuously differentiable around the median. Using \cite[Proposition 11]{bertail_rademacher_2019}, we have that 
\[
|\hat \mu_n-\mu|\leq \Ocal_p\lp \sqrt{\frac{\log\log n}{n}} \rp \text{\quad almost everywhere.}
\] 
{Turning to $A_{m,n}$, {observe that $\mu$ is the median. Therefore, $F(\mu)=1/2$}. We get similarly as before,
\begin{align*}
   \frac{1}{2} -\frac{1}{2m} - \bar F(\hat \mu_n) & =  -\frac{1}{2m}+F(\mu)- \bar F(\hat \mu_n)\\
   & =-\frac{1}{2m} + \underbrace{(\hat \mu_n-\mu)f(\chi)}_{=:\text{ Term 1}}+\underbrace{F(\hat \mu_n)- \bar F(\hat \mu_n)}_{=: \text{ Term 2}}
\end{align*}}
where $\chi \in \bigl[\mu - |\mu-\hat\mu_n|,\,\mu + |\mu-\hat\mu_n|\bigr]$. 

To upper bound Term 1, observe that from the hypothesis of the Theorem that \(f\) is bounded and continuous around \(\mu\).
Since
\begin{align*}
\chi &\in \bigl[\mu - |\mu-\hat\mu_n|,\,\mu + |\mu-\hat\mu_n|\bigr]
\end{align*}
it follows that
$f(\chi) < L$
almost everywhere for all values of \(n\) large enough for some non-random constant \(L\). Since by \cite[Proposition 11]{bertail_rademacher_2019}, we have that 
\[
|\hat \mu_n-\mu|\leq \Ocal_p\lp \sqrt{\frac{\log\log n}{n}} \rp \text{\quad almost everywhere}\numberthis\label{eq:MarkovCLTproof-eq1}
\] 
it follows that 
\begin{align*}
\text{Term 1} \leq \Ocal_p\lp \sqrt{\frac{\log\log n}{n}} \rp.
\end{align*}
almost everywhere. It follows from Corollary \ref{lemma:bahadur} that Term 2 $\leq \Ocal_p(1/\sqrt{n})$.
{ Combining,
\begin{align*}
|A_{m,n}| &\leq  \Ocal_p\lp\sqrt{\frac{\log\log n}{n}} \rp.
\end{align*}}

Combining all three steps, we get that
\begin{align*}
    \frac{1}{2}-\frac{1}{2m}-\bar F (\hat \mu_n+{t}/{\sqrt{m}}) & = -\frac{t}{\sqrt{m}}+{  \Ocal\lp\frac{t}{\sqrt{m}}\rp}. \numberthis\label{eq:prfmain-eq52}
\end{align*}
For the sake of convenience, we denote $\hat \mu_n+{t}/{\sqrt{m}}$ by $t_m$. It now follows from eq. (\ref{eq:prfmain-eq12}) that
\begin{align*}
    \{  \sqrt{m}(\hat \mu_m\pow{boot} - \hat \mu_n) > & t\}\subseteq \lc -\frac{ t}{\sqrt{m}}+{  \frac{t\xi}{\sqrt{m}}} \geq F_n\pow m(t_m)-\bar F(t_m)\rc\numberthis\label{eq:prfmain-eq41}
\end{align*}
for some  {finite} real positive number $\xi$  {  which only depends on the sample $X_1,\dots,X_n$, and hence is constant under $\prob_n$}.
Recall that, $X_i^*\overset{i.i.d}{\sim} \bar F$. By using Lemma \ref{lemma:fourth-moment}, we have from equation \cref{eq:prfmain-eq12}
\begin{align*}
    \prob_n\lp \sqrt{m}(\hat \mu_m\pow{boot} - \hat \mu_n) >  t\rp & \leq  \prob_n\lp -\frac{t}{\sqrt{m}}+\frac{\xi}{\sqrt{m\log m}} \geq F_n\pow m(t_m)-\bar F(t_m)\rp\\
    & \leq \frac{3}{t^4(1-\xi)^4}.
\end{align*}
Substituting this into \cref{eq:uniform-integrability}, we have shown that the integral is finite.

\noindent\textbf{Step III (${   t>\constant\sqrt{\log m}}$)} For $t > c(\alpha)\,(\log m)^{1/2}$, and for all large $m$ almost surely, it follows by using equations \ref{eq:prfmain-eq52} and \ref{eq:prfmain-eq41} that,
\begin{small}
    \begin{align}
& \prob_n\bigl(\sqrt{m}\,(\mu_m\pow{boot} - \hat\mu_n) > t\bigr)\label{eq:varcon12}\\
&  \le  \prob_n \bigl(\sqrt{m}\,(\mu_m\pow{boot} - \hat\mu_n) > c(\alpha)\,(\log m)^{1/2}\bigr)\notag\\
&  \le  \prob_n \lp  F_{n}^{(m)}\bigl(\hat\mu_n + c(\alpha)\,(\log m)^{1/2}\,m^{-1/2}\bigr)- \bar F\bigl(\hat\mu_n + c(\alpha)\,(\log m)^{1/2}\,m^{-1/2}\bigr)  \le  -\,e\,c(\alpha)\,\sqrt{\frac{\log m}{m}}\rp.\notag
\end{align}
\end{small}
\noindent
Choose $c(\alpha)  =  \frac{1}{\alpha}  +  \tfrac{1}{2}$. 
the rest follows similarly as before.

Hence, for large $m$,
\[
\int_{c(\alpha)\,(\log m)^{1/2}}^{m^{1/\alpha + 1/2}} t^{\,1+\delta} 
\prob_n\bigl(\sqrt{m}\,(\mu_m\pow{boot} - \hat\mu_n) > t\bigr) \mathrm{d}t
 = \mathcal{O}(1)\quad\text{a.s.}
\]

\noindent
Using the previous fact and \cref{eq:real-assumption2} from Step 1, we have
\[
\prob_n \bigl(\sqrt{m}\,(\mu_m\pow{boot} - \hat\mu_n) > m^{1/2 + 1/\alpha}\bigr) 
 =  0 
\quad\text{a.s.\ for all large }n.
\]
Thus, we have proved our result for $\sqrt{m}\,(\mu_m\pow{boot} - \mu)$. 
Similar arguments handle $-\sqrt{m}\,(\mu_m\pow{boot} - \mu)$. This completes the proof.

Next, we state the following Lemma which is proved below
\begin{lemma}\label{lemma:sakovbickel2}
Let $\mu$ be the unique $p$-th quantile of a Markov chain satisfying Assumption \ref{assume:regenerate}. Then, 
    \begin{align*}
        \sqrt{m}\,\bigl(\hat\mu_m^{(boot)} - \hat\mu_n\bigr)
   \xrightarrow{\ d\ } 
  \Ncal\lp 0,\tfrac{p(1-p)}{\,f^2(\mu)}\rp.
    \end{align*}
\end{lemma}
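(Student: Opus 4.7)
The plan is to mirror the i.i.d.\ strategy of Lemma \ref{lemma:sakovbickel}, replacing the classical Dvoretzky--Kiefer--Wolfowitz inequality and the standard consistency of $\hat\mu_n$ by the Markov-chain analogues established earlier in the paper. The crucial observation that carries over unchanged is that, conditionally on the sample $X_1,\dots,X_n$, the bootstrap draws $X_1^*,\dots,X_m^*$ are i.i.d.\ from $\bar F$, so $m F_n^{(m)}(x)\mid\{X_i\}\sim\mathrm{Binomial}(m,\bar F(x))$ whether the underlying process is i.i.d.\ or Markovian. Consequently, the entire identity chain
\[
\{\sqrt m(\hat\mu_m^{(boot)}-\hat\mu_n)<t\}=\left\{\frac{\sqrt m(F_n^{(m)}(t_m)-\bar F(t_m))}{[\bar F(t_m)(1-\bar F(t_m))]^{1/2}}\ge t_m^*\right\},
\]
with $t_m:=\hat\mu_n+t/\sqrt m$ and $t_m^*:=\sqrt m(p-\bar F(t_m))/[\bar F(t_m)(1-\bar F(t_m))]^{1/2}$, and the conditional Binomial CLT applied to the left-hand side, go through verbatim.

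What must be re-done is the analysis of $t_m^*$, which in the i.i.d.\ proof relied on the DKW inequality and on the almost-sure rate $|\hat\mu_n-\mu|\le 1/\sqrt{n\log n}$. Here I would invoke instead Corollary \ref{lemma:bahadur} to obtain $\sup_x|\bar F(x)-F(x)|=\mathcal O_p(1/\sqrt n)$ almost everywhere, and \cite[Proposition 11]{bertail_rademacher_2019} to get $|\hat\mu_n-\mu|=\mathcal O_p(\sqrt{\log\log n/n})$. Combining these with a first-order Taylor expansion of $F$ at $\mu$ (justified by $F\in\Sbb_1(\mu)$) yields
\[
\bar F(t_m)=p+\left(\hat\mu_n+\tfrac{t}{\sqrt m}-\mu\right)f(\chi)+\mathcal O_p(1/\sqrt n)\xrightarrow{n\to\infty}p,
\]
and, after multiplying by $\sqrt m$ and using $m=\ocal(n)$ to kill the $\sqrt{m/n}$ and $\sqrt{m\log\log n/n}$ terms,
\[
\sqrt m\bigl(p-\bar F(t_m)\bigr)\longrightarrow -t\,f(\mu).
\]
Consequently $t_m^*\to -tf(\mu)/\sqrt{p(1-p)}$ and the conditional Binomial CLT gives $\prob_n(\sqrt m(\hat\mu_m^{(boot)}-\hat\mu_n)<t)\to\Phi(tf(\mu)/\sqrt{p(1-p)})$, which is the CDF of $\mathcal N(0,p(1-p)/f^2(\mu))$ evaluated at $t$.

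The principal obstacle, compared to the i.i.d.\ proof, is that the Markovian rates are coarser: Corollary \ref{lemma:bahadur} delivers only $\mathcal O(1/\sqrt n)$ rather than $1/\sqrt{n\log n}$, and the median rate carries an extra $\sqrt{\log\log n}$ factor. This forces the slightly stronger scaling $m=\ocal(n)$ (rather than $m\le n$) to ensure that $\sqrt m\,(\hat\mu_n-\mu)f(\mu)$ and $\sqrt m\sup_x|\bar F(x)-F(x)|$ are both $\ocal_p(1)$ after the normalisation. A coupling argument identical to Step~1 of Theorem \ref{thm:markovCLT} above (using the finiteness of the coupling time under geometric ergodicity implied by Assumption \ref{assume:regenerate}) handles the transition from the Markov chain to its stationary version wherever a tail bound on $|X_i|$ is needed. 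Beyond careful bookkeeping of these error terms, every remaining step is a direct transcription of the proof of Lemma \ref{lemma:sakovbickel}.
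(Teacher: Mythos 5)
Your proposal follows essentially the same route as the paper's own proof of Lemma~\ref{lemma:sakovbickel2}: both rest on the observation that, conditionally on the sample, $m F_n^{(m)}(t_m)$ is Binomial$(m,\bar F(t_m))$ regardless of dependence in the original data, both apply the conditional Binomial CLT, and both conclude by showing $t_m^*\to -tf(\mu)/\sqrt{p(1-p)}$ via a first-order Taylor expansion of $F$ at $\mu$. The one notable difference is the source of the pointwise rate for $\bar F(t_m)-F(t_m)$: you invoke the DKW-type Corollary~\ref{lemma:bahadur}, whereas the paper appeals to Birkhoff's ergodic theorem (claiming an $\Ocal_p(1/n)$ rate). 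Your choice is arguably cleaner, since Corollary~\ref{lemma:bahadur} gives a uniform almost-sure rate tailored to the Markov setting and is already proved in the paper, while the paper's invocation of Birkhoff (which by itself gives only a.s.\ convergence without a rate) is somewhat loose. One small caveat worth flagging in your final paragraph: $m=\ocal(n)$ alone does not make $\sqrt{m\log\log n/n}\to 0$ (take $m=n/\log\log n$); you would actually need $m=\ocal(n/\log\log n)$ to kill that term. The paper glosses over the same point (it only asserts $|\hat\mu_n-\mu|=\ocal_p(1)$ and then jumps to the limit of $t_m^*$), so you are not introducing a new gap, but it is worth making the $\log\log n$ bookkeeping explicit rather than implying that $m=\ocal(n)$ settles it.
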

Using this lemma, and the fact that 
\[
\lp\hat{\sigma}_m\pow{boot}\rp^{2}  \xrightarrow{\ a.s.\ } \frac{q(1-q)}{f^2(\mu)}
\]
we have via Slutsky's theorem
\begin{align*}
    \frac{\sqrt{m}\,\bigl(\hat\mu_m^{(boot)} - \hat\mu_n\bigr)}{\lp\hat{\sigma}_m\pow{boot}\rp}
   \xrightarrow{\ d\ } 
  \Ncal\lp 0,1\rp.
\end{align*}
This proves Theorem \ref{thm:markovCLT}. 
We now prove Lemma \ref{lemma:sakovbickel2}.
\end{proof}

\subsection{Proof of Lemma \ref{lemma:sakovbickel2}}

\begin{proof}
     As before, we shorthand $t_m=\hat \mu_n+t/\sqrt{m}$ and begin following the steps of \cref{eq:prfmain-eq1} to get 
    \begin{align*}
        \prob_n\{  \sqrt{m}(\hat \mu_m\pow{boot} - \hat \mu_n) <  t\}  & = \prob_n \lp \frac{\sqrt{m}(F_n\pow m(t_m)-\bar F(t_m))}{[{\bar F(t_m)(1-\bar F(t_m))]^{1/2}}}\geq \underbrace{\frac{\sqrt{m}\lp p-\bar F(t_m) \rp}{[\bar F(t_m)(1-\bar F(t_m))]^{1/2}}}_{=:t_m^*}\rp .
\end{align*}

Observe that given the sample $F_n\pow m(x)\sim \text{Binomial}(m,\bar F(x))$ and let $n$ be large enough. Using a central limit theorem for Binomial random variables, observe that
\begin{align*}
    \frac{\sqrt{m}(F_n\pow m(t_m)-\bar F(t_m))}{[{\bar F(t_m)(1-\bar F(t_m))]^{1/2}}} \overset{d}{=} \Ncal(0,1)+\Rcal_{m}
\end{align*}
where $\Rcal_{m}$ is a remainder term that decays in probability to $0$ as $m\rightarrow\infty$.

Let $m$ be large enough such that $\Rcal_{m}<\epsilon$. It follows that,
\begin{align*}
    \prob_n \lp \frac{\sqrt{m}(F_n\pow m(t_m)-\bar F(t_m))}{[{\bar F(t_m)(1-\bar F(t_m))]^{1/2}}}\geq \underbrace{\frac{\sqrt{m}\lp p-\bar F(t_m) \rp}{[\bar F(t_m)(1-\bar F(t_m))]^{1/2}}}_{=:t_m^*}\rp & = \prob_n\lp \Ncal(0,1)>t_m^*-\epsilon \rp\\
    & = 1-\Phi(t_m^*-\epsilon).
\end{align*}
We will now show that 
\[t_m^*\xrightarrow{m,n}t\sqrt{\frac{f(\mu)^2}{p(1-p)}}.\]
Using a method similar to the derivation in \cref{eq:prfmain-eq52}, we have
\begin{align*}
    \bar F(t_m) & = p\frac{m-1}{m}+\frac{t}{\sqrt{m}}+\Ocal\lp\frac{t}{\sqrt{m}}\rp \\
    & \xrightarrow{m}p
\end{align*}
and, 
\begin{align*}
    [\bar F(t_m)(1-\bar F(t_m))]^{1/2}\xrightarrow{m,n} \sqrt{p(1-p)}.
\end{align*}

Now we address the numerator. Since a geometrically ergodic Markov chain is also ergodic, we have by Birkhoff's ergodic theorem \cite[Theorem 3.4]{dasgupta_asymptotic_2008}, that \[\bar F(t_m)=F(t_m)+\Ocal_p(1/n).\] 

Using a first-order Taylor series expansion on \(F(t_m)\), we have
\begin{align*}
F(t_m) &= F(\mu) + \lp \hat \mu_n + t/\sqrt{m} - \mu \rp f(\chi)\\
& = p + \lp \hat \mu_n + t/\sqrt{m} - \mu \rp f(\chi),
\end{align*}
where
\begin{align*}
\chi &\in \bigl[\mu - |\mu-\hat\mu_n|,\ \mu + |\mu-\hat\mu_n|\bigr].
\end{align*}

Recall from \cref{eq:MarkovCLTproof-eq1} that $|\mu-\hat\mu_n|=\ocal_p(1)$. Since $f$ is continuous around $\mu$, it follows using continuous mapping theorem that 
\begin{align*}
    F(t_m) \xrightarrow{n} p+tf(\mu)/\sqrt{m}
\end{align*}
and
\begin{align*}
    \sqrt{m}\lp p-\bar F(t_m) \rp\xrightarrow{n}-tf(\mu).
\end{align*}
Therefore,
\begin{align*}
    \Phi(t_m^*-\epsilon) & = \Phi(-tf(\mu)/\sqrt{p(1-p)}-\epsilon)
    \intertext{ and }
    1-\Phi(t_m^*-\epsilon) & = \Phi(tf(\mu)/\sqrt{p(1-p)}+\epsilon).
\end{align*}
$\epsilon$ is arbitrary. We now use the fact that $\Phi(tf(\mu)/\sqrt{p(1-p)})$ is the CDF of a gaussian distribution with mean $0$ and variance $p(1-p)/f^2(\mu)$. This completes the proof. 
\end{proof} 

\section{Proof of Corollaries}

\subsection{Proof of Corollary \ref{corollary:MCMC}}

We only need to show that the MH algorithm satisfies Assumption \ref{assume:regenerate}. The proof is analogous to the proof of Proposition 9 in \cite{bertail_rademacher_2019}, and we provide it for completion.

\begin{proposition}[Uniform minorisation]
\label{prop:unif_doeblin}
Let $P$ be a Markov transition kernel on $(\mathbb{R}^{d},\mathcal{B}(\mathbb{R}^{d}))$
and let $\Phi$ be a positive measure whose support
\[
E  = \operatorname{supp}(\Phi)
\]
is bounded, convex, and has non–empty interior.
Assume that there exists $\varepsilon>0$ such that for every $x\in E$
\[
P(x,\mathrm{d}y) \ge \mathbf{1}_{B(x,\varepsilon)}(y)\,\Phi(\mathrm{d}y).
\]
Then there are constants $C>0$ and $n\ge 1$ for which
\begin{equation}
\label{eq:unif_doeblin}
P^{n}(x,A) \ge C\,\Phi(A),
\qquad
\forall x\in E, \forall A\in\mathcal{B}(E).
\end{equation}
\end{proposition}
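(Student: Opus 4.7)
The plan is to exploit the convexity of $E$ to interpolate linearly between any starting point $x\in E$ and any target point $z\in E$ by a polygonal path of length $n$ whose vertices all lie in $E$ and whose consecutive vertices are within $\varepsilon/2$ of one another. Iterating the one--step minorisation along a thin tube around this path will deliver the $n$--step minorisation against $\Phi$.

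First I would fix $\delta=\varepsilon/4$ and choose an integer $n>4\diam(E)/\varepsilon$. For any $x,z\in E$ set $y_i(x,z)=x+(i/n)(z-x)$ for $i=0,\dots,n$; convexity of $E$ guarantees $y_i\in E$. If $x_i\in B(y_i,\delta)\cap E$ for each $i=1,\dots,n-1$, then
$$\|x_i-x_{i-1}\|\le \delta+\|y_i-y_{i-1}\|+\delta = 2\delta+\tfrac{\|z-x\|}{n}<\varepsilon,$$
with analogous bounds for $\|x_1-x\|$ and $\|z-x_{n-1}\|$. Iterating the one--step minorisation $n$ times gives
$$P^n(x,A)\ \ge\ \int_A\!\int\!\cdots\!\int \prod_{i=1}^n\mathbf{1}_{B(x_{i-1},\varepsilon)}(x_i)\,\Phi(dx_1)\cdots\Phi(dx_{n-1})\Phi(dz),$$
where $x_0=x$ and $x_n=z$. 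Restricting the intermediate integrations to the tubular neighbourhoods $B(y_i(x,z),\delta)\cap E$ described above makes every indicator equal to $1$, so
$$P^n(x,A)\ \ge\ \int_A \prod_{i=1}^{n-1}\Phi\bigl(B(y_i(x,z),\delta)\cap E\bigr)\,\Phi(dz).$$

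The main obstacle---indeed the one place where both hypotheses on $E$ and $\Phi$ are essential---is obtaining a \emph{uniform} lower bound $\inf_{y\in E}\Phi(B(y,\delta))\ge c>0$. The support hypothesis $E=\operatorname{supp}\Phi$ alone gives only pointwise positivity $\Phi(B(y,\delta))>0$, but compactness of $E$ (closed and bounded in $\Rbb^d$) upgrades this to a uniform bound via a covering argument: extract a finite subcover $E\subset\bigcup_{j=1}^N B(z_j,\delta/2)$ with $z_j\in E$, and set $c:=\min_{j\le N}\Phi(B(z_j,\delta/2))>0$. Every $y\in E$ lies in some $B(z_j,\delta/2)$, whence $B(z_j,\delta/2)\subset B(y,\delta)$ and $\Phi(B(y,\delta))\ge c$. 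Since $\Phi$ is supported on $E$, $\Phi(B(y,\delta)\cap E)=\Phi(B(y,\delta))\ge c$, and plugging this into the previous display yields $P^n(x,A)\ge c^{n-1}\Phi(A)$, which is \cref{eq:unif_doeblin} with $C=c^{n-1}$.
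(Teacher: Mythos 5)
Your proof is correct, and it takes a route that is closely related to but concretely different from the paper's. Both arguments iterate the one-step minorisation $n$ times to obtain an $(n-1)$-fold integral against $\Phi$, both exploit convexity of $E$ to keep intermediate points inside $E$, and both ultimately reduce to a uniform positivity bound of the form $\inf_{m\in E}\Phi(B(m,\rho))>0$ for some fixed $\rho>0$. Where they diverge is in the interpolation scheme used to lower-bound the iterated integral. The paper first proves a "growing-ball" lemma: for $0<\gamma\le\eta$ one has $\int \mathbf{1}_{B(x,\eta)}\mathbf{1}_{B(y,\gamma)}\,\mathrm{d}\Phi \ge c\,\mathbf{1}_{B(x,\eta+\gamma/4)}(y)$, obtained by picking a point $m$ on the segment $[x,y]$ with $B(m,\gamma/4)\subset B(x,\eta)\cap B(y,\gamma)$; iterating expands the reachable set to $B(x,\varepsilon(1+n/4))$, and $n$ is then chosen so that this exhausts $E$. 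You instead fix $n>4\operatorname{diam}(E)/\varepsilon$ up front and, for each pair $(x,z)$, restrict the iterated integral to tubular neighbourhoods of the evenly spaced points $y_i(x,z)=x+(i/n)(z-x)$ on the segment $[x,z]$. Your version arguably makes the geometry more transparent (you see exactly which paths carry the mass), and you also make explicit the covering argument that justifies $\inf_{y\in E}\Phi(B(y,\delta))\ge c>0$ from compactness of $E=\operatorname{supp}\Phi$, a point the paper states but does not prove. The paper's recursive-lemma formulation is a touch more modular, since the "growing-ball" estimate can be reused with other radii; the end bounds $C=c^{n-1}$ versus $C_n$ are essentially the same.
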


\begin{proof}
The argument is split into four parts.

Fix $0<\gamma\le\eta$.  There is a constant $c>0$ such that for all
$x,y\in E$
\begin{equation}
\label{eq:first_step}
\int
\mathbf{1}_{B(x,\eta)}(x_{1})\,
\mathbf{1}_{B(y,\gamma)}(x_{1})\,
\Phi(\mathrm{d}x_{1})
 \ge 
c\,\mathbf{1}_{B(x,\eta+\gamma/4)}(y).
\end{equation}
Indeed, if $y\notin B(x,\eta+\gamma/4)$ the right–hand side vanishes, so only the
case $y\in B(x,\eta+\gamma/4)$ matters.
Since $E$ is convex, one can select a point $m$ on the segment connecting $x$ and $y$
such that $B(m,\gamma/4)\subset B(x,\eta)\cap B(y,\gamma)$; this yields the bound
with $c:=\inf_{m\in E}\Phi\bigl(B(m,\gamma/4)\bigr)>0$.

Iterating \eqref{eq:first_step} shows that for each $n\ge 1$ there is
$C_{n}>0$ satisfying
\begin{align*}
& \int  \cdots  \int
\mathbf{1}_{B(x,\varepsilon)}(x_{1})\,
\mathbf{1}_{B(x_{2},\varepsilon)}(x_{1})\cdots
\mathbf{1}_{B(x_{n},\varepsilon)}(x_{n-1})\,
\mathbf{1}_{B(y,\varepsilon)}(x_{n})\,
\Phi(\mathrm{d}x_{1})\cdots\Phi(\mathrm{d}x_{n})\\
& \qquad \geq
C_{n}\,\mathbf{1}_{B(x,\varepsilon(1+n/4))}(y).    
\end{align*}

Choose $n$ so large that
$\varepsilon(1+n/4)$ exceeds $\sup_{u,v\in E}\|u-v\|$.
Then $B(x,\varepsilon(1+n/4))$ contains $E$ for every $x\in E$, and the
integral in Step 2 is bounded below by a positive constant $C_{n}$ that
does not depend on $x$ or $y$.

Combining the minorisation assumption on $P$ with the integral lower bound
from Step 3 yields, for any $x\in E$ and measurable $A\subset E$,
\[
P^{n}(x,A)
 \ge 
\int  \cdots  \int
\mathbf{1}_{B(x,\varepsilon)}(x_{1})\cdots
\mathbf{1}_{B(y,\varepsilon)}(x_{n})\,
\mathbf{1}_{A}(y)\,
\Phi(\mathrm{d}x_{1})\cdots\Phi(\mathrm{d}x_{n})\Phi(\mathrm{d}y)
 \ge 
C_{n}\,\Phi(A).
\]
Setting $C:=C_{n}$ completes the proof.
\end{proof}

\medskip

Applying Proposition \ref{prop:unif_doeblin} to the random–walk
Metropolis–Hastings (MH) kernel yields exponential moment bounds,
summarised below.

\begin{proposition}
\label{prop:exp_moment_MH}
Let $\pi$ be a bounded density supported by a bounded, convex set
$E\subset\mathbb{R}^{d}$ with non–empty interior.
Assume the proposal density $q$ of the random–walk MH algorithm satisfies
\[
q(x,y) \ge b\,\mathbf{1}_{B(x,\varepsilon)}(y),
\qquad
\forall x,y\in\mathbb{R}^{d},
\]
for some constants $b,\varepsilon>0$.
Then the resulting MH chain is aperiodic, $\pi$–irreducible, and enjoys
the exponential moment property \textnormal{(EM)}.
\end{proposition}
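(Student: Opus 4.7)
The strategy is to reduce everything to the uniform Doeblin condition delivered by Proposition \ref{prop:unif_doeblin}, and then to read off the three conclusions (aperiodicity, $\pi$-irreducibility, and (EM)) as standard consequences of uniform ergodicity. First, I would write the MH transition kernel in its usual form
\begin{equation*}
P(x,\mathrm dy) = \alpha(x,y)\,q(x,y)\,\mathrm dy + r(x)\,\delta_{x}(\mathrm dy),
\qquad
\alpha(x,y)=\min\Bigl(1,\tfrac{\pi(y)q(y,x)}{\pi(x)q(x,y)}\Bigr),
\end{equation*}
and note that by symmetry of the random-walk proposal $q(x,y)=q_{0}(x-y)$, the acceptance ratio simplifies to $\alpha(x,y)=\min(1,\pi(y)/\pi(x))$. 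Because $\pi$ is bounded (say $\|\pi\|_{\infty}=M<\infty$) and supported on $E$, for every $x\in E$
\begin{equation*}
P(x,\mathrm dy) \ge \alpha(x,y)\,q(x,y)\,\mathbf 1_{E}(y)\,\mathrm dy
             \ge \frac{b}{M}\,\mathbf 1_{B(x,\varepsilon)}(y)\,\pi(y)\,\mathbf 1_{E}(y)\,\mathrm dy,
\end{equation*}
which is precisely the hypothesis of Proposition \ref{prop:unif_doeblin} with $\Phi(\mathrm dy)=(b/M)\,\pi(y)\,\mathbf 1_{E}(y)\,\mathrm dy$.

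Second, invoking Proposition \ref{prop:unif_doeblin} yields constants $C>0$ and $n_{0}\ge 1$ such that $P^{n_{0}}(x,A)\ge C\,\Phi(A)$ for every $x\in E$ and every $A\in\mathcal B(E)$. This is Doeblin's classical condition with the whole state space $E$ acting as a small set. From here, $\pi$-irreducibility is immediate since $\Phi$ and $\pi$ are mutually absolutely continuous on $E$. Aperiodicity follows by iterating the minorisation: because $\Phi$ charges every open subset of $E$, the standard chaining $P^{n_{0}+k}(x,\cdot)\ge C\,\int \Phi(\mathrm dy)P^{k}(y,\cdot)$ ensures that the minorisation persists at every sufficiently large $n$, ruling out a non-trivial cyclic decomposition.

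Third, for the exponential-moment statement (EM) I would invoke the standard fact that Doeblin's condition on the whole state space is equivalent to \emph{uniform ergodicity}, i.e.\ $\|P^{n}(x,\cdot)-\pi\|_{TV}\le \rho^{n}$ uniformly in $x\in E$ for some $\rho<1$ (see \cite{meyn_markov_2012}, Chap.~16). Performing the Nummelin split with small set $S=E$ and minorising measure $\Phi$ produces an accessible atom $A=E\times\{1\}$ whose return time $\tau_{A}$ is stochastically dominated, conditional on each visit to $E$, by a geometric random variable with failure probability $1-C\Phi(E)$. Consequently $\tau_{A}$ admits a moment generating function in a neighbourhood of the origin, which is exactly the condition $\mathbb E_{A}[\exp(\lambda\tau_{A})]<\infty$ in Assumption~\ref{assume:regenerate}. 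The hitting-time condition $\mathbb E_{\nu}[\tau_{S}]<\infty$ follows in the same way from the uniform minorisation applied to the initial distribution $\nu$.

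\paragraph{Main obstacle.}
The only delicate point is the implicit assumption that $\pi>0$ on $E$ (otherwise $\alpha(x,y)$ is ill-defined or the minorising measure $\Phi$ degenerates), and that the random walk proposal $q_{0}$ is symmetric so that $q(y,x)=q(x,y)$ cancels in $\alpha$; both are standard and follow from the setup in the corollary, but they deserve an explicit line in the write-up. Beyond this, the argument is mostly bookkeeping: minorise, invoke Proposition \ref{prop:unif_doeblin}, and quote the Meyn--Tweedie machinery converting a full-space Doeblin condition into uniform ergodicity and hence exponential return-time moments.
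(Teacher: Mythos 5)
Your proposal is correct and follows essentially the same route as the paper: minorise the MH kernel pointwise by $\alpha(x,y)\,q(x,y) \ge (b/\|\pi\|_\infty)\,\mathbf 1_{B(x,\varepsilon)}(y)\,\pi(y)$, feed this into Proposition~\ref{prop:unif_doeblin} to obtain a full-space Doeblin minorisation, and then read off aperiodicity, $\pi$-irreducibility, and the exponential moment (EM) via the Meyn--Tweedie/Nummelin machinery (the paper cites Theorem 16.0.2 of \cite{meyn_markov_2012} for the last step). You are in fact slightly more careful than the paper's write-up in retaining the factor $b$ in the minorising measure $\Phi$ and in flagging the symmetry of $q_0$ needed for the acceptance-ratio bound.
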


\begin{proof}
Because the acceptance probability obeys
$\rho(x,y)\ge\pi(y)/\|\pi\|_{\infty}$, the MH kernel satisfies
\[
P(x,\mathrm{d}y)
 \ge 
\|\pi\|_{\infty}^{-1}\,
\mathbf{1}_{B(x,\varepsilon)}(y)\,\pi(y)\,\mathrm{d}y,
\qquad
x\in E.
\]
Thus every ball of radius $\varepsilon/2$ is $\pi$–small in the sense of
\cite{roberts_geometric_1996}, which implies aperiodicity.
Applying Proposition \ref{prop:unif_doeblin} with
$\Phi(\mathrm{d}y)=\|\pi\|_{\infty}^{-1}\pi(y)\,\mathrm{d}y$
gives the minorisation \eqref{eq:unif_doeblin}; Theorem 16.0.2 of
\cite{meyn_markov_2012} then verifies Assumption \ref{assume:regenerate}.
\end{proof}

\subsection{Proof of Corollary \ref{corollary:median-reward}}\label{sec:prf-medianreward}

\begin{proof}
    We first show that $X_i$ is geometrically ergodic. We will show that it is $\phi$-mixing, which implies geometric ergodicity \citep{bradley_basic_2005}.

Recall from \citet[Eq.\ 1.2]{bradley_basic_2005} the definition of the $\alpha$- and
$\phi$–mixing coefficients $\alpha_{i,j}$, $\phi_{i,j}$ and from \citet[Eq.\ 1.11]{bradley_basic_2005} that
$\alpha_{i,j}\le \phi_{i,j}$, so it suffices to bound~$\phi_{i,j}$.

Introduce the \emph{weak‐mixing} coefficients
\[
  \bar\theta_{i,j}
  :=\sup_{s_1,s_2\in[0,1]}
    \bigl\|
      \Pr\!\bigl(X_j\in\cdot \mid X_i=s_1\bigr)
      -\Pr\!\bigl(X_j\in\cdot \mid X_i=s_2\bigr)
    \bigr\|_{\mathrm{TV}},
\]
for $1\le i<j\le n$.  
By \citet[Lemma 1]{banerjee_off-line_2025},
$\phi_{i,j}\le\bar\theta_{i,j}$, so bounding $\bar\theta_{i,j}$ is enough.

Let $P_i(x,\cdot)$ be the (time-dependent) Markov kernel at step~$i$ and
$p_i(x,t)$ its density w.r.t.\ Lebesgue measure on the state space~$[0,1]$.
The chain is assumed to satisfy a \emph{Doeblin minorisation}:
\begin{equation}\label{eq:doeblin}
  \inf_{x\in[0,1]}\;p_i(x,t)\;\ge\;\kappa,
  \qquad\forall t\in[0,1],\; \forall i\ge 1.
\end{equation}

For a Markov kernel $P$, its Dobrushin coefficient is
\[
  \delta(P):=1-\inf_{x_1,x_2\in[0,1]}
             \int_{[0,1]} \min\!\bigl\{p(x_1,t),\,p(x_2,t)\bigr\}\,dt .
\]
Hajnal’s theorem for products of stochastic matrices
\citep[Theorem 2]{hajnal_weak_1958} gives
\[
  \bar\theta_{i,j}\;\le\;\prod_{p=i}^{j-1} \delta\!\bigl(P_p\bigr).
\]
Using the minorisation~\eqref{eq:doeblin},
\begin{align*}
  \int_{[0,1]} \min\!\bigl\{p_p(x_1,t),p_p(x_2,t)\bigr\}\,dt
  &\;\ge\;\int_{[0,1]} \kappa \,dt
    \;=\;\,\kappa,
\end{align*}
so $\delta(P_p)\le 1-\kappa$ for every~$p$.  Hence
\[
  \bar\theta_{i,j}
  \;\le\;
  \bigl(1-\kappa\bigr)^{j-i}.
\]

Combining the above inequalities yields
\[
  \alpha_{i,j}
  \;\le\;\phi_{i,j}
  \;\le\;\bar\theta_{i,j}
  \;\le\;
  \bigl(1-\kappa\bigr)^{\,j-i}.
\]

Thus, $X_i$ is geometrically ergodic. Since $r$ is one-one onto, it is invertible. Therefore, $r_i$ is also geometrically ergodic. Therefore, this satisfies Assumption \ref{assume:regenerate}. The differentiability assumption can be easily seen to hold. The rest of the proof follows.
\end{proof}

\section{Simulations}

We conducted Monte Carlo experiments with sample sizes $n \in \{10000, 20000, 50000\}$. 
For each $n$, we examined three block sizes $m = \log n$, $n^{1/3}$, and $\sqrt{n}$. 
Following the procedure described in Section \ref{introduction} (formula for the bootstrap statistic) and Equation (B.2) (closed-form variance estimate), we repeatedly evaluated the test statistic $T$ to approximate its sampling distribution.

Three data-generating mechanisms were considered:
\begin{enumerate}
    \item i.i.d. Gaussian observations;
    \item Observations obtained by reflecting a simple random walk onto the interval $[-1, 1]$;
    \item a Markov chain generated by the Random-Walk Metropolis--Hastings (RWMH) algorithm (see Section \ref{applications}), employing a Laplace proposal distribution on a standard normal target density.
\end{enumerate}

For each configuration, we measured the discrepancy between the empirical distribution of $T$ and the standard normal distribution using the Kolmogorov–Smirnov (KS) distance.

\begin{table}[h!]
\centering
\caption{Results for $n = 10{,}000$}
\begin{tabular}{llllll}
\hline
$m$ & $m$ value & Case & Mean($T$) & Var($T$) & KS \\
\hline
$\log(n)$ & 9 & Gaussian & -0.030959 & 0.992566 & 0.033 \\
$\log(n)$ & 9 & Reflected RW & 0.041787 & 0.784624 & 0.069 \\
$\log(n)$ & 9 & MH RW & 0.005385 & 0.937350 & 0.057 \\
$n^{1/3}$ & 21 & Gaussian & -0.053800 & 1.114615 & 0.028 \\
$n^{1/3}$ & 21 & Reflected RW & -0.006052 & 0.965366 & 0.040 \\
$n^{1/3}$ & 21 & MH RW & -0.045641 & 0.862029 & 0.028 \\
$\sqrt{n}$ & 100 & Gaussian & -0.037565 & 1.059072 & 0.039 \\
$\sqrt{n}$ & 100 & Reflected RW & 0.021268 & 0.905786 & 0.069 \\
$\sqrt{n}$ & 100 & MH RW & -0.081919 & 0.978142 & 0.061 \\
\hline
\end{tabular}
\end{table}

\begin{table}[h!]
\centering
\caption{Results for $n = 20{,}000$}
\begin{tabular}{llllll}
\hline
$m$ & $m$ value & Case & Mean($T$) & Var($T$) & KS \\
\hline
$\log(n)$ & 9 & Gaussian & -0.035324 & 0.975723 & 0.033 \\
$\log(n)$ & 9 & Reflected RW & 0.015931 & 0.771220 & 0.052 \\
$\log(n)$ & 9 & MH RW & -0.003805 & 0.977506 & 0.033 \\
$n^{1/3}$ & 27 & Gaussian & -0.025948 & 0.976584 & 0.028 \\
$n^{1/3}$ & 27 & Reflected RW & -0.011587 & 0.956381 & 0.027 \\
$n^{1/3}$ & 27 & MH RW & 0.006103 & 0.970455 & 0.032 \\
$\sqrt{n}$ & 141 & Gaussian & 0.048491 & 0.972170 & 0.049 \\
$\sqrt{n}$ & 141 & Reflected RW & 0.027213 & 1.045555 & 0.032 \\
$\sqrt{n}$ & 141 & MH RW & 0.025495 & 0.967099 & 0.040 \\
\hline
\end{tabular}
\end{table}

\begin{table}[h!]
\centering
\caption{Results for $n = 50{,}000$}
\begin{tabular}{llllll}
\hline
$m$ & $m$ value & Case & Mean($T$) & Var($T$) & KS \\
\hline
$\log(n)$ & 10 & Gaussian & 0.023658 & 0.920129 & 0.039 \\
$\log(n)$ & 10 & Reflected RW & 0.024419 & 0.794373 & 0.044 \\
$\log(n)$ & 10 & MH RW & 0.011908 & 0.860669 & 0.025 \\
$n^{1/3}$ & 36 & Gaussian & -0.001047 & 0.959031 & 0.028 \\
$n^{1/3}$ & 36 & Reflected RW & 0.001399 & 0.872152 & 0.027 \\
$n^{1/3}$ & 36 & MH RW & 0.007086 & 0.971482 & 0.022 \\
$\sqrt{n}$ & 223 & Gaussian & -0.051904 & 1.046060 & 0.051 \\
$\sqrt{n}$ & 223 & Reflected RW & -0.028429 & 0.957545 & 0.028 \\
$\sqrt{n}$ & 223 & MH RW & -0.021154 & 0.967758 & 0.023 \\
\hline
\end{tabular}
\end{table}

\bibliographystyle{plainnat}   
\bibliography{references,newbiblio}

\end{document}